\newtheorem{theorem}{Theorem}
\newtheorem{proposition}{Proposition}
\newtheorem{lemma}{Lemma}
\newtheorem{corollary}{Corollary}
\newtheorem{remark}{Remark}
\crefname{equation}{equation}{equations}
\crefname{assumption}{assumption}{assumptions}
\newcommand{\sgn}{\mathrm{sgn}}
\global\long\def\Reals{\mathbb{R}}
\global\long\def\intd{\mathrm{d}} 
\DeclareMathOperator{\sign}{sgn}
\newcommand{\EE}{\mathbb{E}}
\newcommand{\PP}{\mathbb{P}}
\newcommand{\KL}{\mathsf{KL}}
\providecommand\given{} 
\newcommand\SetSymbol[1][]{
	\nonscript\,#1:\nonscript\,\mathopen{}\allowbreak}
\DeclarePairedDelimiterX\Set[1]{\lbrace}{\rbrace}%
{ \renewcommand\given{\SetSymbol[]} #1 }
\DeclarePairedDelimiter{\Inner}{\langle}{\rangle}
\DeclareMathOperator*{\argmin}{arg\,min}
\DeclarePairedDelimiter{\braces}{\{}{\}}
\DeclarePairedDelimiter{\ip}{\langle}{\rangle}
\DeclarePairedDelimiter{\brackets}{(}{)}
\DeclarePairedDelimiter{\sqbrackets}{[}{]}
\DeclarePairedDelimiter{\norm}{\lVert}{\rVert}
\DeclarePairedDelimiter{\abs}{\lvert}{\rvert}
\DeclarePairedDelimiter{\floor}{\lfloor}{\rfloor}
\DeclarePairedDelimiter{\ceil}{\lceil}{\rceil}
\newcommand{\RR}{\mathbb{R}}
\newcommand{\NN}{\mathbb{N}}
\newcommand{\II}{\mathbbm{1}}
\begin{document}

\title{Fundamental limits for learning hidden Markov model parameters}

\author{Kweku~Abraham, Elisabeth~Gassiat and Zacharie~Naulet}

\markboth{Journal of \LaTeX\ Class Files,~Vol.~14, No.~8, August~2021}%
{Shell \MakeLowercase{\textit{et al.}}: A Sample Article Using IEEEtran.cls for IEEE Journals}


\maketitle

\begin{abstract}
  We study the frontier between learnable and unlearnable hidden Markov models
  (HMMs). HMMs are flexible tools for clustering dependent data coming from
  unknown populations. The model parameters are known to be fully identifiable
  (up to label-switching) without any modelling assumption on the distributions
  of the populations as soon as the clusters are distinct and the hidden chain
  is ergodic with a full rank transition matrix. In the limit as any one of
  these conditions fails, it becomes impossible in general to identify
  parameters. For a chain with two hidden states we prove nonasymptotic minimax
  upper and lower bounds, matching up to constants, which exhibit thresholds at
  which the parameters become learnable. We also provide an upper bound on the
  relative entropy rate for parameters in a neighbourhood of the unlearnable
  region which may have interest in itself.
\end{abstract}

\begin{IEEEkeywords}
Hidden Markov Models; Minimax estimation; Sample complexity.
\end{IEEEkeywords}

%
\IEEEpeerreviewmaketitle

\footnotetext{\textcopyright 2022 IEEE.  Personal use of this material is permitted.  Permission from IEEE must be obtained for all other uses, in any current or future media, including reprinting/republishing this material for advertising or promotional purposes, creating new collective works, for resale or redistribution to servers or lists, or reuse of any copyrighted component of this work in other works.}

	\section{Introduction}
	\label{sec:introduction}

	\subsection{Context and motivation}
	Finite state space hidden Markov models (HMMs) are widely used in applications to model observations coming from different populations. HMMs can be viewed as particular mixture models. In the latter, given a latent sequence of cluster labels $(X_n)_{n\in \NN}$ taking values in a finite set, the observed data $(Y_n)_{n\in\NN}$ is a sequence of independent random variables with, for each $n$, the distribution of $Y_n$ depending only on $X_n$. When the $X_n$ are independent, a mixture model is not identifiable: various convex combinations of population probability distributions can lead to the same distribution for the observations. This is true even for  observations taking values in a finite alphabet: one cannot recover two different multinomial distributions from a convex combination of them.

	For a HMM, one adds the extra structure that $(X_n)_{n\in\NN}$ forms a Markov chain. In sharp 
	contrast to the independent setting, with hidden Markov structure one can recover the distribution of data for each population absent virtually any constraint on these distributions (known in this context as the emission distributions). This fact had been observed in applied papers, and a theoretical proof that parameters can be identified with minimal assumptions is relatively recent, given for HMMs taking values in a finite set in \cite{MR2549554, MR2926144, AGHKT14} and extended to allow for emission distributions modelled nonparametrically (but still with the underlying Markov chain having finite state space) in \cite{MR3439359, MR3509896}. HMMs therefore form a tractable class of models nevertheless rich enough to model many practical clustering settings well: see for instance \cite{CC00, Lef03, LWM03, SC09, MR2797735, VBM13}. In this context note that given good estimates of the model parameters one can almost match the optimal clustering and testing behaviour of the Bayes classifier (e.g.\ see \cite{SunCai09}, \cite{EIK:2021}); let us emphasise once more that this is possible essentially absent any constraint on the emission distributions, in contrast to typical clustering algorithms which may require parametric modelling or separation of clusters.

	In drawing a contrast between the independent and dependent cases, we have so far omitted to mention that of course an independent model is a degenerate subcase of a Markov model. There are three ways in which the data $(Y_n)_{n\in\NN}$ can fail to exhibit dependence: when the population labels themselves are in reality independently distributed; when the emission distributions are identical; or when only one population is observed. 
	Without extra modelling assumption on the populations distributions, learnability of the parameters with a finite number of observations becomes  difficult near the independent case. This occurs when one of the populations is sparse, which is a typical setting for multiple testing.  It  also occurs  when the two populations have close distributions; knowing how far separated the populations have to be for clustering to be possible without further structural assumptions is of interest. Finally, this occurs  when the cluster labels have very weak dependence.
It has also been observed empirically in \cite{Phys:2020} that the EM algorithm can exhibit bad behavior in some regions of the parameter space. 
 It is thus of theoretical and practical importance to understand quantitatively what happens when these limiting situations are approached in the setting of finitely many observations of the HMM sequence.

	The present work initiates an exploration of the limits of learnability of the hidden Markov parameters as the independent subcase is approached. We focus on the setting of two hidden states and multinomial data, and exhibit principles which should generalise to much wider settings.

	\subsection{Contribution}

	Our main result, \cref{cor:rates-for-theta}, gives upper and lower bounds showing the minimax estimation rate for the model parameters, exhibiting that these parameters can be learned if and only if the sample size $n$ is large enough compared to a suitable measure of the closeness of the data to the independent subcase.

	Important steps to get the main result are as follows.
We introduce a	reparametrisation of the model leading to a statistical distance which appears to be a key tool for the understanding of the fundamental limits of learning the HMM parameters near the independent subcase. This statistical distance is proved in \cref{pro:local-equivalence}
to be equivalent to the distance between the  distribution of three consecutive observations, and  leads to an explicit upper bound of the relative entropy rate for a specific part of parameters domain, see
	\cref{prop:kul}, which we believe could have interest in itself. Upper bounds for the learning of the new parameters are proved in \cref{thm:minimax-upper-bound} while (almost) matching lower bounds are proved in \cref{thm:mlb}.

	\subsection{Related work}

	Theoretical justification of a range of learning methods for HMMs with emission distributions modelled parametrically or nonparametrically have been developed in recent years: moment and tensor methods in \cite{AGHKT14, dCGlC17}, and model selection using penalized least squares estimation in \cite{MR3543517, MR3892326}, using penalized likelihood methods in \cite{Luc:2018}, or using other techniques in \cite{MR3862446}.
	These works all give both asymptotic and nonasymptotic upper bounds controlling the distance between estimators and the unknown parameters. All require the data to truly be dependent, but none quantify explicitly how their sample complexity results depend on the ``distance'' to independence.
	Indeed, quantifying this dependence requires a sharp understanding of how the distances between distributions evolve with respect to the distances between parameters, as done for particular parametric finite mixture models in \cite{MR3130325, MR3851757, Wu:2020}.

	Results in \cite{dCGlC17} control the
	propagation of errors from parameter estimation to the posterior probabilities
	when calculating the latter via plug-in, implying that good control on the risk
	of the estimators will ensure the performance of the empirical Bayes classifier
	is close to that of the true Bayes classifier (whose optimality for clustering is a standard
	result in decision theory \cite{devroye2013probabilistic}).

	A topic closely related to binary classification/clustering is multiple testing, in which one aims to identify within some large data set a collection of data points which come from a ``discovery'' hypothesis, rather than from the conservative null hypothesis. 	In this setting control of the false discovery rate has been obtained recently for a knockoffs-based method  in \cite{MR3912377} and for an empirical Bayes method in \cite{SunCai09,EIK:2021}; in each case estimation of the HMM parameters is an essential first step. Modelling the proportion of non-null signals as vanishingly small, as our results permit, would allow for further links to the setting of \emph{sparse} multiple testing, considered for example (with independent data) in \cite{MR2281879,MR4152112}.

	Relative entropy rate, or equivalently Kullback--Leibler rate, between  HMMs can be expressed using Blackwell's invariant measure \cite{B57}, but no explicit formulation exists 	\cite{SiNa2008}. Providing useful or meaningful upper and lower bounds is a subject of ongoing research
	\cite{Do03, LuoGuo09, FuhMei2015}. In \cref{prop:kul} we obtain a new bound on the Kullback-Leibler rate between HMMs which, compared to the aforementioned works, does a better job at capturing the effects of the underlying Markov dependency structure, at the expense of holding only for a restricted subspace of parameters.

	To the best of our knowledge no prior theoretical result exists addressing the learning of parameters of a HMM when approaching the independent case.
	By experimentally studying the EM algorithm when the multinomial emission distributions approach each other, the authors in \cite{Phys:2020} found a range of parameters for which the EM algorithm behaves badly. Their results provided some of the inspiration for the work herein, which we believe shows that such behaviour is primarily
a result of the investigated region approaching the limit where the parameters become unlearnable, not of a limitation of the EM algorithm specifically. 

	Finally, let us mention that departure from the independence assumption has been noted to allow for better learning also in HMM settings free from the assumption that the Markov chain has a finite state space \cite{MR4119183, MR3433429} (at the expense of stricter assumptions on the emission distributions), and also in other problems including dynamic networks \cite{MR3689311, MR3964859},
	image denoising \cite{Ollion:2021}, and deconvolution \cite{gassiat:lecorff:lehericy:2020}.

	\subsection{Organisation of the paper}

We describe the setting in \cref{sec:setting} and state our main result in \cref{sec:main-results}. The key reparametrisation is given in \cref{sec:change-param} where we state  the basic propositions involving the statistical distance we define. Intermediate upper bound results are given in \cref{sec:upper-bounds} while lower bounds are in \cref{sec:lower-bounds}.
In \cref{sec:conclusion} we discuss our results and possible further work.
All proofs are deferred to \cref{sec:proofs}.

\subsection{Notation}
We write $\norm{f}=\ip{f,f}^{1/2}$  for the usual Euclidean norm and inner product. We write $a\vee b\coloneqq \max(a,b)$ and $a\wedge b\coloneqq \min(a,b)$. We write
$\KL$ for the Kullback--Leibler divergence between densities $p,q$ or between the corresponding distribution $P,Q$, $\KL(p,q)\equiv \KL(P,Q) = \EE_p(\log(p/q))$.



	\section{Setting}
	\label{sec:setting}

	Consider a two-state HMM with multinomial emissions, in which we observe the first $n$ entries of a sequence $\bm{Y}=(Y_1,Y_2,\dots)\in \braces{1,\dots,K}^\NN$ which, under a parameter $\theta=(p,q,f_0,f_1)$, satisfies
	\begin{equation}\label{eqn:model}
		\begin{split}
			\PP_\theta (Y_n=k \mid \bm{X}) &= f_{X_n}(k),\\
			\bm{X}=(X_n)_{n\in \NN} &\sim \operatorname{Markov}(\pi,Q),
		\end{split}
	\end{equation}
	with the $Y_j,~j\in\NN$ conditionally independent given $\bm{X}$.
	The vector $\bm{X}$ of `hidden states' takes values in $\braces{0,1}^\NN$ and the transition matrix of the
	chain is given by
	\begin{equation}
		Q%
		\coloneqq%
		\begin{pmatrix}
			1 - p & p\\
			q & 1 - q
		\end{pmatrix}%
		,
	\end{equation}
	with the convention that for $j\geq 1$,
	$\PP_{\theta}(X_{j+1} = 0 \mid X_j = 0) = 1-p<1$ and
	$\PP_{\theta}(X_{j+1} = 0 \mid X_j = 1) = q>0$. The densities $f_0,f_1$ are the `emission densities' with respect to counting measure on $\braces{1,\dots,K}$.  Note that any function $g$ on $\braces{1,\dots,K}$ (such as $f_0$ and $f_1$)  may be identified with a vector $(g(a))_{1\leq a \leq K}$ in $\Reals^K$.
	 Grant also that
	$X_1$ is drawn from the stationary distribution of the chain, i.e.\
	$\PP_{\theta}(X_1 = 1) = p/(p + q)$. We throughout use $\PP_{\theta}$ to denote the law of $(\bm{X},\bm{Y})$, and all induced marginal and conditional laws.

	\smallskip

	In the limit where the sequence $\bm{Y}$ becomes independent and identically distributed (i.i.d.), learning the parameters becomes impossible due to standard identifiability issues for mixture models: the distribution of $Y_1$ may be decomposed in many ways as a convex combination of multinomials. This i.i.d.\ limit can be approached in three ways:
	\begin{enumerate}
		\item $p \approx 0$ or $q \approx 0$, and thus the chain
		$\bm{X}$ passes long periods of time in one of the two states;
		\item the transition matrix $Q$ is nearly singular, so that $\bm{X}$ itself is almost i.i.d; this is the case if $\abs{1 - p - q} \approx
		0$;
		\item the emission distributions are close to each other: $\norm{f_0-f_1}\approx 0$, where $\norm{\cdot}$ denotes the usual Euclidean norm, $\norm{f}^2=\sum \abs{f(k)}^2$.
	\end{enumerate}
	We adopt a minimax point of view and encapsulate all the above scenarios within
	the class of parameters defined, for some $\delta,\epsilon\in (0,1)$ and some $\zeta>0$, by
	\begin{equation*}
		\Theta=\Theta (\delta,\epsilon, \zeta) =
		\braces{ \theta\;:  p,q \geq\delta,\;\abs{1-p - q} \geq \epsilon,\:\norm{f_{0}-f_{1}} \geq \zeta
		}.
	\end{equation*}
	Introduce also the subset
	\begin{equation*}
		\Theta_{L} = \Theta_{L}(\delta,\epsilon,\zeta) = \Theta\cap \braces{ 1-\abs{1-p-q}\geq L}.
	\end{equation*}

	\begin{remark}
		Note that $1-\abs{1-p-q}$ is the absolute spectral gap of the chain $\bm{X}$, and hence the mixing time of the chain can be upper bounded uniformly in $\Theta_L$ since the state space has size 2 (so the chain is automatically reversible).
		Here $L$ may be arbitrarily small but we think of it as \emph{fixed}, in contrast to $\delta$, $\epsilon$ and $\zeta$ which are allowed to depend on $n$.
		With the introduction of this lower bound we still allow \emph{one} of $p,q$ to be vanishingly small (or arbitrarily close -- even equal -- to 1), but not both. \end{remark}


\section{Main results}\label{sec:main-results}

To avoid a label-switching issue discussed in the next section we assume that $f_0-f_1$ lies in some specified half-plane. Our main result is the following. The estimator $\hat{\theta}$ is built via plug-in from those constructed later in \cref{thm:minimax-upper-bound}.

\begin{theorem}
  \label{cor:rates-for-theta}
  There exist an estimator $\hat{\theta}=(\hat{p},\hat{q},\hat{f}_0,\hat{f}_1)$ and a constant $C=C(K,L)>0$ such that for all $1\leq x^2\leq n\delta^2\epsilon^4\zeta^6$,
  \begin{gather*}
    \sup_{\theta \in \Theta_L} \PP_\theta\brackets[\Big]{\abs{\hat{p}-p} \vee \abs{\hat{q}-q} >\frac{Cx (\delta \vee \epsilon \zeta)}{\sqrt{n\delta^2\epsilon^4\zeta^6}}} \leq  e^{-x^2},\\
    \sup_{\theta\in \Theta_L} \PP_{\theta}\brackets[\Big]{\norm{\hat{f}_0-f_0} \vee \norm{\hat{f}_1-f_1} >\frac{Cx}{\sqrt{n\delta^2\epsilon^4\zeta^4}}} \leq e^{-x^2}.
  \end{gather*}
  Furthermore, there exist constants $c=c(K)>0$, $\epsilon_1 > 0$ and $\zeta_1 >0$  such that for $\delta\leq 1/6$, $\epsilon\leq \epsilon_1$, $\zeta \leq \zeta_1$, $L\leq 1/3$ and $n\delta^2\epsilon^4\zeta^6\geq 1$,
  \begin{gather*}
    \inf_{\check{\theta}}\sup_{\theta \in \Theta_L} \PP_\theta\brackets[\Big]{\abs{\check{p}-p} \vee \abs{\check{q}-q} >\frac{c(\delta \vee \epsilon \zeta)}{\sqrt{n\delta^2\epsilon^4\zeta^6}}}\geq 1/4,\\
    \inf_{\check{\theta}}\sup_{\theta\in \Theta_L} \PP_{\theta}\brackets[\Big]{\norm{\check{f}_0-f_0} \vee \norm{\check{f}_1-f_1}>\frac{c}{\sqrt{n\delta^2\epsilon^4\zeta^4}}} \geq 1/4,
  \end{gather*}
  where the infima are over all estimators $\check{\theta}=(\check{p},\check{q},\check{f}_0,\check{f}_1).$
\end{theorem}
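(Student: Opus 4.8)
Both halves of the statement are obtained by transferring the intermediate results \cref{thm:minimax-upper-bound} and \cref{thm:mlb}, which are phrased for the reparametrised model of \cref{sec:change-param}, back to the original coordinates $(p,q,f_0,f_1)$; the substantive work is the (bi-)Lipschitz dictionary between the two parametrisations on $\Theta_L$, together with the half-plane convention on $f_0-f_1$, which is precisely what makes this dictionary single-valued by breaking the label-switching symmetry. Write $\eta=\eta(\theta)$ for the reparametrised parameter.

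For the upper bound I would take the estimator $\hat\eta$ of \cref{thm:minimax-upper-bound}, define $\hat\theta$ as its pull-back through the inverse reparametrisation, and propagate the deviation bound. The step requiring care is the proof that on the image of $\Theta_L$ this inverse map is Lipschitz with coordinatewise moduli controlled by suitable negative powers of $\delta,\epsilon,\zeta$ — exactly the powers turning the square-root rate for $\eta$ into the rate $\max(\delta,\epsilon\zeta)/\sqrt{n\delta^2\epsilon^4\zeta^6}$ for $(p,q)$ and $1/\sqrt{n\delta^2\epsilon^4\zeta^4}$ for $(f_0,f_1)$ — with all hidden constants depending only on $K$ and $L$. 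Here the fixed lower bound $1-\abs{1-p-q}\geq L$ serves to keep $p+q$ away from $0$ and $2$, so that the stationary weights $pq/(p+q)^2$ stay comparable (up to $L$-dependent constants) to $\min(p,q)\geq\delta$; the admissible range $1\leq x^2\leq n\delta^2\epsilon^4\zeta^6$ and the uniformity over $\Theta_L$ are inherited verbatim from \cref{thm:minimax-upper-bound}. \cref{pro:local-equivalence} enters only implicitly at this stage, being what legitimises \cref{thm:minimax-upper-bound}: it identifies the statistical distance in the $\eta$-coordinates with the distance between the laws of three consecutive observations, which is what an empirical estimator can actually access.

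For the lower bound I would invoke \cref{thm:mlb} and transfer it through the \emph{forward} reparametrisation, which needs the Lipschitz estimate in the opposite direction — a change of size $\rho$ in the relevant block of $(p,q,f_0,f_1)$, remaining in $\Theta_L$, induces a change of at most $\mathrm{const}\cdot\rho$ times a positive power of $\delta,\epsilon,\zeta$ in $\eta$. Equivalently, if one prefers a self-contained route, one constructs the hard instances directly: under the sign convention pick $\theta,\theta'\in\Theta_L$ that are $\asymp\max(\delta,\epsilon\zeta)/\sqrt{n\delta^2\epsilon^4\zeta^6}$ apart in $(p,q)$ (respectively, perturb a single coordinate pair of $f_0,f_1$, a two-point construction being enough since $C$ may depend on $K$) yet agree up to $O(1/\sqrt n)$ in the well-identified $\eta$-coordinate; then \cref{prop:kul} bounds the relative entropy rate between $\PP_\theta$ and $\PP_{\theta'}$ by $O(1/n)$ — this is the precise place where $1-\abs{1-p-q}\geq L$ is used, since \cref{prop:kul} only covers the subdomain of uniformly bounded mixing time — so that the Kullback--Leibler divergence between the laws of $Y_1,\dots,Y_n$ under $\theta$ and under $\theta'$ is $O(1)$, and Le Cam's two-point inequality delivers the constant-order error probability. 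The side conditions $\delta\leq1/6$, $\epsilon\leq\epsilon_1$, $L\leq1/3$, $n\delta^2\epsilon^4\zeta^6\geq1$ are exactly what keeps the perturbed parameters inside $\Theta_L$ and inside the regime where \cref{prop:kul} and \cref{thm:mlb} apply.

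The main obstacle is therefore neither the minimax machinery nor \cref{prop:kul}, both off-the-shelf once the framework is in place, but the verification that the reparametrisation of \cref{sec:change-param} is bi-Lipschitz on $\Theta_L$ with \emph{exactly} the polynomial moduli of continuity dictated by the exponents $n\delta^2\epsilon^4\zeta^6$ and $n\delta^2\epsilon^4\zeta^4$, and — on the lower-bound side — that perturbations can be realised which move a target block of parameters by the full claimed amount while disturbing the well-identified coordinate only at the $1/\sqrt n$ scale and never leaving $\Theta_L$.
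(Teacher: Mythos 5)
Your overall architecture matches the paper's: the upper bounds are obtained by pushing the estimator of \cref{thm:minimax-upper-bound} through the explicit inversion formulae of \cref{rem:invert-param}, and the lower bounds by reusing the two-point hypotheses of \cref{thm:mlb}, bounding their Kullback--Leibler divergence via \cref{prop:kul}, and applying the Le Cam two-point lemma. Your ``self-contained route'' for the lower bound is in fact the paper's argument; your first-mentioned route (transferring the minimax lower bound for $\eta$ forward through a Lipschitz map) is not a valid mechanism on its own, since hardness of estimating $\phi_1$ does not by itself imply hardness of estimating $p=\tfrac12(1-\phi_2)(1-\phi_1)$ --- one must exhibit the hypotheses and verify the separation of the \emph{target} functional directly, which the paper does via \cref{eqn:max-pq} with a case split on $\delta\gtrless\epsilon\zeta$ to produce the $\max(\delta,\epsilon\zeta)$ factor.

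There is one genuine gap in your upper-bound plan. The inverse map $(\phi,\psi)\mapsto\theta$ is Lipschitz with constants depending only on $K$ (not on negative powers of $\delta,\epsilon,\zeta$; those powers are already present in the rates for $\eta$), and a purely coordinatewise transfer fails for the emission densities: the decomposition of $\norm{\hat f_0-f_0}$ contains the cross term $\phi_3\abs{\hat\phi_1-\phi_1}$, and bounding $\phi_3$ by a constant and invoking the $\phi_1$-rate $x/\sqrt{n\epsilon^4\zeta^6}$ gives something \emph{larger} than the claimed $x/\sqrt{n\delta^2\epsilon^4\zeta^4}$ whenever $\delta>\zeta$, a nonempty regime. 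The paper's fix is to bound the product jointly, applying \cref{pro:modulus:pointwise} to get $\phi_3\,\omega_1(\phi,\psi;\eta)\lesssim\eta/(\phi_2^2\phi_3^2)\lesssim\eta/(\epsilon^2\zeta^2)$, i.e.\ a dedicated deviation bound $\PP_{\phi,\psi}(\phi_3^2\abs{\hat\phi_1-\phi_1}^2\geq Cx^2/(n\epsilon^4\zeta^4))\leq e^{-x^2}$. Without this your plan does not deliver the stated rate for $(f_0,f_1)$. A smaller inaccuracy: the hypothesis of \cref{prop:kul} is that $\max(\abs{\phi_2},\abs{\tilde\phi_2})$ be small (guaranteed by $\epsilon\leq\epsilon_1$) together with emission densities bounded below, not the absolute-spectral-gap condition $1-\abs{1-p-q}\geq L$; the latter is needed for \cref{lem:estimation-of-p3} on the upper-bound side, while the lower bounds in fact hold over all of $\Phi$.
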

%
The particular value $1/4$ on the right sides is not essential: what is important is that the probabilities are bounded away from zero.

	We deduce immediately the sample complexity for learning the parameters. 
	We do not seek sharp dependence on $K$ in the bounds because we believe our results can be extended to the nonparametric setting, which we leave for further work. 

\begin{corollary}
  \label{cor:learning-rate}
  Fix a target error magnitude $E>0$ and a probability level $\alpha>0$. For the same estimators as in \cref{cor:rates-for-theta}, there exists a constant $C=C(K,L)$ such that for any $\theta\in \Theta_L$ we have
  \begin{multline*}
    n\geq \frac{\log(1/\alpha)}{\delta^2\epsilon^4\zeta^6}\brackets[\Big]{ \frac{C\delta^2}{E^{2}}\bigvee\frac{C\epsilon^2\zeta^2}{E^{2}}\bigvee 1}\\ \implies \PP_{\theta}(\abs{\hat{p}-p} \vee \abs{\hat{q}-q} > E)\leq \alpha,
  \end{multline*}
  and,
  \begin{multline*}
    n\geq \frac{\log(1/\alpha)}{\delta^2\epsilon^4\zeta^4}\brackets[\Big]{\frac{C}{E^2}\bigvee\frac{1}{\zeta^2}}\\
    \implies%
    \PP_{\theta}(\norm{\hat{f_0}-f_0} \vee \norm{\hat{f}_1-f_1}> E)\leq \alpha.
  \end{multline*}
  Conversely there exists a constant $c=c(K)>0$ such that for all $0 < E \leq c(K)$ and for any estimator $\check{\theta}=(\check{p},\check{q},\check{f}_0,\check{f}_1)$ there exists $\theta\in \Theta_L$ such that
  \begin{equation*}
    n\leq \frac{c^2(\delta^2\vee\epsilon^2\zeta^2)}{E^2\delta^2\epsilon^4\zeta^6}
    \implies%
    \PP_\theta( \abs{\check{p}-p} \vee \abs{\check{q}-q} >E)\geq 1/4,
  \end{equation*}
  and,
  \begin{equation*}
    n\leq \frac{c^2}{E^2\delta^2\epsilon^4\zeta^4} %
    \implies \PP_{\theta}(\norm{\check{f}_0-f_0}\vee\norm{\check{f}_1-f_1}>E)\geq 1/4.
  \end{equation*}
\end{corollary}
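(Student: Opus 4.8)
The plan is to read off \cref{cor:learning-rate} from the nonasymptotic deviation bounds of \cref{cor:rates-for-theta} by rearranging them into statements about sample size, with $E$ playing the role of the target accuracy and $\alpha$ that of the target failure probability.

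\textbf{Achievability.} In the first deviation bound of \cref{cor:rates-for-theta} I would take $x=x(\alpha):=\sqrt{\log(1/\alpha)}$, so that $e^{-x^2}=\alpha$; one may assume $\alpha\le e^{-1}$ so that $x\ge 1$ (if $\alpha>e^{-1}$ take instead $x=1$, noting $e^{-1}\le\alpha$, and the hypothesis on $n$ is only strengthened). The hypothesis $n\ge\frac{\log(1/\alpha)}{\delta^2\epsilon^4\zeta^6}\max(C\delta^2/E^2,C\epsilon^2\zeta^2/E^2,1)$ forces, through its ``$1$'' term, $n\delta^2\epsilon^4\zeta^6\ge\log(1/\alpha)=x^2$, so that the admissibility range $1\le x^2\le n\delta^2\epsilon^4\zeta^6$ is met; through its remaining two terms --- using $\max(\delta,\epsilon\zeta)^2=\max(\delta^2,\epsilon^2\zeta^2)$ and, after squaring, enlarging $C$ to absorb the square --- it also forces $\frac{Cx}{\sqrt{n\delta^2\epsilon^4\zeta^6}}\max(\delta,\epsilon\zeta)\le E$. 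Plugging into \cref{cor:rates-for-theta} then gives $\PP_\theta(\max(\abs{\hat p-p},\abs{\hat q-q})>E)\le e^{-x^2}=\alpha$. The second implication is proved identically, the only change being that the admissibility constraint $x^2\le n\delta^2\epsilon^4\zeta^6$ now reads $n\ge\frac{\log(1/\alpha)}{\delta^2\epsilon^4\zeta^4}\cdot\frac{1}{\zeta^2}$, producing the ``$1/\zeta^2$'' term, while $\frac{Cx}{\sqrt{n\delta^2\epsilon^4\zeta^4}}\le E$ produces the ``$C/E^2$'' term.

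\textbf{Hardness.} For the converse, fix an arbitrary estimator $\check\theta$ based on $n$ observations and any $0<E\le c(K)$ with $n\le\frac{c^2\max(\delta^2,\epsilon^2\zeta^2)}{E^2\delta^2\epsilon^4\zeta^6}$. Rearranging the constraint on $n$ shows $E\le\frac{c}{\sqrt{n\delta^2\epsilon^4\zeta^6}}\max(\delta,\epsilon\zeta)=:t_n$, which is exactly the radius appearing in the lower bound of \cref{cor:rates-for-theta} (whose hypotheses $\delta\le 1/6$, $\epsilon\le\epsilon_1$, $L\le 1/3$ and $n\delta^2\epsilon^4\zeta^6\ge 1$ we inherit, \eqref{eqn:compatibility} being standing). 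Since $\{\max(\abs{\check p-p},\abs{\check q-q})>t_n\}\subseteq\{\max(\abs{\check p-p},\abs{\check q-q})>E\}$, that lower bound yields $\sup_{\theta\in\Theta_L}\PP_\theta(\max(\abs{\check p-p},\abs{\check q-q})>E)\ge 1/4$; as the lower bound of \cref{thm:mlb} is established by exhibiting explicit parameter pairs, the supremum is attained at some $\theta\in\Theta_L$, which is the claim. The density converse is the same, starting from the second lower bound of \cref{cor:rates-for-theta} and the identity $n\le\frac{c^2}{E^2\delta^2\epsilon^4\zeta^4}\Leftrightarrow E\le\frac{c}{\sqrt{n\delta^2\epsilon^4\zeta^4}}$. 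The only regime left out is $n\delta^2\epsilon^4\zeta^6<1$, where \cref{cor:rates-for-theta} does not directly apply; there one uses that ignoring observations cannot lower the minimax risk, so the bound for $n_0:=\ceil{1/(\delta^2\epsilon^4\zeta^6)}$ observations (which does lie in the admissible range) transfers to $n\le n_0$, at the price of possibly shrinking $c(K)$.

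\textbf{Main obstacle.} There is no genuine difficulty: \cref{cor:learning-rate} is a direct rearrangement of \cref{cor:rates-for-theta}. The only things demanding attention are identifying where the extra factors $\max(\cdot,1)$ and $\max(\cdot,1/\zeta^2)$ originate --- they encode precisely the admissibility constraint $x^2\le n\delta^2\epsilon^4\zeta^6$ under which the bounds of \cref{cor:rates-for-theta} hold --- together with the boundary cases $\alpha>e^{-1}$ and $n\delta^2\epsilon^4\zeta^6<1$ disposed of above.
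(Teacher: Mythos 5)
Your proposal is correct and follows essentially the same route as the paper, whose proof of \cref{cor:learning-rate} is precisely the substitution $x^2=\log(1/\alpha)$ into \cref{cor:rates-for-theta} followed by solving for the error level $E$, with the admissibility window $1\leq x^2\leq n\delta^2\epsilon^4\zeta^6$ accounting for the extra terms in the maxima and the monotonicity-in-$n$ argument (stated by the authors just after \cref{cor:learning-rate}) handling the regime $n\delta^2\epsilon^4\zeta^6<1$ in the converse. Your write-up is simply a more explicit version of the paper's one-line proof.
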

	Note that to apply \cref{cor:rates-for-theta} for the lower bounds we would initially also need $n\geq (\delta^2\epsilon^4\zeta^6)^{-1}$ but by monotonicity --- i.e.\ the fact that any measurable function of $(Y_1,\dots,Y_n)$ is also a measurable function of $(Y_1,\dots,Y_N)$ for $N\geq n$ --- the restriction can be removed.\\


	Let us sketch the main ideas behind the proof of \cref{cor:rates-for-theta}. The
	full proof is deferred to \cref{sec:proofs}, along with all other proofs for this article.

	The minimax upper bounds are obtained by producing an estimator that attains
	the bounds. Building on the work of \cite{AGHKT14,MR3439359} we know that $\theta$ is
	identifiable from 
	the law of three consecutive observations, and we propose a reparametrisation of the model  to simplify the analysis. Let us denote by $p_{\theta}^{(3)}$ the density of  three consecutive observations.
	Motivated by a desire to simplify the expression for $p_{\theta}^{(3)}$ (see
	\cref{eqn:ThreeStepLaw,eq:17} in
	\cref{sec:change-param}), we introduce new parameters $\phi,\psi$ and we show in \cref{pro:local-equivalence} that
	$\norm{p_{\theta(\phi,\psi)}^{(3)} - p_{\theta(\tilde{\phi},\tilde{\psi})}^{(3)}}$ is equivalent to
	$\rho(\phi,\psi;\tilde{\phi},\tilde{\psi})$, where $\rho$ is defined in the proposition and can be
	seen as 
	an adequate statistical distance of the problem (see also the discussion after \cref{kul:upper}). Then, we leverage that $p_{\theta}^{(3)}$ can be estimated in
	Euclidean distance at the parametric rate $n^{-1/2}$ by the empirical estimator
	$\hat{p}_n^{(3)}$ defined in \cref{sec:upper-bounds},
	\cref{lem:estimation-of-p3}. This suggests that solving for
	$(\hat{\phi},\hat{\psi}) \in \argmin_{\phi,\psi} \norm{p_{\theta(\phi,\psi)}^{(3)} - \hat{p}_n^{(3)}}$ will give a good estimator $(\hat{\phi},\hat{\psi})$ for ($\phi,\psi$). By
	standard calculations and using the equivalence between
	$\|p_{\theta(\phi,\psi)}^{(3)} - p_{\theta(\tilde{\phi},\tilde{\psi})}^{(3)}\|$ and
	$\rho(\phi,\psi;\tilde{\phi},\tilde{\psi})$ derived in \cref{pro:local-equivalence}, we obtain bounds on maximum risk of such $(\hat{\phi},\hat{\psi})$ for estimating
	$(\phi,\psi)$ in \cref{thm:minimax-upper-bound}. Finally, the upper bounds for the original parameters in \cref{cor:rates-for-theta} are obtained by taking
	$\hat{\theta} = \theta(\hat{\phi},\hat{\psi})$.

	Incidentally, we remark that the
	parametrisation $(\phi,\psi)$ turns out to be of special interest: the
	components of $\phi$ determine how close the sequence $\bm{Y}$ is to being
	i.i.d in an interpretable way (see \cref{sec:change-param}), and the
	parameter $\psi$ is related to the stationary distribution of the sequence
	$\bm{Y}$. For this reason, we also establish minimax bounds for the
	estimation of $\phi$ and $\psi$ themselves in \cref{thm:minimax-upper-bound,thm:mlb}. 

	The minimax lower bounds are obtained by an argument \textit{à la} Le Cam. In
	particular, it is a famous result of Le Cam \cite{cam:1986,tsybakov:2009} that the minimax rate (under
	quadratic loss) of estimating a functional $g : \Theta \to \Reals$ is always greater
	than the maximum value that $\abs{g(\theta) - g(\tilde{\theta})}^2$ can take for $\theta,\tilde{\theta} \in \Theta$  under the constraint that
	$\KL(p_{\theta}^{(n)};p_{\tilde{\theta}}^{(n)}) \leq c$, where
	$\KL(p_{\theta}^{(n)};p_{\tilde{\theta}}^{(n)})$ denotes the \textit{Kullback-Leibler} (KL)
	divergence between the laws of $(Y_1,\dots,Y_n)$ under parameters $\theta$ and
	$\tilde{\theta}$, and $0 < c< 1$ is a small positive constant (see \cref{lem:2-point-testing-bound} for the precise formulation we use).
	Understanding bounds on $\abs{g(\theta)-g(\tilde{\theta})}$ in terms of bounds on $\KL(p_{\theta}^{(n)};p_{\tilde{\theta}}^{(n)})$ is also sufficient for obtaining an upper bound on the minimax estimation rate. Since we have dependent observations, the main difficulty of the proof is to relate $\KL(p_{\theta}^{(n)};p_{\tilde{\theta}}^{(n)})$ to
	a suitable notion of distance between $\theta$ and $\tilde{\theta}$. A key result is \cref{kul:upper} showing that under mild assumptions
	$\KL(p_{\theta(\phi,\psi)}^{(n)};p_{\theta(\tilde{\phi},\tilde{\psi})}^{(n)})$ is upper bounded by a constant times $n \rho^2 (\phi,\psi;\tilde{\phi},\tilde{\psi})$.  Then the lower bounds for
	$\phi$ (respectively $\psi$) in \cref{thm:mlb} are obtained by lower bounding
	the value of the optimisation problems $\max\abs{\phi_j - \tilde{\phi}_j}^2$ (respectively
	$\max\abs{\psi_j - \tilde{\psi}_j}^2$) subject to $n \rho^2 (\phi,\psi;\tilde{\phi},\tilde{\psi}) \leq c$ and
	$\theta(\phi,\psi),\theta(\tilde{\phi},\tilde{\psi})\in \Theta$ for a small enough constant $c > 0$. Finally,
	the lower bounds for the original parameters in the
	\cref{cor:rates-for-theta} are essentially deduced from the bounds for
	($\phi,\psi$) and inversion of the parametrisation.

	\section{Change of parametrisation}
	\label{sec:change-param}

	We reparametrise the model in such a way that the i.i.d.\ limiting cases are highlighted, by changing variables to $\phi=(\phi_1,\phi_2,\phi_3)$ and $\psi=(\psi_1,\psi_2)$ defined as
	\begin{gather*}
		\phi(\theta)%
		=%
		\brackets[\big]{\begin{matrix}
				\frac{q-p}{p+q}%
				&1 - p - q%
				&\norm{ f_{0} - f_{1}}
				\\
		\end{matrix}},
		\\
		\psi(\theta)%
		=%
		\brackets[\big]{\begin{matrix}
				\frac{q f_{0} + p f_{1}}{p + q}%
				&\frac{f_0-f_1}{\norm{ f_0-f_1 }}\\
		\end{matrix}}
		.
	\end{gather*}
	Here we have separated the scalar parameters $\phi$ from the vector parameters $\psi$. 
	Defining
	\begin{equation}
		\label{eqn:rphi}
		r(\phi)=\tfrac{1}{4} (1-\phi_1^2)\phi_2\phi_3^2,
	\end{equation}
	it follows from the discussion in \cref{sec:setting} that the data $\bm{Y}$ is close to i.i.d.\ exactly when $r(\phi)\approx 0$. [This is of course true also of other combinations of the components of $\phi$, but as \cref{eq:17} will show, $r(\phi)$ is the appropriate combination measuring the ``distance'' to the i.i.d.\ case.]

	Define
	\begin{gather*}
		\Phi=\Phi(\delta,\epsilon,\zeta)=\braces{(\phi(\theta),\psi(\theta)) : \theta\in \Theta(\delta,\epsilon,\zeta)},\\
		\Phi_{L}=\Phi_{L}(\delta,\epsilon,\zeta)
		=	\braces{(\phi(\theta),\psi(\theta)) : \theta \in \Theta_{L}(\delta,\epsilon,\zeta)},
	\end{gather*}
	and note that for $(\phi,\psi)\in \Phi$ we have
	\begin{multline}
          \label{eqn:phi-bounds}
          -\frac{1-\delta}{1+\delta} \leq \phi_1\leq \frac{1-\delta}{1+\delta},\quad %
          \epsilon\leq \abs{\phi_{2}}\leq 1-2\delta,\\ %
          \zeta\leq \phi_3\leq \sqrt{2},\quad %
          \abs{r(\phi)}\geq \delta\epsilon\zeta^2/4,
        \end{multline}
	while for $(\phi,\psi)\in \Phi_{L}$ we additionally have
	\begin{equation}\label{eqn:condition-on-phi2} \abs{\phi_2}\leq 1-L.
	\end{equation}
	\begin{remark}\label{rem:K=2}
		When $K=2$, in view of identifiability issues discussed in the next subsection, $\psi_2$ is not needed in the parametrisation, since we may universally make the choice \[\psi_2=\brackets[\Big]{\frac{1}{\sqrt{2}},-\frac{1}{\sqrt{2}}}.\]
	\end{remark}
	\begin{remark}\label{rem:invert-param}
		The parametrisation $\theta\mapsto (\phi,\psi)$ is invertible: we calculate
		\begin{align*}
			p&= \tfrac{1}{2} (1-\phi_2)(1-\phi_1), \\
			q &= \tfrac{1}{2}(1-\phi_2)(1+\phi_1),\\
			f_0 &=\psi_1-\tfrac{1}{2}\phi_1\phi_3\psi_2+\tfrac{1}{2}\phi_3\psi_2,\\
			f_1 &=\psi_1-\tfrac{1}{2}\phi_1\phi_3\psi_2-\tfrac{1}{2}\phi_3\psi_2.
		\end{align*}
	\end{remark}
	\begin{remark}
          \label{rem:valid-phis}
          Suppose $\psi_1$ is a probability density function with respect to counting measure on $\braces{1,\dots,K}$, $\psi_2$ is a function satisfying $\norm{\psi_2}=1$ and $\sum_k \psi_2(k)=0$, and $\phi$ satisfies $\abs{\phi_1}\leq 1$, $\abs{\phi_2}\leq 1$ and $\phi_3\geq 0$. Then $(\phi,\psi)$ lies in $\Phi(\delta,\epsilon,\zeta)$ if and only if
          \begin{equation}
            \label{eqn:condition-on-phi}
            \begin{gathered}
              \tfrac{1}{2}(1-\phi_2)(1-\abs{\phi_1})\geq \delta,\quad%
            \tfrac{1}{2}(1-\phi_2)(1+\abs{\phi_1})\leq 1,\\
            \abs{\phi_2}\geq \epsilon,\quad%
            \phi_3\geq \zeta,
            \end{gathered}
          \end{equation}
          and
          \begin{equation}
            \label{eqn:condition-on-psi}
		\psi_1(k)-\tfrac{1}{2}\phi_1\phi_3\psi_2(k)-\tfrac{1}{2}\phi_3 \abs{\psi_2(k)}\geq 0, \quad \forall k\leq K.
	\end{equation}
 \end{remark}

	The model \eqref{eqn:model} is identifiable for the parameter set $\Theta$ only up to `label-switching', since $\bm{Y}$ has the same distribution under the parameters $(p,q,f_0,f_1)$ and $(q,p,f_1,f_0)$; in the parametrisation $(\phi,\psi)$, the distribution of $\bm{Y}$ is the same under $(\phi_1,\phi_2,\phi_3,\psi_1,\psi_2)$ and under $(-\phi_1,\phi_2,\phi_3,\psi_1,-\psi_2)$. However, it was proved in \cite{AGHKT14} that aside from this label-switching, the model parameters can be identified from the law of just three consecutive observations. To that end, for any integer $m$ denoting by $P_{\theta}^{(m)}$ the law of $(Y_1,\dots,Y_m)$ under parameter 
	$\theta\in\Theta$, and by $p_\theta^{(m)}$ the corresponding density with respect to counting measure on $\braces{1,\dots,K}^m$, we calculate
	\begin{equation}
		\label{eqn:ThreeStepLaw}
		p_{\theta}^{(3)}=\left(\frac{q}{p + q}\right) g \otimes f_0 \otimes g
		+ \left(\frac{p}{p + q}\right) h \otimes f_1 \otimes h,
	\end{equation}
	where $g=(1-p) f_0 + p f_1$ and $h= q f_0 + (1-q) f_1$, and where
	$\otimes$ denotes the tensor product so that
	\[(f\otimes g \otimes h) (a,b,c)= f(a)g(b)h(c),\quad (a,b,c)\in\braces{1,\dots,K}^3.\]
	In the $(\phi,\psi)$ parametrisation, writing just
	$p^{(3)}_{\phi,\psi}$ for $p^{(3)}_{\theta(\phi,\psi)}$ in a slight abuse of notation, we have
	\begin{multline}
		\label{eq:17}
		p^{(3)}_{\phi,\psi} =\psi_1\otimes \psi_1 \otimes \psi_1 + r(\phi)\brackets[\big]{\psi_2\otimes \psi_2 \otimes \psi_1 +\psi_1 \otimes \psi_2 \otimes \psi_2}\\
		+  \phi_2 r(\phi) \psi_2\otimes \psi_1 \otimes \psi_2 - \phi_1\phi_2\phi_3 r(\phi) \psi_2\otimes \psi_2\otimes \psi_2,
	\end{multline}
	where we recall the notation $r(\phi)=\tfrac{1}{4}(1-\phi_1^2)\phi_2\phi_3^2$.

	 We define a statistical distance $\rho$ directly on the parameter space $\Phi$ which is equivalent to  the Euclidean distance between the densities $p^{(3)}_{\phi,\psi}$ and $p^{(3)}_{\tilde{\phi},\tilde{\psi}}$. The function $\rho$ is not a true metric because it may not satisfy the triangle inequality and because, due to the identifiability issues reflected by the appearance of factors of $\sign(\ip{\psi_2,\tilde{\psi}_2})$ in its definition, we may have $\rho(\phi,\psi;\tilde{\phi},\tilde{\psi})=0$ with $(\phi,\psi)\neq(\tilde{\phi},\tilde{\psi})$. Here $\ip{\cdot,\cdot}$ denotes the Euclidean inner product on $\RR^K$, \( \ip{f,g}=\sum_{i=1}^K f(k)g(k).\)
	\begin{proposition}
		\label{pro:local-equivalence}
		For $r$ as in \cref{eqn:rphi} define $m$ by
		\begin{equation}\label{eqn:m}
			m(\phi)=(r(\phi),\phi_2 r(\phi), \phi_1 \phi_2 \phi_3 r(\phi) ),
		\end{equation}
		and define $\rho(\phi,\psi;\tilde{\phi},\tilde{\psi})=$
		\begin{equation}
	\label{eq:131}
	\begin{aligned}
		\max\{%
		&\abs{m_1(\phi) - m_1(\tilde{\phi})},\abs{m_2(\phi) - m_2(\tilde{\phi})},\,\\
		&\abs{m_3(\phi) - \sgn(\ip{\psi_2,\tilde{\psi}_2})\cdot m_3(\tilde{\phi})}, \norm{\psi_1 - \tilde{\psi}_1}, \\
		&(\abs{m_1(\phi)}\vee\abs{m_1(\tilde{\phi})})\cdot\norm{\psi_2 - \sgn(\Inner{\psi_2,\tilde{\psi}_2})\cdot \tilde{\psi}_2} \}.%
	\end{aligned}%
\end{equation}
		There exist constants $c_1,c_2>0$ (which depend on $K$) such that for all $(\phi,\psi),(\tilde{\phi},\tilde{\psi})\in \bigcup_{\delta,\epsilon,\zeta}\Phi(\delta,\epsilon,\zeta)$ we have
		\begin{equation*}
			c_1 \rho(\phi,\psi;\tilde{\phi},\tilde{\psi})\leq \norm{p_{\phi,\psi}^{(3)} - p_{\tilde{\phi},\tilde{\psi}}^{(3)}}%
			\leq c_2  \rho(\phi,\psi;\tilde{\phi},\tilde{\psi}).
		\end{equation*}
	\end{proposition}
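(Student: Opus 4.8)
The plan is to work entirely from the explicit expansion \cref{eq:17}, which in the notation $m=(m_1,m_2,m_3)$ of \cref{eqn:m} reads
\begin{equation*}
p^{(3)}_{\phi,\psi}=\psi_1^{\otimes 3}+m_1(\phi)\brackets{\psi_2\otimes\psi_2\otimes\psi_1+\psi_1\otimes\psi_2\otimes\psi_2}+m_2(\phi)\,\psi_2\otimes\psi_1\otimes\psi_2-m_3(\phi)\,\psi_2^{\otimes 3}.
\end{equation*}
The first observation is that $\psi_1^{\otimes3}$, $\psi_2\otimes\psi_2$ and $\psi_2\otimes\psi_1\otimes\psi_2$ are even in $\psi_2$ while $\psi_2^{\otimes3}$ is odd, so writing $s=\sgn\ip{\psi_2,\tilde\psi_2}$ and $\tilde\psi_2':=s\tilde\psi_2$ (so $\ip{\psi_2,\tilde\psi_2'}\geq0$) one may rewrite $p^{(3)}_{\tilde\phi,\tilde\psi}$ in terms of $\tilde\psi_2'$ at the cost of replacing $m_3(\tilde\phi)$ by $s\,m_3(\tilde\phi)$; this is what makes $\norm{\psi_2-\tilde\psi_2'}=\norm{\psi_2-s\tilde\psi_2}$ (rather than $\norm{\psi_2-\tilde\psi_2}$) the quantity that enters the bounds. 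I will also use the a priori bounds $\norm{\psi_1}\leq1$, $\norm{\psi_2}=1$, $\abs{m_1(\phi)}=\abs{r(\phi)}\leq\tfrac12$, $\abs{m_2(\phi)}\leq\abs{m_1(\phi)}$ and $\abs{m_3(\phi)}\leq\sqrt2\,\abs{m_1(\phi)}$, all immediate from \cref{eqn:phi-bounds,eqn:rphi}.

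For the upper bound $\norm{p^{(3)}_{\phi,\psi}-p^{(3)}_{\tilde\phi,\tilde\psi}}\leq c_2\rho$ I would subtract the two expansions (the second written with $\tilde\psi_2'$), group the difference by the four monomial types, and estimate each group by telescoping one factor at a time: the $j$-th group contributes a term $\lesssim\abs{m_j(\phi)-m_j(\tilde\phi)}$ (or $\abs{m_3(\phi)-s\,m_3(\tilde\phi)}$ for the last) against a bounded tensor, plus a term $\lesssim\abs{m_j(\tilde\phi)}\brackets{\norm{\psi_1-\tilde\psi_1}+\norm{\psi_2-\tilde\psi_2'}}$; bounding $\abs{m_j(\tilde\phi)}\leq\sqrt2\max\braces{\abs{m_1(\phi)},\abs{m_1(\tilde\phi)}}$ in the $\psi_2$-part and $\abs{m_j(\tilde\phi)}\leq1$ in the $\psi_1$-part makes every term at most a universal constant times one of the five quantities in $\rho$. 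Summing over the groups gives $c_2$, which in fact can be taken independent of $K$.

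For the lower bound $c_1\rho\leq\norm{D}$ with $D:=p^{(3)}_{\phi,\psi}-p^{(3)}_{\tilde\phi,\tilde\psi}$, the idea is to isolate each coefficient by contracting $D$ against test tensors built from the all-ones vector $\mathbf1$ (which kills $\psi_2$ since $\sum_k\psi_2(k)=0$ but acts as $\ip{\mathbf1,\psi_1}=1$ on a density) and from the identity matrix $I$ (which reads $\mathrm{tr}(\psi_2\psi_2^{\transpose})=\norm{\psi_2}^2=1$, thereby removing the sign ambiguity in $m_1,m_2$). Contracting slots $1$ and $3$ of $D$ with $\mathbf1$ gives exactly $\psi_1-\tilde\psi_1$, so $\norm{\psi_1-\tilde\psi_1}\leq K\norm D$; subtracting $\psi_1^{\otimes3}-\tilde\psi_1^{\otimes3}$ (norm $\leq3K\norm D$) leaves $D'$ with $\norm{D'}\leq(1+3K)\norm D$ built only from the $m_i$-monomials. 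Contracting the third (resp.\ second) slot of $D'$ with $\mathbf1$ yields $m_1(\phi)\psi_2^{\otimes2}-m_1(\tilde\phi)\tilde\psi_2^{\otimes2}$ (resp.\ the analogue with $m_2$), from which contraction with $I$ extracts $m_i(\phi)-m_i(\tilde\phi)$ (bounding the first two terms of $\rho$ by $K^{3/2}\norm D$), while expanding $\norm{m_1(\phi)\psi_2^{\otimes2}-m_1(\tilde\phi)\tilde\psi_2^{\otimes2}}^2=m_1(\phi)^2+m_1(\tilde\phi)^2-2m_1(\phi)m_1(\tilde\phi)\ip{\psi_2,\tilde\psi_2}^2$ together with the elementary inequality $a^2+b^2-2abt\geq\max(a^2,b^2)(1-t)\geq\tfrac12\max(a^2,b^2)\norm{\psi_2-s\tilde\psi_2}^2$ (valid for $t=\ip{\psi_2,\tilde\psi_2}^2\in[0,1]$) bounds the fifth term of $\rho$. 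It remains to bound $\abs{m_3(\phi)-s\,m_3(\tilde\phi)}$: subtracting from $D'$ the (now controlled) $m_1$- and $m_2$-monomials leaves a tensor equal to $-m_3(\phi)\psi_2^{\otimes3}+s\,m_3(\tilde\phi)(\tilde\psi_2')^{\otimes3}$ of norm $\leq C(K)\norm D$, and I would split on $\tau:=\ip{\psi_2,\tilde\psi_2'}$: if $\tau\geq\tfrac12$ the same elementary-inequality argument with cubes gives the claim, whereas if $\tau<\tfrac12$ then $\norm{\psi_2-s\tilde\psi_2}^2>1$, so the already-proved bound on the fifth term of $\rho$ forces $\max\braces{\abs{m_1(\phi)},\abs{m_1(\tilde\phi)}}\leq C(K)\norm D$ and hence, via $\abs{m_3}\leq\sqrt2\abs{m_1}$, the triangle inequality finishes the case.

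The routine part is the upper bound and the $\mathbf1$/identity contractions; the one genuinely delicate point is the $m_3$-coefficient, where, because $\psi_2^{\otimes3}$ is odd, no fixed test tensor removes the sign ambiguity and the ``sign'' $s$ degenerates when $\psi_2\perp\tilde\psi_2$. The resolution is precisely the case split above, which exploits that near-orthogonality of $\psi_2$ and $\tilde\psi_2$ cannot occur — once $\norm D$ is small — unless $m_1$, and therefore $m_3$, is itself small.
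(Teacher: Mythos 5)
Your argument is correct, and while your upper bound is the same telescoping decomposition we use, your lower bound takes a genuinely different route. We extract each coefficient of \cref{eq:24} by pairing $D=p^{(3)}_{\phi,\psi}-p^{(3)}_{\tilde{\phi},\tilde{\psi}}$ with parameter-dependent rank-one test tensors built from $f=\psi_2+\sgn(\ip{\psi_2,\tilde{\psi}_2})\cdot\tilde{\psi}_2$ (and, for the weighted $\psi_2$-term, a second $f$ combining the normalised sum and difference of $\psi_2$ and $\tilde{\psi}_2$); the point of this choice is that $\ip{\psi_2,f}^2=\ip{\tilde{\psi}_2,f}^2\geq 1$ always, so every extraction, including that of $m_3$, is a one-line inner-product computation with no degenerate case. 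You instead peel the tensor apart with \emph{fixed} contractions: $\mathbf{1}$ in two slots isolates $\psi_1-\tilde{\psi}_1$, $\mathbf{1}$ in one slot reduces to the $2$-tensors $m_j\psi_2^{\otimes 2}-\tilde{m}_j\tilde{\psi}_2^{\otimes 2}$ for $j=1,2$, whose pairing with the identity matrix reads off $m_j-\tilde{m}_j$ sign-free (because $\psi_2^{\otimes 2}$ is even) and whose exact Gram norm $m_j^2+\tilde{m}_j^2-2m_j\tilde{m}_j\ip{\psi_2,\tilde{\psi}_2}^2$, via $a^2+b^2-2abt\geq \max(a^2,b^2)(1-t)$, yields the fifth term of $\rho$; the $m_3$-term then comes from the residual odd part with a case split on $\tau=\abs{\ip{\psi_2,\tilde{\psi}_2}}$, the near-orthogonal case being absorbed by the already-proved $\psi_2$-bound. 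Your route is more systematic (no test function has to be guessed) at the cost of tracking residual tensors and the case split; ours is shorter per coefficient but hinges on choosing the right $f$. Two small points, neither affecting correctness. First, in the case $\tau\geq 1/2$ the inequality you invoke controls $\max(\abs{a},\abs{b})$ rather than $\abs{a-b}$; what you need is the equally elementary identity $a^2+b^2-2abt=(1-t)(a^2+b^2)+t(a-b)^2\geq t(a-b)^2$ applied with $a=m_3(\phi)$, $b=\sgn(\ip{\psi_2,\tilde{\psi}_2})\cdot m_3(\tilde{\phi})$ and $t=\tau^3\geq 1/8$. Second, the case split is actually avoidable: if $ab\geq 0$ then $a^2+b^2-2ab\tau^3\geq (a-b)^2$, while if $ab<0$ then $a^2+b^2-2ab\tau^3\geq a^2+b^2\geq \tfrac{1}{2}(a-b)^2$, so the cubic Gram bound already gives $\abs{m_3(\phi)-\sgn(\ip{\psi_2,\tilde{\psi}_2})\cdot m_3(\tilde{\phi})}\leq \sqrt{2}\,\norm{m_3(\phi)\psi_2^{\otimes 3}-m_3(\tilde{\phi})\tilde{\psi}_2^{\otimes 3}}$ for every $\tau\in[0,1]$.
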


	Optimal estimation rates can be obtained if we adequately understand 
	the Kullback--Leibler divergence between distributions with different parameters. The Kullback--Leibler divergence between $P^{(n)}_{\theta(\phi,\psi)}$ and $P^{(n)}_{\theta(\tilde{\phi},\tilde{\psi})}$ can be related to the statistical distance $\rho(\phi,\psi;\tilde{\phi},\tilde{\psi})$ in a neighbourhood of the independent subcase.

	\begin{proposition}
	\label{prop:kul}
	Assume there exists $c \in (0,1)$ such that $\min(f_0,f_1,\tilde{f}_0,\tilde{f}_1) \geq c$. There exist constants $C,\epsilon_0 > 0$ depending only on $c$ such that if $\max(|\phi_2|,|\tilde{\phi}_2|) \leq \epsilon_0$, then with $\rho$ as in \cref{eq:131},
	\begin{equation*}
	    \KL(P_{\theta(\phi,\psi)}^{(n)},P_{\theta(\tilde{\phi},\tilde{\psi})}^{(n)})%
	    \leq C n \rho(\phi,\psi;\tilde{\phi},\tilde{\psi})^2.
	\end{equation*}
	\label{kul:upper}
	\end{proposition}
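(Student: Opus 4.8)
The plan is to combine the chain rule for the Kullback--Leibler divergence with an explicit filtering formula for the one-step predictive laws. Write $\theta=\theta(\phi,\psi)$ and $\tilde\theta=\theta(\tilde\phi,\tilde\psi)$, and let $p_\theta(\cdot\mid Y_{1:i-1})$ denote the conditional density of $Y_i$ given $(Y_1,\dots,Y_{i-1})$. Factorising $p^{(n)}_\theta(y_1,\dots,y_n)=\prod_{i=1}^n p_\theta(y_i\mid y_1,\dots,y_{i-1})$ (legitimate for both $\theta$ and $\tilde\theta$ since all densities are $\geq c>0$), the chain rule gives
\[
K\bigl(P^{(n)}_\theta,P^{(n)}_{\tilde\theta}\bigr)=\sum_{i=1}^n\EE_\theta\Bigl[K\bigl(p_\theta(\cdot\mid Y_{1:i-1}),\,p_{\tilde\theta}(\cdot\mid Y_{1:i-1})\bigr)\Bigr].
\]
Each predictive law $p_\theta(\cdot\mid Y_{1:i-1})$ is a mixture $(1-\alpha_i)f_0+\alpha_i f_1$ of the emission densities, with $\alpha_i=\PP_\theta(X_i=1\mid Y_{1:i-1})\in[0,1]$, hence is $\geq c$; using $K\le\chi^2$ and $\chi^2(\mu,\nu)\le(\min_k\nu_k)^{-1}\|\mu-\nu\|^2$, it therefore suffices to bound $\EE_\theta\|p_\theta(\cdot\mid Y_{1:i-1})-p_{\tilde\theta}(\cdot\mid Y_{1:i-1})\|^2$ by $C\rho^2$ uniformly in $i$, with $C=C(c)$.

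Next I would record the predictive in the $(\phi,\psi)$ parametrisation. Using $\pi_1:=\PP_\theta(X_1=1)=\tfrac12(1-\phi_1)$, the identity $f_0-f_1=\phi_3\psi_2$ and $r(\phi)=\pi_1(1-\pi_1)\phi_2\phi_3^2$, a short computation with the two-state filter yields $p_\theta(Y_i=\cdot\mid Y_{1:i-1})=\psi_1+\lambda_i\psi_2$ for a scalar $\lambda_i=\lambda_i(Y_{1:i-1})$ with $\lambda_1=0$ and, writing $\bar\psi_1:=\psi_1-\phi_1\phi_3\psi_2=(1-\pi_1)f_1+\pi_1f_0$ (so $0\le\bar\psi_1\le1$),
\[
\lambda_{i+1}=G(\lambda_i,Y_i;\phi,\psi),\qquad
G(\ell,y;\phi,\psi):=\frac{r(\phi)\,\psi_2(y)+\phi_2\,\ell\,\bar\psi_1(y)}{\psi_1(y)+\ell\,\psi_2(y)},
\]
and analogously $\tilde\lambda_{i+1}=G(\tilde\lambda_i,Y_i;\tilde\phi,\tilde\psi)$. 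Because $s\,G(\ell,y;\tilde\phi,\tilde\psi)=G(s\ell,y;\tilde\phi',\tilde\psi')$ with $\tilde\phi'=(s\tilde\phi_1,\tilde\phi_2,\tilde\phi_3)$, $\tilde\psi'=(\tilde\psi_1,s\tilde\psi_2)$ and $s:=\sgn\langle\psi_2,\tilde\psi_2\rangle$, and because replacing $\tilde\theta$ by $\theta(\tilde\phi',\tilde\psi')$ changes neither $\rho$ nor the hypotheses, I may assume $s=1$. Since $\|p_\theta(\cdot\mid Y_{1:i-1})-p_{\tilde\theta}(\cdot\mid Y_{1:i-1})\|^2\le2\|\psi_1-\tilde\psi_1\|^2+2\|\lambda_i\psi_2-\tilde\lambda_i\tilde\psi_2\|^2$, $\|\lambda_i\psi_2-\tilde\lambda_i\tilde\psi_2\|^2=(\lambda_i-\tilde\lambda_i)^2+\lambda_i\tilde\lambda_i\|\psi_2-\tilde\psi_2\|^2$, and $\|\psi_1-\tilde\psi_1\|\le\rho$ by definition, the whole statement reduces to the bounds, \emph{uniform in $i$ and in the realisation $Y_{1:i-1}$}: \textbf{(i)} $|\lambda_i|\le(4/c)\,|r(\phi)|$ (and its analogue for $\tilde\lambda_i$), which together with the last entry $\max(|r(\phi)|,|r(\tilde\phi)|)\|\psi_2-\tilde\psi_2\|$ of $\rho$ controls $\lambda_i\tilde\lambda_i\|\psi_2-\tilde\psi_2\|^2$ by $(16/c^2)\rho^2$; and \textbf{(ii)} $|\lambda_i-\tilde\lambda_i|\le C\rho$.

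To prove (i) I would take $\epsilon_0$ equal to a small multiple of $c^2$; then $|\psi_2|,|\bar\psi_1|\le1$, $\psi_1\ge c$, $|\phi_2|\le\epsilon_0$ and $|r(\phi)|\le\tfrac12|\phi_2|\le\tfrac12\epsilon_0$ give $|\lambda_{i+1}|\le(2/c)(|r(\phi)|+\epsilon_0|\lambda_i|)$, and a one-line induction from $\lambda_1=0$ propagates $|\lambda_i|\le(4/c)|r(\phi)|$ (in particular $|\lambda_i|\le c/2$, so all denominators below stay $\ge c/2$). For (ii) the same smallness of $\epsilon_0$ makes $\ell\mapsto G(\ell,y;\phi,\psi)$ a $\tfrac12$-contraction on $\{|\ell|\le c/2\}$, since $\partial_\ell G=\bigl(\phi_2\bar\psi_1(y)\psi_1(y)-r(\phi)\psi_2(y)^2\bigr)/(\psi_1(y)+\ell\psi_2(y))^2$ has modulus at most $\tfrac32\epsilon_0/(c/2)^2$. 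Splitting
\[
\lambda_{i+1}-\tilde\lambda_{i+1}=\bigl[G(\lambda_i,Y_i;\phi,\psi)-G(\tilde\lambda_i,Y_i;\phi,\psi)\bigr]+\bigl[G(\tilde\lambda_i,Y_i;\phi,\psi)-G(\tilde\lambda_i,Y_i;\tilde\phi,\tilde\psi)\bigr],
\]
the first bracket is $\le\tfrac12|\lambda_i-\tilde\lambda_i|$, so (ii) follows by induction from $\lambda_1=\tilde\lambda_1=0$ once the second bracket --- the parameter-perturbation term, with both fractions evaluated at the common argument $\ell=\tilde\lambda_i$ --- is shown to be $\le C\rho$.

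This last estimate is the main obstacle. Writing the second bracket over the common denominator $(\psi_1(Y_i)+\tilde\lambda_i\psi_2(Y_i))(\tilde\psi_1(Y_i)+\tilde\lambda_i\tilde\psi_2(Y_i))\ge c^2/4$, one must bound its numerator by $C\rho$; after grouping terms this comes down to a short list of elementary inequalities expressing the building blocks through $\rho$ and $\epsilon_0$. Besides $\|\psi_1-\tilde\psi_1\|\le\rho$, $|r(\phi)-r(\tilde\phi)|\le\rho$ and $\max(|r(\phi)|,|r(\tilde\phi)|)\|\psi_2-\tilde\psi_2\|\le\rho$ (entries of $\rho$, with $s=1$), one uses $|(\phi_2-\tilde\phi_2)r(\phi)|\le2\rho$ and $|(\phi_2-\tilde\phi_2)r(\tilde\phi)|\le2\rho$, which follow from $|m_2(\phi)-m_2(\tilde\phi)|\le\rho$ and $|\phi_2|,|\tilde\phi_2|\le\epsilon_0$. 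The delicate ingredient --- delicate because $\tilde\lambda_i$ carries a factor of size $|r(\tilde\phi)|$ while $\bar\psi_1-\bar{\tilde\psi}_1=(\psi_1-\tilde\psi_1)-\phi_1\phi_3(\psi_2-\tilde\psi_2)-(\phi_1\phi_3-\tilde\phi_1\tilde\phi_3)\tilde\psi_2$ contains the term $(\phi_1\phi_3-\tilde\phi_1\tilde\phi_3)\tilde\psi_2$ with no useful pointwise bound --- is the inequality $|\phi_2\,r(\tilde\phi)\,(\phi_1\phi_3-\tilde\phi_1\tilde\phi_3)|\le C\rho$, which I would get from the algebraic identity
\[
\phi_2\,r(\tilde\phi)\,(\phi_1\phi_3-\tilde\phi_1\tilde\phi_3)=\bigl(m_3(\phi)-m_3(\tilde\phi)\bigr)-\phi_1\phi_2\phi_3\bigl(r(\phi)-r(\tilde\phi)\bigr)+\tilde\phi_1\tilde\phi_3\,r(\tilde\phi)\,(\tilde\phi_2-\phi_2),
\]
valid since $m_3(\phi)=\phi_1\phi_2\phi_3\,r(\phi)$ and $m_3(\tilde\phi)=\tilde\phi_1\tilde\phi_2\tilde\phi_3\,r(\tilde\phi)$, combined with $|m_3(\phi)-m_3(\tilde\phi)|\le\rho$, $|r(\tilde\phi)(\tilde\phi_2-\phi_2)|\le2\rho$ and $|\phi_1\phi_2\phi_3|\le\sqrt2\,\epsilon_0$. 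Assembling these estimates bounds the second bracket by $C\rho$, giving (ii); together with (i) this yields $\EE_\theta\|p_\theta(\cdot\mid Y_{1:i-1})-p_{\tilde\theta}(\cdot\mid Y_{1:i-1})\|^2\le C\rho^2$, and summing over $i\le n$ in the chain rule gives $K(P^{(n)}_\theta,P^{(n)}_{\tilde\theta})\le Cn\rho^2$.
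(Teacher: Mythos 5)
Your proposal is correct and follows essentially the same route as the paper's proof: chain rule for the Kullback--Leibler divergence, a chi-square bound using the lower bound $c$ on the emission densities, the representation of the one-step predictive as $\psi_1+\lambda_i\psi_2$ with the same filter recursion (your $\lambda_i$ is the paper's $V_{i-1}/2$), an inductive bound $\abs{\lambda_i}\lesssim\abs{r(\phi)}/c$, and a contraction-plus-perturbation argument giving $\abs{\lambda_i-\tilde{\lambda}_i}\lesssim\rho$. The ``delicate ingredient'' you isolate and resolve via the identity for $\phi_2\,r(\tilde{\phi})(\phi_1\phi_3-\tilde{\phi}_1\tilde{\phi}_3)$ is handled in the paper by an equivalent decomposition of the numerator difference $N_k-\tilde{N}_k$ in terms of $m_1,m_2,m_3$, so the two arguments match in substance.
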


    We note that only the lower bound on $\norm{p_{\phi,\psi}^{(3)} - p_{\tilde{\phi},\tilde{\psi}}^{(3)}}$ in \cref{pro:local-equivalence} is used in our paper (it is used in proving \cref{thm:minimax-upper-bound}). The upper bound on $\norm{p_{\phi,\psi}^{(3)} - p_{\tilde{\phi},\tilde{\psi}}^{(3)}}$ is still of interest as it establishes the tightness (up to constants) of the corresponding lower bound, thereby proving the equivalence between $\norm{p_{\phi,\psi}^{(3)} - p_{\tilde{\phi},\tilde{\psi}}^{(3)}}$ and $\rho(\phi,\psi;\tilde{\phi},\tilde{\psi})$ and showing that $\rho$ is 
    an   adequate statistical metric for this problem. Furthermore, in combination with \cref{prop:kul}, Pinsker's inequality, and the fact that all norms on the set $\braces{1,\dots,K}^3$ are equivalent, it shows that whenever $\max(|\phi_2|,|\tilde{\phi}_2|)$ is small enough,
    \begin{align*}
      \KL(P_{\theta(\phi,\psi)}^{(n)},P_{\theta(\tilde{\phi},\tilde{\psi})}^{(n)})%
      &\leq C' n \norm{p_{\phi,\psi}^{(3)} - p_{\tilde{\phi},\tilde{\psi}}^{(3)}}^2\\
      &\leq C'' n \KL(P_{\theta(\phi,\psi)}^{(3)},P_{\theta(\tilde{\phi},\tilde{\psi})}^{(3)}),
    \end{align*}
    for constants $C',C'' > 0$, once again highlighting the prominent role of the law of 3 consecutive observations in HMM modelling, and illustrating that optimal estimators (up to numerical constants) can be built solely on the basis of the empirical distribution of blocks of 3 consecutive observations. 
 This shows that as long as the chain $\bm{Y}$ is not ``too dependent'', it behaves almost as if we had observed i.i.d.\ blocks of 3 consecutive observations (in which case we would have that $\KL(P_{\theta(\phi,\psi)}^{(n)},P_{\theta(\tilde{\phi},\tilde{\psi})}^{(n)}) =  (n/3)\KL(P_{\theta(\phi,\psi)}^{(3)},P_{\theta(\tilde{\phi},\tilde{\psi})}^{(3)})$, for all $n$ divisible by $3$). 

	\section{Upper bounds}\label{sec:upper-bounds}
	We obtain the following upper bounds for estimating $\phi$ and $\psi$. Since we are studying limits as the quantities of interest become small, the relative risk may be of as much interest as the absolute risk, and we provide bounds for both quantities. The bounds demonstrate that learning model parameters is possible in the regime where $n$ is large enough in relation to $\delta$, $\epsilon$ and $\zeta$. Observe firstly that estimation of $p^{(3)}$ is possible at a parametric rate.

	\begin{lemma}\label{lem:estimation-of-p3}
		Define the empirical estimator \\  $\hat{p}^{(3)}_n : \braces{1,\dots,K}^n\to [0,1]$ by \begin{equation}\label{eqn:def:hatp}
		\hat{p}^{(3)}_n(a,b,c)= \frac{1}{n} \sum_{i=1}^{n-2} \II\braces{Y_i=a,~Y_{i+1}=b,~Y_{i+2}=c}.\end{equation} Then for some constant $C=C(K,L)$ and any $x\geq 1$
		\[ \sup_{(\phi,\psi)\in \Phi_L(\delta,\epsilon,\zeta)} \PP_{(\phi,\psi)}(\norm{\hat{p}^{(3)}-p^{(3)}}\geq Cx/\sqrt{n})\leq e^{-x^2}.\]
	\end{lemma}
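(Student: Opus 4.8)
\emph{Proof plan.} The plan is to reduce to a coordinate-wise concentration bound and then invoke a concentration inequality for additive functionals of a uniformly ergodic Markov chain. First, since $\braces{1,\dots,K}^{3}$ is a fixed finite set, $\norm{v}\le K^{3/2}\max_{a,b,c}\abs{v(a,b,c)}$, so it is enough to bound the deviation of a single coordinate $\hat p^{(3)}_n(a,b,c)$ and take a union bound over the $K^{3}$ coordinates. I would write $\hat p^{(3)}_n(a,b,c)=\tfrac1n\sum_{i=1}^{n-2}g(W_i)$ with $W_i=\brackets{(X_i,Y_i),(X_{i+1},Y_{i+1}),(X_{i+2},Y_{i+2})}$ and $g(W_i)=\II\braces{Y_i=a,Y_{i+1}=b,Y_{i+2}=c}\in\braces{0,1}$. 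Because $\bm X$ starts from stationarity, the chain $(X_i,Y_i)_{i\ge1}$, and hence $(W_i)_{i\ge1}$, is stationary, so $\EE_{(\phi,\psi)}[g(W_i)]=p^{(3)}(a,b,c)$ and $\EE_{(\phi,\psi)}\hat p^{(3)}_n(a,b,c)=\tfrac{n-2}{n}p^{(3)}(a,b,c)$; the resulting bias is at most $\tfrac2n\norm{p^{(3)}}\le\tfrac2n$, negligible next to $Cx/\sqrt n$ once $C\ge4$. Since also $\norm{\hat p^{(3)}_n-p^{(3)}}\le2$ deterministically, the event in the statement is empty when $n\le C^2/4$, so it remains only to treat $n$ larger than a constant depending on $K,L$.

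The key structural fact I would use is that $(X_i,Y_i)_{i\ge1}$ is a finite-state, time-homogeneous Markov chain — with transition $Q(x,x')f_{x'}(y')$ from $(x,y)$ to $(x',y')$ — which is \emph{uniformly ergodic over $\Theta_L$ with a mixing time depending only on $L$}. This holds because the transition depends on the current state only through $X_i$, so the total-variation distance of $(X_t,Y_t)$ from stationarity equals that of $\bm X$, which for the two-state chain $\bm X$ is at most $\abs{1-p-q}^{t}\le(1-L)^{t}$ on $\Theta_L$ (the non-trivial eigenvalue of $Q$ being $1-p-q$). Hence the mixing time of $(X_i,Y_i)$, and therefore of the length-$3$ window chain $(W_i)$, is at most a constant $\tau=\tau(L)$ that does not involve $\delta,\epsilon$ or $\zeta$.

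With this in hand I would apply a bounded-differences (McDiarmid-type) inequality for functions of uniformly ergodic Markov chains — say via Marton couplings, as in the work of Kontorovich--Ramanan or Paulin. Viewing $\sum_{i=1}^{n-2}\II\braces{Y_i=a,Y_{i+1}=b,Y_{i+2}=c}$ as a function of $((X_i,Y_i))_{i=1}^{n}$, changing one coordinate $(X_j,Y_j)$ affects at most three summands and so changes the sum by at most $3$; the inequality then gives, for some $c=c(L)>0$ and all $s>0$,
\[
\PP_{(\phi,\psi)}\brackets[\big]{\abs[\big]{\hat p^{(3)}_n(a,b,c)-\tfrac{n-2}{n}p^{(3)}(a,b,c)}\ge s}\le2\exp\brackets[\big]{-c(L)\,n\,s^{2}}.
\]
Combining this with the bias bound and union bounding over the $K^{3}$ coordinates gives $\PP_{(\phi,\psi)}(\norm{\hat p^{(3)}_n-p^{(3)}}\ge Cx/\sqrt n)\le2K^{3}\exp(-c'(L)C^{2}x^{2}/K^{3})$, and choosing $C=C(K,L)$ with $c'(L)C^{2}/K^{3}\ge1+\log(2K^{3})$ makes this at most $e^{-x^{2}}$ for every $x\ge1$, uniformly over $\Phi_L(\delta,\epsilon,\zeta)$ because no constant depends on $\delta,\epsilon,\zeta$.

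The main obstacle is precisely the second paragraph: establishing that on $\Theta_L$ the hidden chain — and hence the emission-augmented window chain driving $\hat p^{(3)}_n$ — mixes in $O_L(1)$ steps \emph{with no concealed dependence on $\delta,\epsilon$ or $\zeta$}. This is what the parameter $L$ buys us and is why the lemma is stated over $\Phi_L$ rather than all of $\Phi$. Once it is in place the particular concentration inequality is immaterial and the remaining steps are routine bookkeeping; alternatively one can avoid a black-box inequality by conditioning on $\bm X$ — the indicators within a single residue class modulo $3$ then depend on disjoint blocks of conditionally independent $Y$'s and concentrate by ordinary Hoeffding — leaving only the analogous concentration of the bounded additive functional $\sum_i f_{X_i}(a)f_{X_{i+1}}(b)f_{X_{i+2}}(c)$ of the two-state chain $\bm X$.
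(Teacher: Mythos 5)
Your proposal is correct and follows the same overall architecture as the paper's proof: reduce to a single coordinate via the equivalence of norms on $\braces{1,\dots,K}^3$ plus a union bound, identify the relevant Markov chain built from consecutive $(X_i,Y_i)$'s, and observe that its mixing time is bounded by a constant depending only on $L$ because the total-variation distance to stationarity is controlled by $\abs{1-p-q}^t\leq(1-L)^t$ (the paper computes $\max_{ij}\abs{Q^n_{ij}-\pi_j}=\max_i(\pi_i)\abs{\phi_2}^n$, which is the same estimate). The one genuine difference is the concentration inequality invoked at the core. The paper applies Paulin's Bernstein-type bound (Theorem 3.4 there) to the additive functional $\sum_i h(Z^{(i)})$ of the explicit window chain $Z^{(i)}=(X_i,X_{i+1},X_{i+2},Y_i,Y_{i+1},Y_{i+2})$, with the pseudo-spectral gap lower bounded via the mixing time; the price is a mixed sub-Gaussian/sub-exponential tail, forcing a separate treatment of the regime $x\gtrsim\sqrt{n}$ using the deterministic bound on $\norm{\hat p^{(3)}-p^{(3)}}$. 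You instead use a McDiarmid-type inequality for uniformly ergodic chains (Marton coupling, also available in Paulin's paper), treating the sum as a function of the trajectory $((X_i,Y_i))_{i=1}^n$ with bounded differences $3$; this yields a purely sub-Gaussian tail $2\exp(-c(L)ns^2)$ valid for all $s$, so the large-$x$ case analysis disappears and you only need the (easy) observation that the event is empty for $n\lesssim C^2$. Both routes rest on exactly the same structural fact --- $O_L(1)$ mixing uniform over $\Phi_L$ --- and your bookkeeping (bias of order $2/n$ from the $1/n$ normalisation, union bound over $K^3$ coordinates, absorption of constants) is sound, so the proof goes through as planned.
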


	\begin{theorem}\label{thm:minimax-upper-bound}
		Assume $\Phi_L$ is non-empty and let $\hat{\phi},\hat{\psi}$ be any measurable functions satisfying, for $\hat{p}^{(3)}_n$ as in \cref{eqn:def:hatp}, \begin{equation*}\label{eqn:minimum-distance} \norm{ p^{(3)}_{\hat{\phi},\hat{\psi}}-\hat{p}_n^{(3)}}\leq 2\inf_{(\tilde{\phi},\tilde{\psi})\in\Phi_L} \norm{p^{(3)}_{\tilde{\phi},\tilde{\psi}}-\hat{p}^{(3)}_n}.
		\end{equation*}
		There exists a constant $C=C(K,L)>0$ such that the following hold.
		\begin{enumerate}
                  \item\label{item:mub:phi1} Assume $1\leq x^2\leq n\delta^2 \epsilon^4\zeta^6$. Then
                  \begin{align*}
                    &\sup_{(\phi,\psi)\in\Phi_L(\delta,\epsilon, \zeta)}\PP_{\phi,\psi}\brackets[\Big]{\abs[\Big]{\frac{1-\hat{\phi}_1^2}{1-\phi_1^2}-1}
                    \geq%
                      \frac{\sqrt{2}Cx }{\sqrt{n}\delta\epsilon^2\zeta^3}}\\ 
                    &\quad\leq \sup_{(\phi,\psi)\in\Phi_L(\delta,\epsilon, \zeta)}\PP_{\phi,\psi}\brackets[\Big]{\abs{\hat{\phi}_1 - \phi_1} \wedge\abs{\hat{\phi}_1+\phi_1} \geq  \frac{Cx }{\sqrt{n}\epsilon^2\zeta^3}}\\%
                    &\quad%
                      \leq e^{-x^2}.
                  \end{align*}

                  \item\label{item:mub:phi2} Assume $1\leq x^2\leq n\delta^2\epsilon^2\zeta^4$. Then
                  \begin{align*}
                    &\sup_{(\phi,\psi)\in\Phi_L(\delta,\epsilon, \zeta)}\PP_{\phi,\psi}\brackets[\Big]{\abs[\Big]{\frac{\hat{\phi}_2}{\phi_2} - 1}	\geq  \frac{Cx }{\sqrt{n}\delta \epsilon^2 \zeta^2}}\\
                    &\qquad
                    \leq \sup_{(\phi,\psi)\in\Phi_L(\delta,\epsilon, \zeta)}\PP_{\phi,\psi}\brackets[\Big]{\abs{\hat{\phi}_2 - \phi_2}	\geq  \frac{Cx }{\sqrt{n}\delta\epsilon \zeta^2}}\\%
                    &\qquad%
				\leq e^{-x^2}.
                  \end{align*}

                  \item\label{item:mub:phi3} Assume $1\leq x^2\leq n\delta^2\epsilon^4\zeta^6$. Then
                  \begin{align*}
                    &\sup_{(\phi,\psi)\in \Phi_L(\delta,\epsilon,\zeta)} \PP_{\phi,\psi} \brackets[\Big]{ \abs[\Big]{\frac{\hat{\phi}_3}{\phi_3}-1}\geq \frac{Cx}{\sqrt{n}\delta \epsilon^2\zeta^3}}\\
                    &\qquad\leq \sup_{(\phi,\psi)\in \Phi_L(\delta,\epsilon,\zeta)} \PP_{\phi,\psi} \brackets[\Big]{ \abs{\hat{\phi}_3-\phi_3} \geq \frac{Cx }{\sqrt{n}\delta \epsilon^2\zeta^2}}\\
                    &\qquad\leq e^{-x^2}.
                  \end{align*}

		\item\label{item:mub:psi1} Assume $1\leq x^2\leq n$. Then
			\[ \sup_{(\phi,\psi)\in\Phi_L(\delta,\epsilon,\zeta)} \PP_{\phi,\psi}\brackets[\Big]{\norm{\hat{\psi}_1-\psi_1} \geq \frac{Cx }{\sqrt{n}}}\leq e^{-x^2}.\]

		\item\label{item:mub:psi2}
			Assume $1\leq x^2\leq n\delta^2\epsilon^2\zeta^4$ and $K>2$. Then
                        \begin{multline*}
                          \sup_{(\phi,\psi)\in\Phi_L(\delta,\epsilon,\zeta)} \PP_{\phi,\psi}\brackets[\Big]{\norm{\hat{\psi}_2-\psi_2} \wedge \norm{\hat{\psi}_2+\psi_2}\\\geq \frac{Cx }{\sqrt{n}\delta\epsilon\zeta^2}}
                          \leq e^{-x^2}.
                        \end{multline*}
		\end{enumerate}
	\end{theorem}

Recall that estimating $\psi_2$ is unnecessary when $K=2$ (see \cref{rem:K=2}). Note that the absolute loss in each case is bounded, and one can deduce that the bounds for $\phi_2$ and for $\psi$ hold without an upper bound on $x$, with $e^{-x^2}$ on the right replaced by zero (for $C$ large enough).

The estimator proposed in Theorem \ref{thm:minimax-upper-bound} is an approximate solution to an optimisation problem which (by taking squares) is a multivariate polynomial function with $2K$ unknowns. There are several methods in the literature about finding the global optimum of multivariate polynomials, see e.g. \cite{Glopti}. As mentioned in Section \ref{sec:conclusion}, the issue of finding computationally efficient estimation 
methods requires further investigation.

	\section{Lower bounds}
	\label{sec:lower-bounds}
	We prove lower bounds, matching the previous upper bounds in a suitable regime and demonstrating the impossibility of learning model parameters when $n$ is not large enough in relation to $\delta$, $\epsilon$ and $\zeta$.
	\label{sec:minimax-lower-bounds}
	\begin{theorem}
		\label{thm:mlb}
		There exist constants $c=c(K) > 0$, and $\epsilon_0,\zeta_0 > 0$ such that whenever $\epsilon\leq \epsilon_0$, $\zeta \leq \zeta_0$, $\delta\leq 1/6$ and $L\leq 1/3$ the following hold. [The infima are over all estimators, i.e.\ all measurable functions of the data $(Y_1,\dots,Y_n)$.]
		\begin{enumerate}
			\item\label{item:mlb:phi1}
			Assume $n\delta^2\epsilon^4\zeta^6\geq 1$. Then
			\begin{align*}
			&\inf_{\hat{\phi}_1}  \sup_{(\phi,\psi)\in\Phi_L(\delta,\epsilon, \zeta)}\PP_{\phi,\psi}\brackets[\Big]{\abs{\hat{\phi}_1 - \phi_1} \wedge\abs{\hat{\phi}_1+\phi_1}
					\geq  \frac{c}{\sqrt{n}\epsilon^2 \zeta^3}}\\
                         &\geq  \inf_{\hat{\phi}_1}  \sup_{(\phi,\psi)\in\Phi_L(\delta,\epsilon, \zeta)} \PP_{\phi,\psi}\brackets[\Big]{\abs[\Big]{\frac{1-\tilde{\phi}_1^2}{1-\phi_1^2}-1}\geq \frac{\sqrt{2}c}{\sqrt{n}\delta\epsilon^2\zeta^3}}\\
                          &\geq 1/4.
			\end{align*}

                        \item\label{item:mlb:phi2} Assume $n\delta^2\epsilon^4\zeta^4\geq 1$. Then
                        \begin{align*}
                          &\inf_{\hat{\phi}_2}  \sup_{(\phi,\psi)\in\Phi_L(\delta,\epsilon, \zeta)}\PP_{\phi,\psi}\brackets[\Big]{\abs{\hat{\phi}_2 - \phi_2}
                            \geq   \frac{c}{ \sqrt{n}\delta \epsilon \zeta^2}}\\
                          &\qquad\geq \inf_{\hat{\phi}_2} \sup_{(\phi,\psi)\in\Phi_L(\delta,\epsilon, \zeta)}\PP_{\phi,\psi}\brackets[\Big]{\abs[\Big]{\frac{\hat{\phi}_2}{\phi_2} - 1}
                            \geq \frac{c}{ \sqrt{n}\delta \epsilon^2 \zeta^2}}\\
                          &\qquad\geq 1/4.
                        \end{align*}

			\item\label{item:mlb:phi3} Assume $n\delta^2\epsilon^4\zeta^6\geq 1$. Then
                        \begin{align*}
                          &\inf_{\hat{\phi}_3}  \sup_{(\phi,\psi)\in\Phi_L(\delta,\epsilon, \zeta)}\PP_{\phi,\psi}\brackets[\Big]{\abs{\hat{\phi}_3 - \phi_3}
					\geq  \frac{c}{\sqrt{n}\delta\epsilon^2\zeta^2}} \\
				&\qquad\geq
				\inf_{\hat{\phi}_3}  \sup_{(\phi,\psi)\in\Phi_L(\delta,\epsilon, \zeta)}\PP_{\phi,\psi}\brackets[\Big]{\abs[\Big]{\frac{\hat{\phi}_3}{\phi_3} - 1}
                                  \geq  \frac{c}{\sqrt{n}\delta\epsilon^2\zeta^3}}\\
                          &\qquad\geq 1/4.
                        \end{align*}

			\item\label{item:mlb:psi1} For any $n,$ $\delta$, $\epsilon$ and $\zeta$,
			\[ \inf_{\hat{\psi}_1} \sup_{(\phi,\psi)\in \Phi_L(\delta,\epsilon,\zeta)} \PP_{\phi,\psi}\brackets[\Big]{\norm{\hat{\psi}_1-\psi_1} \geq \frac{c}{\sqrt{n}}}\geq 1/4.\]

                        \item\label{item:mlb:psi2} Assume $n\delta^2\epsilon^2\zeta^4\geq 1$ and  $K>2$. Then
                        \begin{multline*}
                          \inf_{\hat{\psi}_2} \sup_{(\phi,\psi)\in \Phi_L(\delta,\epsilon,\zeta)} \PP_{\phi,\psi}\brackets[\Big]{\norm{\hat{\psi}_2-\psi_2} \wedge \norm{\hat{\psi}_2+\psi_2}\\
                            \geq \frac{c}{\sqrt{n}\delta\epsilon\zeta^2}}\geq 1/4.
                        \end{multline*}
		\end{enumerate}
	\end{theorem}

	\section{Conclusions and future directions}
	\label{sec:conclusion}
	In this work we have quantified the impact on learnability of approaching the i.i.d.\ boundary within the set of parameters of a hidden Markov model. The limiting cases occur when one hidden state is absorbing, when the underlying Markov chain becomes a sequence of independent variables, or when the emission distributions are equal. We have proved both upper and lower bounds for the estimation rates of the parameters in a hidden Markov model with two hidden states and finitely many possible outcomes. Our results characterize the frontier in the parameter space between learnable and unlearnable parameters and quantify how large the sample has to be in order to get estimators with prescribed error with high probability.

	Some tricky regions of the parameter space are not fully captured in the upper and lower bounds. Specifically, the condition on $n$ in the lower bound for estimating $\phi_2$ differs by a factor of $\epsilon^2$ from the corresponding condition in the upper bound.
Also, in the upper bound for $\phi_1$, we do not describe the precise estimation behaviour in the region $n\delta^2\epsilon^4\zeta^6<x^2< n\epsilon^4\zeta^6$: in this range we can obtain something by applying the bound with $y^2=\min(x^2,n\delta^2\epsilon^4\zeta^6)$ but we cannot expect that this gives the correct dependence on $x$. [There is no issue in the region $x^2\geq n\epsilon^4\zeta^6$ since we may replace the bound $e^{-x^2}$ with zero, similarly to the comment after the theorem regarding $\phi_2$ and $\psi$.] A similar gap exists for estimating $\phi_3$.  The reason for those gaps is that the inversion formulas given in \cref{pro:modulus:pointwise} are only local; finding global inversion formulas, which would allow the remaining regions to be covered, remains an open problem. 
Our results already work for a wide range of parameters, and extending to the few remaining cases is an interesting issue for future research.

	Regarding the upper bounds, 
	our proof method relies on the fact that the two steps of estimating $p^{(3)}$ and of estimating, given $p^{(3)}$, the HMM parameters themselves, decouple. This is because, with good mixing properties for the Markov chain, estimation of $p^{(3)}$ can be done uniformly at a rate not depending on the HMM parameters (Lemma 1). When the spectral gap is small the underlying Markov chain mixes slowly, spending long periods remaining in whichever of the two states it is in, so that estimation of $p^{(3)}$ becomes hard for parameters for which there is small spectral gap. These are not the same parameters for which recovering the HMM parameters given $p^{(3)}$ is most difficult, and so to obtain accurate rates without a spectral gap requires carefully addressing the two steps simultaneously, which is beyond the scope of the paper (we could obtain a suboptimal rate using the current methods just with careful tracking of the spectral gap, since it is lower bounded by $1-2\delta$, but upper and lower bounds obtained in this way mismatch by a factor of $\delta$). Note the above arguments explain the requirement for a spectral gap, not an absolute spectral gap; we believe our results will in fact hold in the near-periodic case when the spectral gap is close to 2 and the absolute spectral gap is close to zero, but this would require some extra technical calculations in the proof of \cref{lem:estimation-of-p3}.

We believe similar results hold with more than two hidden states and with arbitrary nonparametric emission distributions. Investigation of the fundamental limits for learning more general HMMs and misspecified modelling will be the object of further work. Developments of our findings for clustering, multiple testing and sparse settings will also be the object of further work, and all will depend fundamentally on the results obtained here.

We analysed a minimum distance estimator for theoretical convenience, and it is possible that the same upper bounds hold for more practical estimators (for example empirical least squares estimators and tensor-based methods).	On the practical side, usual estimation algorithms can be expected to exhibit bad computational behaviour when the unknown true parameters lie near the learning frontier, as shown in \cite{Phys:2020} for  the EM algorithm.
	We have not tackled this issue here and we believe it merits substantive investigation, both in building robust algorithms and in detecting the poor performance in the problematic region. This last question is interesting both from a practical and a theoretical point of view.

\section{Proofs}
\label{sec:proofs}

\subsection{Proof of \cref{pro:local-equivalence}}
Recall the definition \eqref{eqn:m} of $m$ as
\begin{equation*}
  m(\phi)=(r(\phi),\phi_2 r(\phi), \phi_1 \phi_2 \phi_3 r(\phi) ),\ %
  r(\phi)=\tfrac{1}{4}(1-\phi_1^2)\phi_2\phi_3^2.
\end{equation*}
We write $\tilde{m}=m(\tilde{\phi})$, and we write $\psi_{ijk}$ for $\psi_i\otimes \psi_j\otimes \psi_k$ and $\tilde{\psi}_{ijk}$ for $\tilde{\psi}_i\otimes \tilde{\psi}_j \otimes \tilde{\psi}_k$. Then from \cref{eq:17} we have
\begin{equation}
  \label{eq:24}
  \begin{aligned}
    p_{\phi,\psi}^{(3)} - p_{\tilde{\phi},\tilde{\psi}}^{(3)}%
    &= (\psi_{111} - \tilde{\psi}_{111})\\%
    &\quad%
      + \braces{m_1(\psi_{221} + \psi_{122}) - \tilde{m}_1(\tilde{\psi}_{221} +
			\tilde{\psi}_{122})}\\%
    &\quad%
      + \braces{m_2\psi_{212} -
      \tilde{m}_2\tilde{\psi}_{212}}\\%
    &\quad%
      - \braces{m_3\psi_{222} - \tilde{m}_3\tilde{\psi}_{222} }.
  \end{aligned}
\end{equation}
Recalling that $\Inner{\cdot,\cdot}$ denotes the Euclidean inner product on $\RR^K$, we have $\ip{\psi_1,1} = 1$, $\Inner{\psi_2,1} = 0$,
$\norm{\psi_2}=1$ and $\norm{1}= K^{1/2}$. Let $\Inner{\cdot,\cdot}$ also denote the Euclidean inner product on $\RR^{K \times K \times K},$ wherein for functions $f_i,\tilde{f}_i:\braces{1,\dots,K}\to \RR,~i\leq 3$ we have
\[ \ip{f_1\otimes f_2\otimes f_3,\tilde{f}_1\otimes \tilde{f}_2\otimes \tilde{f}_3} = \ip{f_1,\tilde{f}_1}\ip{f_2,\tilde{f}_2}\ip{f_3,\tilde{f}_3}.\]
\paragraph{Lower bounding $\norm{p^{(3)}_{\phi,\psi}-p^{(3)}_{\tilde{\phi},\tilde{\psi}}}$}

For any function \\ $f:\braces{1,\dots,K}\to \RR$, we have
\begin{equation*}
	\Inner{p_{\phi,\psi}^{(3)} - p_{\tilde{\phi},\tilde{\psi}}^{(3)}, f \otimes 1\otimes 1}%
	= \Inner{\psi_{111} - \tilde{\psi}_{111},f\otimes 1 \otimes 1}
	= \Inner{\psi_1 - \tilde{\psi}_1, f}.
\end{equation*}
Then
\begin{align}
	\norm{\psi_1 - \tilde{\psi}_1}%
	&= \sup_{\norm{f}=1}\abs{\ip{\psi_1 - \tilde{\psi}_1, f}} \nonumber \\%
	&= \sup_{\norm{f}=1}\abs{\ip{p_{\phi,\psi}^{(3)} - p_{\tilde{\phi},\tilde{\psi}}^{(3)}, f \otimes 1\otimes 1}} \nonumber \\
	&\leq \norm{p_{\phi,\psi}^{(3)} - p_{\tilde{\phi},\tilde{\psi}}^{(3)}} \sup_{\norm{f}=1}\norm{f\otimes 1 \otimes 1} \nonumber\\
  	\label{eq:73}
  &= K\norm{p_{\phi,\psi}^{(3)} - p_{\tilde{\phi},\tilde{\psi}}^{(3)}},
\end{align}
and similarly,
\begin{align}
  &\Inner{p_{\phi,\psi}^{(3)} - p_{\tilde{\phi},\tilde{\psi}}^{(3)},1 \otimes f \otimes f}\nonumber\\%
	&= \Inner{\psi_{111} - \tilde{\psi}_{111}, 1 \otimes f \otimes f} + \Inner{m_1\psi_{122} - \tilde{m}_1\tilde{\psi}_{122},1\otimes f \otimes f} \nonumber \\
	\label{eq:74}
	&= \Inner{\psi_1 - \tilde{\psi}_1,f}^2 + m_1 \Inner{\psi_2,f}^2 - \tilde{m}_1\Inner{\tilde{\psi}_2,f}^2.
\end{align}
Choosing
$f = \psi_2 + \sgn(\Inner{\psi_2,\tilde{\psi}_2})\cdot \tilde{\psi}_2$ (with the convention that $\sign(0)=+1$), we observe that
\begin{equation*}
	\Inner{\psi_2,f} = 1 + |\Inner{\psi_2,\tilde{\psi}_2}|%
	= \sgn(\Inner{\psi_2,\tilde{\psi}_2})\cdot \Inner{\tilde{\psi}_2,f}.
\end{equation*}
In particular we note that
$\Inner{\psi_2,f}^2 = \Inner{\tilde{\psi}_2,f}^2 = (1 + |\Inner{\psi_2,\tilde{\psi}_2}|)^2 \geq 1$.
Since also $\norm{f}^2=2+2\abs{\ip{\psi_2,\tilde{\psi}_2}}\leq 4$, returning to \eqref{eq:74} we observe that
\begin{align}
	\abs{m_1 - \tilde{m}_1}%
	&\leq \norm{f}^2\norm{\psi_1-\tilde{\psi}_1}^2 + \norm{1\otimes f\otimes f}\norm{p^{(3)}_{\phi,\psi}-p^{(3)}_{\tilde{\phi},\tilde{\psi}}} \nonumber \\
	&\leq 4\norm{\psi_1-\tilde{\psi}_1}^2+4K^{1/2}\norm{p^{(3)}_{\phi,\psi}-p^{(3)}_{\tilde{\phi},\tilde{\psi}}} \nonumber \\
	\label{eq:76}
	&\leq 4(K^{7/2}+K^{1/2}) \norm{p_{\phi,\psi}^{(3)} - p_{\tilde{\phi},\tilde{\psi}}^{(3)}},
\end{align}
where for the last line we have used \cref{eq:73} and the fact that $\norm{p_{\phi,\psi}^{(3)} - p_{\tilde{\phi},\tilde{\psi}}^{(3)}}^2 \leq K^3$. We
continue by considering the expression $f \otimes 1 \otimes f$, for which we have
\begin{align*}
	&\Inner{p_{\phi,\psi}^{(3)} - p_{\tilde{\phi},\tilde{\psi}}^{(3)},f \otimes 1 \otimes f}\\%
	&= \Inner{\psi_{111} - \tilde{\psi}_{111}, f \otimes 1 \otimes f}%
	+ \Inner{m_2 \psi_{212} - \tilde{m}_2\tilde{\psi}_{212}, f \otimes 1 \otimes f}\\
	&= \Inner{\psi_1 - \tilde{\psi}_1,f}^2 + m_2\Inner{\psi_2,f}^2 - \tilde{m}_2 \Inner{\tilde{\psi}_2,f}^2
\end{align*}
Recognising symmetry with \cref{eq:74}, we again choose $f = \psi_2 + \sgn(\Inner{\psi_2,\tilde{\psi}_2})\cdot\tilde{\psi}_2$ to obtain
\begin{equation}
	\label{eq:78}
	\abs{m_2 - \tilde{m}_2}%
	\leq 4(K^{7/2}+K^{1/2}) \norm{p_{\phi,\psi}^{(3)} - p_{\tilde{\phi},\tilde{\psi}}^{(3)}}.
\end{equation}
Finally, considering the expression $f \otimes f \otimes f$, we observe that $\Inner{p_{\phi,\psi}^{(3)} - p_{\tilde{\phi},\tilde{\psi}}^{(3)},f \otimes f \otimes f}$ is equal to
\begin{equation*}
	\begin{split}
&\Inner{\psi_{111} - \tilde{\psi}_{111}, f \otimes f \otimes f}%
                  + \Inner{m_1(\psi_{221}+\psi_{122})\\
          &\qquad- \tilde{m}_1(\tilde{\psi}_{221}+\tilde{\psi}_{122}),f\otimes f \otimes f}\\%
		&\qquad+ \Inner{m_2 \psi_{212} - \tilde{m}_2\tilde{\psi}_{212}, f \otimes f \otimes f}\\
          &\qquad- \Inner{m_3 \psi_{222} - \tilde{m}_3 \tilde{\psi}_{222},f \otimes f \otimes f}.
	\end{split}
\end{equation*}
In other words, $\Inner{p_{\phi,\psi}^{(3)} - p_{\tilde{\phi},\tilde{\psi}}^{(3)},f \otimes f \otimes f}$ equals
\begin{equation*}
  \begin{split}
    &\Inner{\psi_1 - \tilde{\psi}_1,f}^3
		+ 2\brackets[\big]{m_1\Inner{\psi_2,f}^2\Inner{\psi_1,f} - \tilde{m}_1\Inner{\tilde{\psi}_2,f}^2\Inner{\tilde{\psi}_1,f}  }\\
		&\quad+ \brackets[\big]{m_2\Inner{\psi_2,f}^2\Inner{\psi_1,f} - \tilde{m}_2\Inner{\tilde{\psi}_2,f}^2\Inner{\tilde{\psi}_1,f}  }\\
          &\quad- \brackets[\big]{ m_3 \Inner{\psi_2,f}^3 - \tilde{m}_3 \Inner{\tilde{\psi}_2,f}^3  }.
	\end{split}
\end{equation*}
Once more choosing $f = \psi_2 + \sgn(\Inner{\psi_2,\tilde{\psi}_2})\cdot\tilde{\psi}_2$, we
obtain (recall that by construction $\norm{f} \leq 2$,
$1\leq \Inner{\psi_2,f}^2 = \Inner{\tilde{\psi}_2,f}^2 \leq 4$, and also
$\sgn(\Inner{\psi_2,\tilde{\psi}_2})\Inner{\psi_2,f} = \Inner{\tilde{\psi}_2,f}$)
\begin{align*}
	&\abs{m_3 - \sgn(\Inner{\psi_2,\tilde{\psi}_2})\cdot \tilde{m}_3}\\%
	&\leq 8\norm{\psi_1 - \tilde{\psi}_1}^3+ 8\norm{p_{\phi,\psi}^{(3)} - p_{\phi,\psi}^{(3)}}\\
	&\quad+ 8\abs[\big]{ m_1\Inner{\psi_1,f} - \tilde{m}_1\Inner{\tilde{\psi}_1,f}}
	+ 4\abs[\big]{m_2\Inner{\psi_1,f} - \tilde{m}_2\Inner{\tilde{\psi}_1,f}}.
\end{align*}
For some constant $C=C(K)$ we have
\begin{align*}
	&\abs[\big]{ m_1\Inner{\psi_1,f} - \tilde{m}_1 \Inner{\tilde{\psi}_1,f}}\\%
	&\qquad\leq \abs{\Inner{\psi_1,f}}\abs[\big]{ m_1 - \tilde{m}_1}%
	+ \abs{\tilde{m}_1}\abs{\Inner{\psi_1 - \tilde{\psi}_1,f}}\\
	&\qquad\leq 2\norm{\psi_1}\abs[\big]{ m_1 - \tilde{m}_1} %
	+ 2 \abs{\tilde{m}_1}\norm{\psi_1 - \tilde{\psi}_1}\\
	&\qquad\leq C \norm{p_{\phi,\psi}^{(3)} - p_{\tilde{\phi},\tilde{\psi}}^{(3)}},
\end{align*}
where for the last line we have used \cref{eq:73,eq:76} and that $\norm{\psi_1}\leq K^{1/2}$ and $\abs{\tilde{m}_1}\leq \tilde{\phi}_3^2/4 \leq \norm{\tilde{f}_0-\tilde{f}_1}^2/4\leq K/4$.

Similarly, using \cref{eq:78} and the fact that $\abs{\tilde{m}_2}$ is suitably bounded, we have for some $C=C(K)$
\begin{align*}
	\abs[\big]{ m_2\Inner{\psi_1,f} - \tilde{m}_2\Inner{\tilde{\psi}_1,f}}%
	&\leq C \norm{p_{\phi,\psi}^{(3)} - p_{\tilde{\phi},\tilde{\psi}}^{(3)}}.
\end{align*}
We deduce for some different constant $C=C(K)$ that
\begin{equation}
	\label{eq:84}
	\abs{m_3 - \sgn(\Inner{\psi_2,\tilde{\psi}_2})\cdot \tilde{m}_3}%
	\leq C \norm{p_{\phi,\psi}^{(3)} - p_{\tilde{\phi},\tilde{\psi}}^{(3)}}.
\end{equation}
Finally, for $\psi_2$ we show that for some $C$ we have
\begin{equation}
	\label{eq:49}
	(\abs{m_1}\vee\abs{\tilde{m}_1})\norm{\psi_2 - \sgn(\ip{\psi_2,\tilde{\psi}_2})\cdot\tilde{\psi}_2}%
	\leq C \norm{p_{\phi,\psi}^{(3)} - p_{\tilde{\phi},\tilde{\psi}}^{(3)}}.
\end{equation}
If $\psi_2=\tilde{\psi}_2$ there is nothing to prove, so we assume without loss of generality that $\psi_2\neq \tilde{\psi}_2$. Also assume that $\abs{m_1}\geq \abs{\tilde{m}_1}$, the final bound then following by symmetry.
Returning to \cref{eq:74} with $f$ to be chosen, we see that
\begin{align*}
	m_1\brackets[\big]{ \Inner{\psi_2,f}^2 - \Inner{\tilde{\psi}_2,f}^2}%
	&= \Inner{\psi_1 - \tilde{\psi}_1,f}^2\\%
	&\quad%
          - \Inner{p_{\phi,\psi}^{(3)} - p_{\tilde{\phi},\tilde{\psi}}^{(3)},1\otimes f\otimes f}\\%
  &\quad
	- \Inner{\tilde{\psi}_2,f}^2 \brackets{ m_1 - \tilde{m}_1}.
\end{align*}
Since
$\Inner{\psi_2,f}^2 -\Inner{\tilde{\psi}_2,f}^2 = \Inner{\psi_2 - \tilde{\psi}_2,f}\Inner{\psi_2 + \tilde{\psi}_2,f}$
we obtain
\begin{multline}
	\label{eq:86}
	\abs{m_1\Inner{\psi_2 - \tilde{\psi}_2,f}\Inner{\psi_2 + \tilde{\psi}_2,f}}%
	\leq \norm{f}^2\norm{\psi_1 - \tilde{\psi}_1}^2\\%
	+ K \norm{f}^2\norm{p_{\phi,\psi}^{(3)} - p_{\tilde{\phi},\tilde{\psi}}^{(3)}}%
	+ \abs{m_1 - \tilde{m}_1}\Inner{\tilde{\psi}_2,f}^2.
\end{multline}
Observe that $\psi_2+\tilde{\psi}_2$ is orthogonal to $\psi_2 - \tilde{\psi}_2$ (this arises from the fact that $\psi_2$ and $\tilde{\psi}_2$ have unit norms) and choose
\begin{equation*}
	f%
	= \frac{\psi_2 + \tilde{\psi}_2}{\norm{\psi_2 + \tilde{\psi}_2}} +\frac{ \psi_2 - \tilde{\psi}_2}{\norm{\psi_2 - \tilde{\psi}_2}};
\end{equation*}
note that
\begin{equation*}
	\Inner{\psi_2 - \tilde{\psi}_2,f}\Inner{\psi_2 + \tilde{\psi}_2,f}%
	= \norm{\psi_2 - \tilde{\psi}_2} \norm{\psi_2 + \tilde{\psi}_2}.
\end{equation*}
Since also $\norm{f}\leq 2$ and $\abs{\Inner{\tilde{\psi}_2,f}} \leq 2$, continuing from \cref{eq:86} and using \cref{eq:73,eq:76} we see that for a constant $C=C(K)$
\begin{align*}
	&\abs{m_1}\norm{\psi_2 - \tilde{\psi}_2} \norm{\psi_2 + \tilde{\psi}_2}\\%
	&\qquad\leq 4\norm{\psi_1 - \tilde{\psi}_1}^2 + 4K\norm{p_{\phi,\psi}^{(3)} - p_{\tilde{\phi},\tilde{\psi}}^{(3)}}%
	+  4\abs{m_1 - \tilde{m}_1}\\
	&\qquad\leq 2C  \norm{p_{\phi,\psi}^{(3)} - p_{\tilde{\phi},\tilde{\psi}}^{(3)}}.%
\end{align*}
Observing that
\begin{align*}
	&\norm{\psi_2 - \tilde{\psi}_2}^2\norm{\psi_2 + \tilde{\psi}_2}^2\\%
	&\qquad=\norm{\psi_2 - \sgn(\Inner{\psi_2,\tilde{\psi}_2})\cdot\tilde{\psi}_2}^2 \norm{\psi_2 + \sgn(\Inner{\psi_2,\tilde{\psi}_2})\cdot \tilde{\psi}_2}^2\\
	&\qquad=\norm{\psi_2 - \sgn(\Inner{\psi_2,\tilde{\psi}_2})\cdot\tilde{\psi}_2}^2\brackets[\big]{2 + 2 \abs{\Inner{\psi_2,\tilde{\psi}_2}} }\\
	&\qquad\geq 2\norm{\psi_2 - \sgn(\Inner{\psi_2,\tilde{\psi}_2})\cdot\tilde{\psi}_2}^2,
\end{align*}
and recalling we assumed that $\abs{m_1}\geq \abs{\tilde{m}_1}$, \cref{eq:49} follows.

The proof that $\norm{p^{(3)}_{\phi,\psi}-p^{(3)}_{\tilde{\phi},\tilde{\psi}}}$ is lower bounded up to a constant by $\rho(\phi,\psi;\tilde{\phi},\tilde{\psi})$ follows by combining \cref{eq:73,eq:76,eq:78,eq:84,eq:49}

\paragraph{Upper bounding $\norm{p^{(3)}_{\phi,\psi}-p^{(3)}_{\tilde{\phi},\tilde{\psi}}}$}
From \cref{eq:24},
\begin{equation}
	\label{eq:54}
	\begin{split}
		\norm{p_{\phi,\psi}^{(3)} - p_{\tilde{\phi},\tilde{\psi}}^{(3)}}%
		&\leq \norm{\psi_{111} - \tilde{\psi}_{111}}%
                  +\abs{m_1 - \tilde{m}_1}\norm{\psi_{221} + \psi_{122}}\\%
          &\quad
            + \abs{\tilde{m}_1}\norm{\psi_{221} - \tilde{\psi}_{221}}%
            + \abs{\tilde{m}_1}\norm{\psi_{122} - \tilde{\psi}_{122}}\\
          &\quad
		+\abs{m_2 - \tilde{m}_2} \norm{\psi_{212}}%
		+ \abs{\tilde{m}_2}\norm{\psi_{212} - \tilde{\psi}_{212}}\\
		&\quad +\abs{m_3 - \tilde{m}_3} \norm{\psi_{222}}%
		+ \abs{\tilde{m}_3}\norm{\psi_{222} - \tilde{\psi}_{222}}.
	\end{split}
\end{equation}
Note that the bound remains valid if we replace the final two terms by
\[ \abs{m_3+\tilde{m}_3} \norm{\psi_{222}} +\abs{\tilde{m}_3}\norm{\psi_{222}+\tilde{\psi}_{222}};\] we focus on the case where $\sign(\ip{\psi_2,\tilde{\psi}_2})=+1$ for which the original decomposition yields suitable bounds, but the proof in the other case is similar using the alternative decomposition.

As used already in proving the lower bound on $\norm{p^{(3)}_{\phi,\psi}-p^{(3)}_{\tilde{\phi},\tilde{\psi}}}$, we note that
\begin{equation*} 
	\max(\norm{\psi_{221}},\norm{\psi_{122}},\abs{\tilde{m}_1},\norm{\psi_{212}},\abs{\tilde{m}_2},\norm{\psi_{222}},\abs{\tilde{m}_3})\leq C,
\end{equation*}
for some $C=C(K)$.
To conclude the proof it thus suffices to bound the tensor product terms $\norm{\psi_{ijk}-\tilde{\psi}_{ijk}}$ in terms of the differences $\norm{\psi_1-\tilde{\psi}_1}, \norm{\psi_2-\tilde{\psi}_2}$.
First we decompose
\begin{multline*}
	\norm{\psi_1 \otimes \psi_1 \otimes \psi_1 - \tilde{\psi}_1 \otimes \tilde{\psi}_1 \otimes \tilde{\psi}_1}%
	\leq  \norm{\psi_1 \otimes \psi_1 \otimes \psi_1 - \tilde{\psi}_1 \otimes \psi_1 \otimes \psi_1}\\%
	\quad	+ \norm{\tilde{\psi}_1 \otimes \psi_1 \otimes \psi_1 - \tilde{\psi}_1 \otimes \tilde{\psi}_1 \otimes \psi_1}%
	+ \norm{\tilde{\psi}_1 \otimes \tilde{\psi}_1 \otimes \psi_1 - \tilde{\psi}_1 \otimes \tilde{\psi}_1 \otimes \tilde{\psi}_1},
\end{multline*}
so that
\begin{align}
  \norm{\psi_{111}-\tilde{\psi}_{111}}%
  &\leq \norm{\psi_1-\tilde{\psi}_1}(\norm{\psi_1}^2+\norm{\psi_1}\norm{\tilde{\psi}_1}+\norm{\tilde{\psi}_1}^2) \nonumber\\
  \label{eq:44}
  &\leq 3K\norm{\psi_1-\tilde{\psi}_1}.
\end{align}
We also note, recalling that $\psi_2$ and $\tilde{\psi}_2$ have unit norms, that
\begin{align*}
	\norm{\psi_{221} - \tilde{\psi}_{221}}^2%
	&= \norm{\psi_{221}}^2 + \norm{\tilde{\psi}_{221}}^2 - 2\Inner{\psi_{221},\tilde{\psi}_{221}}\\
	&= \norm{\psi_1}^2 + \norm{\tilde{\psi}_1}^2 - 2\Inner{\psi_2,\tilde{\psi}_2}^2\Inner{\psi_1,\tilde{\psi}_1}\\
	&= \norm{\psi_1 - \tilde{\psi}_1}^2 +2 \Inner{\psi_1,\tilde{\psi}_1}\brackets[\big]{1-\Inner{\psi_2,\tilde{\psi}_2}^2}\\
	&\leq \norm{\psi_1 - \tilde{\psi}_1}^2  + 2 \norm{\psi_1}\norm{\tilde{\psi}_1} \abs{1-\Inner{\psi_2,\tilde{\psi}_2}^2}.
\end{align*}
Observe that
\begin{equation}
	\label{eq:98}
	\norm{\psi_2 - \tilde{\psi}_2}^2%
	= 2(1 - \Inner{\psi_2,\tilde{\psi}_2}),
\end{equation}
and hence
\begin{align*}
  \abs[\big]{1-\ip{\psi_2,\tilde{\psi}_2}^2}
  &= \abs[\big]{1+\ip{\psi_2,\tilde{\psi_2}}}\abs[\big]{1-\ip{\psi_2,\tilde{\psi}_2}}\\
  &\leq 2\abs{1-\ip{\psi_2,\tilde{\psi}_2}}\\
  &=\norm{\psi_2-\tilde{\psi}_2}^2.
\end{align*}
We deduce that
\begin{align}
	\norm{\psi_{221} - \tilde{\psi}_{221}}^2%
  &\leq \norm{\psi_1 - \tilde{\psi}_1}^2 + 2 \norm{\psi_1}\norm{\tilde{\psi}_1} \norm{\psi_2 - \tilde{\psi}_2}^2 \nonumber\\
  \label{eq:100}
  &\leq \norm{\psi_1 - \tilde{\psi}_1}^2+2K\norm{\psi_2-\tilde{\psi}_2}^2.
\end{align}
By symmetry, the same bound holds for $\norm{\psi_{122} - \tilde{\psi}_{122}}$ and for $\norm{\psi_{212}-\tilde{\psi}_{212}}$. Furthermore $\norm{\psi_{222}} = 1$, and using \eqref{eq:98},
\begin{align}
	\label{eq:101}
	\norm{\psi_{222} - \tilde{\psi}_{222}}^2%
	&= \norm{\psi_{222}}^2 + \norm{\tilde{\psi}_{222}}^2 - 2 \Inner{\psi_{222},\tilde{\psi}_{222}} \nonumber \\
	&= 2 - 2\Inner{\psi_2,\tilde{\psi}_2}^3 \nonumber \\
	&= 2\brackets[\big]{1-\ip{\psi_2,\tilde{\psi}_2}}\brackets[\big]{1+\ip{\psi_2,\tilde{\psi}_2}+\ip{\psi_2,\tilde{\psi}_2}^2}\nonumber \\
	&\leq 3\norm{\psi_2 - \tilde{\psi}_2}^2.
\end{align}
The claim follows from inserting \cref{eq:44,eq:100,eq:101} into \cref{eq:54}.

\subsection{Proof of \cref{kul:upper}}

Write $X_{1:k}$ and $Y_{1:k}$ for the vectors $(X_1,\dots, X_k)$ and $(Y_1,\dots,Y_k)$ respectively, and recall that $P_\theta^{(n)}$ denotes the law of $Y_{1:n}$ for parameter $\theta$. Without loss of generality we may assume that $\sgn(\langle \psi_2,\tilde{\psi}_2 \rangle) = +1$, 
since one may substitute  $\tilde{\phi}' = (-\tilde{\phi}_1,\tilde{\phi}_2,\tilde{\phi}_3)$ and $\tilde{\psi}' = (\tilde{\psi}_1,-\tilde{\psi}_2)$ for $\tilde{\phi}$ and $\tilde{\psi}$ and obtain $P_{\tilde{\theta}}^{(n)} = P_{\tilde{\theta'}}^{(n)}$, hence $\KL(P_{\theta}^{(n)}; P_{\tilde{\theta}}^{(n)}) = \KL(P_{\theta}^{(n)}; P_{\tilde{\theta}'}^{(n)})$, but $\sgn(\ip{ \psi_2, \tilde{\psi}_2'}) = - \sgn(\ip{\psi_2,\tilde{\psi}_2})$.
 Recall that
  $\KL(P;Q)$ is upper bounded by the chi-square distance
  $\chi^2(P,Q)=\EE_Q[(\intd P/ \intd Q -1)^2]$ (e.g.\ \cite[Lemma
  2.7]{tsybakov:2009}). Then using that
  $\PP_{\theta}(Y_1 = \cdot) = \psi_1(\cdot)$ and
  $\PP_{\tilde{\theta}}(Y_1 = \cdot) = \tilde{\psi}_1(\cdot) \geq c$, we have
  \begin{align}
	\KL(P_{\theta}^{(1)};P_{\tilde{\theta}}^{(1)})
    &\leq \sum_{y \in \mathcal{Y}} \frac{[\PP_{\theta}(Y_1 = y) - \PP_{\tilde{\theta}}(Y_1 = y)  ]^2}{\PP_{\tilde{\theta}}(Y_1 = y)}\nonumber\\%
    \label{eq:kl:29}
    &\leq \frac{\|\psi_1 - \tilde{\psi}_1\|^2}{c}.
  \end{align}
  This yields the case $n=1$ since the definition \eqref{eq:131} implies that $\rho(\phi,\psi;\tilde{\phi},\tilde{\psi})\geq \norm{\psi_1-\tilde{\psi}_1}^2$.

 Now assume that $n\geq 2$. By the chain rule for relative divergence (used inductively), we have
 \begin{multline}
	\label{eq:kl:11}
	\KL(P_{\theta}^{(n)};P_{\tilde{\theta}}^{(n)})
	= \KL(P_{\theta}^{(1)};P_{\tilde{\theta}}^{(1)})\\%
   + \sum_{k=1}^{n-1} \EE_{\theta}[\KL(\PP_{\theta}(Y_{k+1} \in \cdot \mid Y_{1:k});\PP_{\tilde{\theta}}( Y_{k+1}\in  \cdot \mid Y_{1:k}))].
  \end{multline}
  The first term was addressed above, and we now consider the remaining terms. Again bounding the KL divergence by the chi-square distance, we have
  \begin{align*}
    &\KL(\PP_{\theta}(Y_{k+1} \in \cdot \mid Y_{1:k});\PP_{\tilde{\theta}}( Y_{k+1}\in  \cdot \mid Y_{1:k}))\\%
	&\qquad\leq%
   \sum_{y\in \mathcal{Y}} \frac{[\PP_{\theta}(Y_{k+1} = y \mid Y_{1:k})-  \PP_{\tilde{\theta}}(Y_{k+1} = y \mid Y_{1:k})]^2}{\PP_{\tilde{\theta}}(Y_{k+1} = y \mid Y_{1:k})}\\
	&\qquad\leq \frac{\|\PP_{\theta}(Y_{k+1} = \cdot \mid Y_{1:k})-  \PP_{\tilde{\theta}}(Y_{k+1} = \cdot \mid Y_{1:k}) \|^2}{\min_{y\in \mathcal{Y}} \PP_{\tilde{\theta}}(Y_{k+1} = y \mid Y_{1:k}) }.
  \end{align*}
  But, for any $k \geq 1$, noting that $\PP_{\theta}(Y_{k+1} = y \mid Y_{1:k}, X_{1:k+1} = x) = f_{x_{k+1}}(y)$,
  \begin{align}
    \notag
    &\PP_{\theta}(Y_{k+1} = y\mid Y_{1:k})\\%
	\notag
	&\qquad= \sum_{x\in \{0,1\}^{k+1}} f_{x_{k+1}}(y) \PP_{\theta}(X_{1:k+1}=x \mid Y_{1:k})\\
	\label{eq:kl:3}
	&\qquad= \sum_{x\in \{0,1\}} f_{x}(y)\PP_{\theta}(X_{k+1}=x\mid Y_{1:k}),
  \end{align}
  where we have used that $Y_{k+1} \mid (Y_{1:k},X_{1:k+1})$ has the same law as
  $Y_{k+1}\mid X_{k+1}$. Therefore when $\min(\tilde{f}_0,\tilde{f}_1) \geq c$
  we must have, for all $Y_{1:k}$ and all $k \geq 1$,
  \begin{align*}
	\PP_{\tilde{\theta}}(Y_{k+1} = y \mid Y_{1:k})%
	&\geq c \sum_{x\in \{0,1\}} \PP_{\tilde{\theta}}(X_{k+1}=x\mid Y_{1:k})%
	= c.
  \end{align*}
  Hence we have established that, for all $Y_{1:k}$ and all $k \geq 1$,
  \begin{multline}
	\label{eq:kl:13}
	\KL(\PP_{\theta}(Y_{k+1} \in \cdot \mid Y_{1:k});\PP_{\tilde{\theta}}( Y_{k+1}\in  \cdot \mid Y_{1:k}))\\%
	\leq \frac{\|\PP_{\theta}(Y_{k+1} = \cdot \mid Y_{1:k})-  \PP_{\tilde{\theta}}(Y_{k+1} = \cdot \mid Y_{1:k}) \|^2}{c }.
  \end{multline}
  Let us now rewrite $\PP_{\theta}(Y_{k+1} = y \mid Y_{1:k})$ in the
  parametrisation $(\phi,\psi)$. For convenience we introduce the notation
  $P_k(x) \coloneqq \PP_{\theta}(X_{k+1} = x \mid Y_{1:k} )$ for the prediction
  filters, and we similarly write
  $\tilde{P}_k(x) \coloneqq \PP_{\tilde{\theta}}(X_{k+1} = x \mid Y_{1:k} )$. By
  \cref{eq:kl:3,rem:invert-param},
  \begin{align*}
	&\PP_{\theta}(Y_{k+1} = y \mid Y_{1:k})\\%
	&\qquad= f_0(y) P_k(0) + f_1(y)P_k(1)\\
	&\qquad= \Big(\psi_1(y) - \tfrac{1}{2}\phi_1\phi_3\psi_2(y) + \tfrac{1}{2}\phi_3\psi_2(y)  \Big)P_k(0)\\%
	&\qquad\quad%
   + \Big(\psi_1(y) - \tfrac{1}{2}\phi_1\phi_3\psi_2(y) - \tfrac{1}{2}\phi_3\psi_2(y)  \Big)P_k(1)\\
	&\qquad= \psi_1(y) + \tfrac{1}{2}\big(P_k(0) - P_k(1) - \phi_1 \big)\phi_3\psi_2(y).
  \end{align*}
  For $k\geq 1$, define
  \begin{align*}
	V_k
	&\coloneqq%
   \phi_3(P_k(0) - P_k(1) - \phi_1)
    = \phi_3(1 - 2P_k(1) - \phi_1),\\%
    \tilde{V}_k&\coloneqq \tilde{\phi}_3(1-2\tilde{P}_k(1)-\tilde{\phi}_1).
  \end{align*}
  Then combining \cref{eq:kl:11,eq:kl:29,eq:kl:13}, we obtain that $\KL(P_{\theta}^{(n)};P_{\tilde{\theta}}^{(n)})$ is bounded above by
  \begin{equation*}
    \frac{\|\psi_1 - \tilde{\psi}_1\|^2}{c}%
   + \frac{1}{c}\sum_{k=1}^{n-1}\EE_{\tilde{\theta}}%
   \Big[\Big\|\psi_1 - \tilde{\psi}_1 + \frac{1}{2}V_k \psi_2 - \frac{1}{2}\tilde{V}_k\tilde{\psi}_2 \Big\|^2 \Big]
 \end{equation*}
 which is in turn bounded above by
 \begin{equation*}
   \frac{2n-1}{c}\|\psi_1 - \tilde{\psi}_1\|^2%
   + \frac{1}{2c}\sum_{k=1}^{n-1}\EE_{\tilde{\theta}}[\|V_k\psi_2 - \tilde{V}_k\tilde{\psi}_2\|^2]
 \end{equation*}
 so that in the end, using that $\|\tilde{\psi}_2\|^2 = 1$,
  \begin{multline}
    \label{eq:kl:39}
    \KL(P_{\theta}^{(n)};P_{\tilde{\theta}}^{(n)})
	\leq \frac{2n-1}{c}\|\psi_1 - \tilde{\psi}_1\|^2\\%
   + \frac{\|\psi_2 - \tilde{\psi}_2\|^2}{c}\sum_{k=1}^{n-1}\EE_{\tilde{\theta}}[V_k^2]%
   + \frac{1}{c}\sum_{k=1}^{n-1}\EE_{\tilde{\theta}}[(V_k - \tilde{V}_k)^2].
  \end{multline}

  Let us now find an inductive formula for $V_k$. Let us define $P_k(x) \coloneqq \PP_{\theta}(X_{k+1} = x \mid Y_{1:k})$. First we observe that for any $k \geq 2$, since $\PP_{\theta}(X_{k+1} = x \mid Y_{1:k},X_k = x') = \PP_{\theta}(X_{k+1} = x \mid X_k = x')$,
  \begin{align*}
    P_k(x)
	&= \sum_{x' \in \{0,1\}} \PP_{\theta}(X_{k+1} = x \mid X_k = x')\PP_{\theta}(X_k = x' \mid Y_{1:k})\\
	&= \sum_{x' \in \{0,1\}} Q_{x',x}\PP_{\theta}(X_k = x' \mid Y_{1:k-1},Y_k),
  \end{align*}
  and we further calculate
  \begin{align*}
	&\PP_{\theta}(X_k = x' \mid Y_{1:k-1},Y_k = y_k)\\%
	&= \frac{\PP_{\theta}(X_k=x',Y_k = y_k \mid Y_{1:k-1})}{\PP_{\theta}(Y_k \mid Y_{1:k-1})}\\
	&= \frac{f_{x'}(y_k)\PP_{\theta}(X_k = x' \mid Y_{1:k-1})}{ \sum_{x'' \in \{0,1\}}\PP_{\theta}(y_k \mid Y_{1:k-1},X_{k} = x'')\PP_{\theta}(X_k = x'' \mid Y_{1:k-1}) }\\
	&= \frac{f_{x'}(y_k)P_{k-1}(x')}{\sum_{x'' \in \{0,1\}}f_{x''}(y_k)P_{k-1}(x'')  }.
  \end{align*}
  Similarly, for $k=1$,
  \begin{align*}
	&\PP_{\theta}(X_2 = x \mid Y_1 = y_1)\\%
	&= \frac{\PP_{\theta}(X_2 = x, Y_1=y_1)}{\PP_{\theta}(y_1)}\\
	&= \frac{\sum_{x' \in \{0,1\}}\PP_{\theta}(X_2 = x, Y_1 = y_1 \mid X_1 = x')\PP_{\theta}(X_1 = x') }{\sum_{x'\in \{0,1\}}f_{x'}(y_1)\PP_{\theta}(X_1=x') }\\
	&= \frac{\sum_{x' \in \{0,1\}} f_{x'}(y_1) Q_{x',x} \PP_{\theta}(X_1 = x') }{\sum_{x'\in \{0,1\}}f_{x'}(y_1)\PP_{\theta}(X_1=x')}.
  \end{align*}
  To summarise, we have proved the recursive formula 
  \begin{align*}
	P_k(x)%
	&=
   \begin{cases}
	 \frac{\sum_{x' \in \{0,1\}} Q_{x',x} f_{x'}(Y_k)P_{k-1}(x')}{\sum_{x' \in \{0,1\}} f_{x'}(Y_k)P_{k-1}(x')} &\mathrm{if}\ k \geq 2,\\
	 \frac{\sum_{x' \in \{0,1\}} f_{x'}(Y_1) Q_{x',x} \PP_{\theta}(X_1 = x') }{\sum_{x'\in \{0,1\}}f_{x'}(Y_1)\PP_{\theta}(X_1=x')} &\mathrm{if}\ k=1.
   \end{cases}
  \end{align*}
  Therefore when $k \geq 2$, $V_k$ equals
  \begin{equation*}
    \phi_3\Big(1
   - 2\frac{Q_{0,1} f_0(Y_k)P_{k-1}(0) + Q_{1,1}f_1(Y_k)P_{k-1}(1)}{f_0(Y_k)P_{k-1}(0) + f_1(Y_k)P_{k-1}(1)} - \phi_1\Big)
 \end{equation*}
 which rewrites as
  \begin{align*}
	 \phi_3\Big(1 - \frac{2pf_{0}(Y_k) + 2P_{k-1}(1)[(1-q)f_1(Y_k) - pf_0(Y_k) ]}{f_0(Y_k) + P_{k-1}(1)[f_1(Y_k) - f_0(Y_k)]} - \phi_1\Big).
  \end{align*}
  We write for convenience
  \begin{equation*}
  	D_k%
	= f_0(Y_k) + P_{k-1}(1)[f_1(Y_k) - f_0(Y_k)],
  \end{equation*}
  and,
  \begin{multline*}
    N_k
    = (1 - \phi_1)\phi_3D_k - 2\phi_3pf_{0}(Y_k)\\
    - 2\phi_3P_{k-1}(1)[(1-q)f_1(Y_k) - pf_0(Y_k) ],
  \end{multline*}
  so that $V_k = N_k/D_k$. We rewrite the previous expressions solely in terms of the parameters $(\phi,\psi)$ [recall the inversion formulae in \cref{rem:invert-param}].
  First, $D_k$ is equal to
  \begin{align*}
	&\psi_1(Y_k) - \frac{1}{2}\phi_1\phi_3\psi_2(Y_k) + \frac{1}{2}\phi_3\psi_2(Y_k)%
   - P_{k-1}(1) \phi_3\psi_2(Y_k)\\
	&= \psi_1(Y_k) + \frac{1}{2}\phi_3\psi_2(Y_k)[1 - 2 P_{k-1}(1)  - \phi_1  ]\\
	&= \psi_1(Y_k) + \frac{V_{k-1}}{2} \psi_2(Y_k).
  \end{align*}
  Also,
  \begin{align*}
	2p f_0%
	&= (1 - \phi_1)(1 - \phi_2)\Big[\psi_1 - \frac{1}{2}\phi_1\phi_3\psi_2 + \frac{1}{2}\phi_3\psi_2 \Big],
  \end{align*}
  and $(1-q)f_1 - pf_0$
  \begin{align*}
	&= (1 - q - p)f_1 - p(f_0 - f_1)\\
	&= \phi_2\Big(\psi_1 - \frac{1}{2}\phi_1\phi_3\psi_2 - \frac{1}{2}\phi_3\psi_2   \Big)%
   - \frac{1}{2}(1 - \phi_2)(1 - \phi_1)\phi_3\psi_2\\
	&= \phi_2 \psi_1 - \frac{1}{2}\phi_3\psi_2\Big(\phi_2 + \phi_1\phi_2 + (1-\phi_2)(1-\phi_1) \Big)\\
	&= \phi_2\psi_1 - \frac{1}{2}\phi_3\psi_2\Big(1 - \phi_1 + 2\phi_1\phi_2\Big).
  \end{align*}
  Using the last three displays and the fact that
  $2\phi_3P_{k-1}(1) = -V_{k-1} + \phi_3 - \phi_1\phi_3$, we obtain that
  \begin{align*}
	N_k%
	&=(1-\phi_1)\phi_3\Big[\psi_1(Y_k) + \frac{V_{k-1}}{2}\psi_2(Y_k) \Big]\\%
        &\quad-\phi_3(1 - \phi_1)(1 - \phi_2)\\
    &\qquad\times \Big[\psi_1(Y_k) - \frac{1}{2}\phi_1\phi_3\psi_2(Y_k) + \frac{1}{2}\phi_3\psi_2(Y_k) \Big]\\%
        &\quad+(V_{k-1} - \phi_3 + \phi_1\phi_3 )\\
    &\qquad\times \Big[\phi_2\psi_1(Y_k) - \frac{1}{2}\phi_3\psi_2(Y_k)\Big(1 - \phi_1 + 2\phi_1\phi_2\Big) \Big].
  \end{align*}
  Grouping together the terms proportional to $V_{k-1}$ and the others, $N_k$ equals
  \begin{multline*}
    V_{k-1}\Big[\phi_2\psi_1(Y_k)\\
    + \frac{1}{2}\Big(-\phi_3 + \phi_1\phi_3 - 2\phi_1\phi_2\phi_3 + (1-\phi_1)\phi_3 \Big)\psi_2(Y_k) \Big]\\
   + \psi_1(Y_k)\Big[
 (1-\phi_1)\phi_3 - \phi_3(1 - \phi_1)(1 - \phi_2) - \phi_3(1-\phi_3)\phi_2
   \Big]\\
   + \frac{1}{2}\psi_2(Y_k)\Big[-\phi_3^2(1-\phi_1)^2(1-\phi_2)%
   + \phi_3^2(1-\phi_1)(1 - \phi_1 + 2\phi_1\phi_2)
   \Big].
  \end{multline*}
  We remark that
  \begin{align*}
	-\phi_3 + \phi_1\phi_3 - 2\phi_1\phi_2\phi_3 + (1-\phi_1)\phi_3%
	&= -2 \phi_1\phi_2\phi_3,
  \end{align*}
  and
  \begin{align*}
	(1-\phi_1)\phi_3 - \phi_3(1 - \phi_1)(1 - \phi_2) - \phi_3(1-\phi_3)\phi_2%
	&= 0,
  \end{align*}
  and
  \begin{align*}
	&-\phi_3^2(1-\phi_1)^2(1-\phi_2)%
   + \phi_3^2(1-\phi_1)(1 - \phi_1 + 2\phi_1\phi_2)\\
	&= -\phi_3^2(1-\phi_1)^2(1-\phi_2) + \phi_3^2(1 - \phi_1)^2 + 2\phi_1\phi_2\phi_3^2(1-\phi_1)\\
	&= \phi_2\phi_3^2(1 - \phi_1)^2 + 2 \phi_1\phi_2\phi_3^2(1 - \phi_1)\\
	&= \phi_2\phi_3^2[1 - 2\phi_1 + \phi_1^2 + 2\phi_1 - 2\phi_1^2]\\
	&= \phi_2\phi_3^2(1 - \phi_1^2).
  \end{align*}
  That is,
  \begin{multline*}
	N_k%
    = \phi_2V_{k-1} \big[\psi_1(Y_k) - \phi_1\phi_3\psi_2(Y_k)\big]\\%
   + \frac{\phi_2\phi_3^2(1 - \phi_1^2)}{2}\psi_2(Y_k),
  \end{multline*}
  which means that for $k \geq 2$,
  \begin{align*}
	V_k%
	&= \frac{\phi_2[\psi_1(Y_k) - \phi_1\phi_3\psi_2(Y_k)]V_{k-1} + 2r(\phi)\psi_2(Y_k)}{\psi_1(Y_k) + \frac{1}{2}\psi_2(Y_k)V_{k-1}}.
  \end{align*}
  For $k =1$, recalling that $Y_1\sim \psi_1$ and $\PP_\theta(X_1=1)=p/(p+q)$, we have
  \begin{align*}
	V_1%
	&= \phi_3(1 - 2P_1(1) - \phi_1)\\
	&= \phi_3\Big(1 - \phi_1 - 2 \frac{f_0(Y_1)\frac{p q}{p+q} + f_1(Y_1)\frac{(1-q)p}{p+q}}{\psi_1(Y_1)} \Big)\\
	&= \phi_3\Big(1 - \phi_1 - 2 \frac{ f_1(Y_1)\frac{p}{p+q}
   + \phi_3 \psi_2(Y_1)\frac{p q}{p+q}}{\psi_1(Y_1)} \Big)\\
	&= -\phi_3^2\psi_2(Y_1) \frac{ -\frac{(1-\phi_1)(1+\phi_1)}{2} + \frac{(1-\phi_2)(1-\phi_1^2)}{2} }{\psi_1(Y_1)}\\
	&= \frac{\frac{1}{2}(1-\phi_1^2)\phi_2\phi_3^2\psi_2(Y_1)}{\psi_1(Y_1)},
  \end{align*}
  where to go from the third to fourth line we have used the expressions derived in  \cref{rem:invert-param} for $f_1$, $p$ and $q$. Letting $m_1 = r(\phi)$, $m_2 = r(\phi)\phi_2$, and
  $m_3 = r(\phi)\phi_1\phi_2\phi_3$, we have obtained the inductive formula
  \begin{align*}
	V_k =
	\begin{cases}
	  \frac{[m_2\psi_1(Y_k) - m_3\psi_2(Y_k)] \frac{V_{k-1}}{m_1} + 2m_1 \psi_2(Y_k) }%
	  {\psi_1(Y_k) + \frac{1}{2}\psi_2(Y_k)V_{k-1}} &\mathrm{if}\ k\geq 2,\\
	  \frac{2m_1 \psi_2(Y_1)}{\psi_1(Y_1)} &\mathrm{if}\ k=1.
	\end{cases}
  \end{align*}
  The strategy is now to bound $V_k - \tilde{V}_k$ for $k \geq 2$ in terms of
  $V_1 - \tilde{V}_1$ using the above inductive formula. To do so, we will need
  an upper bound for $V_k$ (respectively $\tilde{V}_k$) which we establish
  now. We claim that $|V_k| \leq 4|m_1|/c$ for all $k\geq 1$ provided $\epsilon_0$ is taken small enough. Indeed,
  $|\psi_2(Y_1)| \leq \|\psi_2\| = 1$ and $c\leq \psi_1(Y_1) \leq 1$, hence
  $|V_1| \leq 2 |m_1|/c \leq 4|m_1|/c$. Now suppose that
  $|V_{k-1}| \leq 4|m_1|/c$; then, under the assumptions of the \lcnamecref{prop:kul} with for $\epsilon_0=\epsilon_0(c)$ small enough, using \cref{eqn:phi-bounds} to see that $|m_1| \leq |\phi_2| \leq \epsilon_0$, $|\phi_1 \phi_2 \phi_3| \leq \sqrt{2}|\phi_2| \leq \sqrt{2}\epsilon_0$, we have
  \begin{align}
	\notag
	|V_k|%
	&\leq \frac{(|m_2| + |m_3|)\frac{4}{c} + 2|m_1|}{c - \frac{1}{2}\frac{4|m_1|}{c} }\\
	\notag
	&\leq |m_1| \frac{(|\phi_2| + |\phi_1\phi_2\phi_3|)\frac{4}{c} + 2 }{c - 4 |m_1|/c}\\
	\label{eq:kl:57}
	&\leq \frac{4|m_1|}{c}.
  \end{align}
  Similarly $|\tilde{V}_k| \leq 4 |\tilde{m}_1|/c$ for all $k \geq 1$. We are
  now in position to bound $V_k - \tilde{V}_k$ for $k\geq 2$. Recall
  $V_k = N_k/D_k$ and similarly write $\tilde{V}_k = \tilde{N}_k/\tilde{D}_k$. Then
  \begin{align*}
	V_k - \tilde{V}_k%
	&= \frac{N_k}{D_k} - \frac{\tilde{N}_k}{\tilde{D}_k}\\%
	&= \frac{\tilde{D}_kN_k - D_k \tilde{N}_k}{D_k\tilde{D}_k}\\%
	&= \frac{(\tilde{D}_k - D_k)N_k}{D_k\tilde{D}_k}%
	+ \frac{N_k - \tilde{N}_k}{\tilde{D}_k}.
  \end{align*}
  As when bounding $\abs{V_k}$, we can assume that $\epsilon_0$ is small enough to have $D_k \geq c/2$ and $\tilde{D}_k \geq c/2$, and
  \begin{align*}
	|N_k|%
	&\leq (|m_2| + |m_3|)\frac{4}{c} + 2|m_1|
	\leq 4|m_1|.
  \end{align*}
  Therefore,
  \begin{align}
	\label{eq:kl:59}
	|V_k - \tilde{V}_k|%
	&\leq \frac{16|m_1|}{c^2}|D_k - \tilde{D}_k|%
   + \frac{2}{c}|N_k - \tilde{N}_k|.
  \end{align}
  But, recalling the definition \eqref{eq:131} of $\rho$, we have
  \begin{align*}
    \notag
	&|D_k - \tilde{D}_k|\\%
	&\quad= \Big| \psi_1(Y_k) - \tilde{\psi}_1(Y_k) + \frac{1}{2}\Big(\psi_2(Y_k)V_{k-1} - \tilde{\psi}_2(Y_k)\tilde{V}_{k-1} \Big) \Big|\\
	\notag
	&\quad\leq |\psi_1(Y_k) - \tilde{\psi}_1(Y_k)|%
          + \frac{|\psi_2(Y_k)|}{2}|V_{k-1} - \tilde{V}_{k-1}|\\%
    &\qquad
   + \frac{|\tilde{V}_{k-1}|}{2}|\psi_2(Y_k) - \tilde{\psi}_2(Y_k)|\\
   \notag
	&\quad\leq \|\psi_1 - \tilde{\psi}_1\| + \frac{|V_{k-1} - \tilde{V}_{k-1}|}{2}%
   + \frac{2|m_1|}{c}\|\psi_2 - \tilde{\psi}_2\|\\
	&\quad\leq \Big(1 + \frac{2}{c}\Big)\rho(\phi,\psi;\tilde{\phi},\tilde{\psi})%
   + \frac{|V_{k-1} - \tilde{V}_{k-1}|}{2},
  \end{align*}
  and the difference $N_k - \tilde{N}_k $ is equal to
  \begin{align*}
	& [m_2\psi_1(Y_k) - \tilde{m}_2\tilde{\psi}_1(Y_k) - m_3\psi_2(Y_k) + \tilde{m}_3\tilde{\psi}_2(Y_k)] \frac{V_{k-1}}{m_1}\\
	&\quad%
          + \Big(\frac{V_{k-1}}{m_1} - \frac{\tilde{V}_{k-1}}{\tilde{m}_1} \Big)\Big(\tilde{m}_2\tilde{\psi}_1(Y_k) - \tilde{m}_3\tilde{\psi}_2(Y_k)  \Big)\\%
    &\quad
   + 2m_1\psi_2(Y_k) - 2\tilde{m}_1 \tilde{\psi}_2(Y_k),
  \end{align*}

  from which we deduce that $\abs{N_k-\tilde{N}_k}$ is upper bounded by 
\begin{multline*}
\Big(|m_2|\|\psi_1 - \tilde{\psi}_1\| + |m_2 - \tilde{m}_2|
	+ |m_3\|\psi_2 - \tilde{\psi}_2\| + |m_3 - \tilde{m}_3|
	\Big)\Big|\frac{V_{k-1}}{m_1} \Big| \\	+ \Big(|V_{k-1} - \tilde{V}_{k-1}| + |m_1 - \tilde{m}_1| \Big|\frac{V_{k-1}}{m_1} \Big| \Big)\frac{|\tilde{m}_2| + |\tilde{m}_3|}{|\tilde{m}_1|} \\
	+ 2|m_1| \|\psi_2 - \tilde{\psi}_2\| + 2|m_1 - \tilde{m}_1|, \end{multline*} 
which, for $\epsilon_0>0$ small enough, is further upper bounded by
\begin{multline*}
	 4|m_1 - \tilde{m}_1|%
	+ \frac{4|m_2 - \tilde{m}_2|}{c}%
	+ \frac{4|m_3 - \tilde{m}_3|}{c}%
	+ \frac{4|m_2|\|\psi_1 - \tilde{\psi}_1\|}{c}
	\\ + \Big(\frac{4|m_3|}{c|m_1|} + 2\Big)|m_1|\|\psi_2 - \tilde{\psi}_2\| + \frac{c|V_{k-1} - \tilde{V}_{k-1}|}{8}.
\end{multline*}
 Inserting these bounds into \cref{eq:kl:59}, we
  find that there is a constant $B$ depending solely on $c$ such that for all
  $k \geq 2$
  \begin{align*}
	|V_k - \tilde{V}_k|%
	&\leq B \rho(\phi,\psi;\tilde{\phi},\tilde{\psi}) + \Big(\frac{1}{4} + \frac{8|m_1|}{c^2} \Big)|V_{k-1} - \tilde{V}_{k-1}|\\
	&\leq B \rho(\phi,\psi;\tilde{\phi},\tilde{\psi}) + \frac{1}{2}|V_{k-1} - \tilde{V}_{k-1}|,
  \end{align*}
  again when $\epsilon_0$ is small enough. Hence for $k \geq 2$,
  \begin{align*}
  |V_k - \tilde{V}_k|
  &\leq 2(1 - 2^{-k})B \rho(\phi,\psi;\tilde{\phi},\tilde{\psi})%
  + 2^{1-k} |V_1 - \tilde{V}_1|\\
  &\leq \frac{3B}{2}\rho(\phi,\psi;\tilde{\phi},\tilde{\psi})%
  + \frac{|V_1 - \tilde{V}_1|}{2}.
  \end{align*}
  To finish the proof, it is enough to show that $|V_1 - \tilde{V}_1|$ is bounded by a constant multiple of $\rho(\phi,\psi;\tilde{\phi},\tilde{\psi})$, which follows from its definition and the same arguments as above. Thus for some constant $B' > 0$ depending only on $c$
  \begin{equation}
    \label{eq:kl:98}
    \max_{k=1,\dots,n}|V_k - \tilde{V}_k|
    \leq B'\rho(\phi,\psi;\tilde{\phi},\tilde{\psi}).%
  \end{equation}
  The conclusion follows by combining \cref{eq:kl:39,eq:kl:57,eq:kl:98}.

\subsection{Proof of \cref{thm:minimax-upper-bound}}
We start with the proof of \cref{lem:estimation-of-p3}, that $p^{(3)}$ can be estimated at a parametric rate.
\begin{proof}[Proof of \cref{lem:estimation-of-p3}]
	We use a Markov chain concentration result from \cite{Paulin2015}.
	Theorem 3.4 therein (but note there is an updated
	version of the paper on arXiv) tells us that for any stationary Markov chain $\bm{Z}=(Z^{(1)},Z^{(2)},\dots)$ of pseudo-spectral gap $\gamma_{\textnormal{ps}}$ (defined as in \cite{Paulin2015}) and any function $h$ satisfying $\EE [h(Z^{(1)})^2] \leq \sigma^2$ and $\norm{h}_\infty\leq b$,
	\begin{multline}\label{eqn:MC-concentration} \PP(\abs{\sum_{i=1}^n h(Z^{(i)})- E h(Z^{(1)})} \geq x)\\
          \leq 2 \exp \brackets[\Big]{-\frac{x^2 \gamma_{\textnormal{ps}}}{8(n+1/\gamma_{\textnormal{ps}})\sigma^2+20bx}}. \end{multline}
	We apply to the chain $\bm{Z}$ 
	defined by $Z^{(n)}=(X_n,X_{n+1},X_{n+2},Y_n,Y_{n+1},Y_{n+2})$; we begin by showing the pseudo-spectral gap of this chain is bounded from below. Proposition 3.4 of the same reference shows that the reciprocal of the pseudo-spectral gap of any chain is bounded above by twice the mixing time $t_{\textnormal{mix}}^{\bm{Z}}$ of the chain, defined as the first time that the law of $\bm{Z}$, regardless of the starting distribution, is within 1/4 of its invariant distribution in total variation distance. We note that $t_{\textnormal{mix}}^{\bm{Z}}$ is equal to the mixing time $t_{\textnormal{mix}}^{\bm{X}^{(3)}}$ of the chain $((X_n,X_{n+1},X_{n+2})_{n\geq 0})$. This latter quantity is upper bounded by $t_{\textnormal{mix}}^{\bm{X}}+2$ where $t_{\textnormal{mix}}^{\bm{X}}$ denotes the mixing time of the chain $\bm{X}$ itself. Finally, the matrix $Q$ has eigenvalues $1$ and $\phi_2$, and an explicit computation yields that
	$\max_{ij} \abs{Q^n_{ij}-\pi_j} = \max_i (\pi_i) \abs{\phi_2}^n$ so that the mixing time of $\bm{X}$ is at most
	\[ \ceil[\Big]{\frac{\log 4}{\log(1/\abs{\phi_2})}}\leq \ceil[\Big]{\frac{\log 4}{\log(1/(1-L))}}\leq \ceil[\Big]{\frac{\log 4}{L}},\] which is a constant since $L$ is fixed. The pseudo-spectral gap of the chain $\bm{Z}$ is thus lower bounded by some constant $\gamma=\gamma(L)$.

	Applying \cref{eqn:MC-concentration} with $h(Z)=\II\braces{Z_4=a,Z_5=b,Z_6=c}$, which satisfies $\EE h^2 \leq 1$ and $\norm{h}_\infty \leq 1$, we see that
        \begin{multline*}
          \PP_{\phi,\psi}\brackets[\big]{ n \abs{\hat{p}^{(3)}(a,b,c)-p_{\phi,\psi}^{(3)}(a,b,c)}\geq x}\\
          \leq 2\exp \brackets[\Big]{-\frac{\gamma x^2 }{8n+8/\gamma +20x}},
        \end{multline*}
	hence for some constant $c'>0$
        \begin{multline*}
          \PP_{\phi,\psi}\brackets[\big]{ \abs{\hat{p}^{(3)}(a,b,c)-p_{\phi,\psi}^{(3)}(a,b,c)}\geq x/\sqrt{n}}\\
          \leq 2\exp\brackets[\Big]{-c' \min \brackets[\Big]{x^2,x^2 n,x\sqrt{n}}}.
        \end{multline*}
	Using that $\norm{\hat{p}^{(3)}-p^{(3)}}\leq K^3\max_{a,b,c}\abs{\hat{p}^{(3)}(a,b,c)-p^{(3)}(a,b,c)}$ and a union bound, we deduce for some $C=C(K,L)$ and for $x\leq \sqrt{n}$ that
	\[ \PP_{\phi,\psi}(\norm{\hat{p}^{(3)}-p^{(3)}}\geq K^3 x/\sqrt{n})\leq 2K^3\exp(-Cx^2).\]
	For $x\geq 1$ we may absorb the factor $2K^3$ into the exponential by changing the constant $C$, and by replacing $x$ with $C'x$ we can remove this constant, yielding the result in the case where $C'x \leq \sqrt{n}$. In the other case, since $\norm{\hat{p}^{(3)}-p^{(3)}}$ is bounded (by $K^{3/2}$), by increasing the constant $C'$ if necessary we have $C'x/\sqrt{n}\geq K^{3/2}$ so that the probability in question is equal to $0\leq e^{-x^2}$.
\end{proof}

To prove \cref{thm:minimax-upper-bound}, observe that by \cref{lem:estimation-of-p3} there exist events $\mathcal{A}_n$ of probability at least $e^{-x^2}$ on which
\[ \norm{\hat{p}^{(3)}_n-p_{\phi,\psi}^{(3)}}\leq Cx/\sqrt{n}.\]
The true parameter $(\phi,\psi)$ lies in $\Phi_L$ so that any estimators constructed in \cref{eqn:minimum-distance} satisfy
\[ \norm{p^{(3)}_{\hat{\phi},\hat{\psi}}-\hat{p}^{(3)}_n} \leq 2\norm{p^{(3)}_{\phi,\psi}-\hat{p}^{(3)}_n},\]
and hence on the event $\mathcal{A}_n$ further satisfy
\begin{align*}
  \norm{ p^{(3)}_{\hat{\phi},\hat{\psi}} - p^{(3)}_{\phi,\psi}}
  &\leq \norm{p^{(3)}_{\hat{\phi},\hat{\psi}}-\hat{p}^{(3)}_n}+\norm{p^{(3)}_{\phi,\psi}-\hat{p}^{(3)}_n}\\
  &\leq 3\norm{p^{(3)}_{\phi,\psi}-\hat{p}^{(3)}_n}\\
  &\leq 3Cx/\sqrt{n},
\end{align*}
By \cref{pro:local-equivalence} we deduce for a constant $C'$ that $\rho(\hat{\phi},\hat{\psi};\phi,\psi)\leq C'x/\sqrt{n}$ on $\mathcal{A}_n$.
For estimating $\psi$, observe that $\norm{\hat{\psi}_1-\psi_1}\leq \rho(\hat{\phi},\hat{\psi};\phi,\psi)$ and $\abs{r(\phi)}\min(\norm{\hat{\psi}_2-\psi_2},\norm{\hat{\psi}_2+\psi_2})\leq \rho(\hat{\phi},\hat{\psi};\phi,\psi)$. The upper bound for estimating $\psi_1$ is immediate and, recalling from \cref{eqn:phi-bounds} that $\abs{r(\phi)}\geq \delta\epsilon\zeta^2/4$, we also deduce the bound for $\psi_2$.

For the bounds on $\phi$, observe firstly that it suffices to prove the upper bounds on the absolute risk since, taking $\phi_2$ as an example, for $(\phi,\psi)\in \Phi_L(\delta,\epsilon,\zeta)$ we have
\begin{align}
  &\PP_{\phi,\psi}\brackets[\Big]{\abs{\hat{\phi}_2/\phi_2-1}^2\geq \frac{C}{n\delta^2\epsilon^4\zeta^4}}\nonumber\\
  &\qquad\qquad=  \PP_{\phi,\psi}\brackets[\Big]{\abs{\hat{\phi}_2-\phi_2}^2 \geq \frac{C\phi_2^2}{n\delta^2\epsilon^4\zeta^4}}\nonumber\\
  \label{eqn:absolute-upper-implies-relative}
  &\qquad\qquad\leq \PP_{\phi,\psi}\brackets[\Big]{\abs{\hat{\phi}_2-\phi_2}\geq \frac{C\epsilon^2}{n\delta^2\epsilon^4\zeta^4}}.
\end{align}
(See also after \cref{eqn:required-bounds-for-phi1} for a similar argument with $\phi_1$.)
Define
\begin{multline*}
	\omega_1(\phi,\psi;\eta)%
	\coloneqq\\ \sup\Set*{\abs{\phi_1 - \sgn(\Inner{\psi_2,\tilde{\psi}_2})\cdot \tilde{\phi}_1} \given \rho(\phi,\psi;\tilde{\phi},\tilde{\psi}) \leq \eta },
\end{multline*}
and for $j=2,3$
\begin{equation*}
	\omega_j(\phi,\psi;\eta)%
	\coloneqq%
	\sup\Set*{\abs{\phi_j - \tilde{\phi}_j} \given \rho(\phi,\psi;\tilde{\phi},\tilde{\psi}) \leq \eta }.
\end{equation*}
We have the following.

\begin{proposition}
	\label{pro:modulus:pointwise}
	Let $\eta \in [0,1]$. There exist constants $c,C$ for which the following hold.
	\begin{align*}
		\eta <c (1-\phi_1^2)\phi_2^2\phi_3^3%
		&\implies%
		\omega_1(\phi,\psi;\eta)%
		\leq \frac{C \eta}{\phi_2^2\phi_3^3}, \\
		\eta < c (1-\phi_1^2)\abs{\phi_2}\phi_3^2
		&\implies%
		\omega_2(\phi,\psi;\eta)%
		\leq \frac{C \eta}{(1-\phi_1^2)\abs{\phi_2}\phi_3^2}, \\
		\eta < c (1-\phi_1^2)\phi_2^2\phi_3^3
		&	\implies%
		\omega_3(\phi,\psi;\eta)%
		\leq \frac{C \eta}{(1 - \phi_1^2)\phi_2^2\phi_3^2}.
	\end{align*}
\end{proposition}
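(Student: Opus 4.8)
The plan is to invert the map $\phi\mapsto m(\phi)$ explicitly and to control how far a preimage coordinate $\phi_j$ can move as $m$ varies within $\eta$ in $\ell^{\infty}$. First a normalisation: exactly as at the start of the proof of \cref{kul:upper}, replacing $(\tilde\phi,\tilde\psi)$ by $\bigl((-\tilde\phi_1,\tilde\phi_2,\tilde\phi_3),(\tilde\psi_1,-\tilde\psi_2)\bigr)$ changes neither $p^{(3)}$ (by \cref{eq:17}) nor, as one checks from \cref{eq:131}, $\rho$, while flipping the sign of $\langle\psi_2,\tilde\psi_2\rangle$ and of $\tilde\phi_1$; so it suffices to treat $\sgn\langle\psi_2,\tilde\psi_2\rangle=+1$, the bound for $\omega_1$ (whose definition involves $\abs{\phi_1-\sgn\langle\psi_2,\tilde\psi_2\rangle\tilde\phi_1}$) then following from this case via that substitution. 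Once $\sgn\langle\psi_2,\tilde\psi_2\rangle=+1$, choosing $\tilde\psi=\psi$ satisfies the $\psi_1$- and $\psi_2$-conditions in \cref{eq:131}, so the constraint $\rho(\phi,\psi;\tilde\phi,\tilde\psi)\le\eta$ reduces to $\max_i\abs{m_i(\phi)-m_i(\tilde\phi)}\le\eta$ with $\tilde\phi$ a valid triple ($\abs{\tilde\phi_1}\le1$, $0\le\tilde\phi_3\le\sqrt2$, $\abs{\tilde\phi_2}\le1$), and each $\omega_j$ becomes the supremum of $\abs{\phi_j-\tilde\phi_j}$ over such $\tilde\phi$.

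Next, record the inversion formulae implicit in \cref{eqn:m}: with $r=m_1$ one has $\phi_2=m_2/m_1$, $\phi_3^2=4m_1^2/m_2+(m_3/m_2)^2$ and $\phi_1=(m_3/m_2)/\phi_3$; equivalently $\phi_3=\norm{(2m_1/\sqrt{m_2},\,m_3/m_2)}$ and, up to the sign of its second entry, $(\phi_1,\sqrt{1-\phi_1^2})$ is the unit vector along $v(\phi):=(m_3,\,2m_1\sqrt{m_2})$, which has $\norm{v(\phi)}=\phi_3m_2$. In each of the three cases the hypothesis $\eta<c\,(\cdots)$, with $c$ a small absolute constant, forces $\tilde m_1\asymp m_1$ and $\tilde m_2\asymp m_2$ (and, for $\omega_1$ on the range $\abs{\phi_1}\ge1/\sqrt2$, also $\tilde m_3\asymp m_3$); from these one reads off $\tilde\phi_3\gtrsim\phi_3$, $1-\tilde\phi_1^2\lesssim1-\phi_1^2$, and $\tilde\phi_1,\tilde\phi_2$ sharing the signs of $\phi_1,\phi_2$ (splitting according as $1-\phi_1^2$ is bounded below or close to $0$). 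Together with the a priori bounds $m_i\lesssim1$, $m_1^2/m_2=\tfrac14(1-\phi_1^2)\phi_3^2\le\tfrac12$, $\abs{m_3/m_2}=\abs{\phi_1}\phi_3\le\sqrt2$ and $m_2\le\tfrac12\phi_2^2$, these comparabilities are all that is needed to make the ensuing quotient- and square-root-perturbation estimates close.

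The bounds on $\omega_2$ and $\omega_3$ are then short. For $\omega_2$, the identity $\phi_2-\tilde\phi_2=(m_2-\tilde m_2)/m_1+\tilde\phi_2(\tilde m_1-m_1)/m_1$ with $\abs{\tilde\phi_2}\le1$ gives $\omega_2\lesssim\eta/\abs{m_1}$, which is the claim since $\abs{m_1}=\tfrac14(1-\phi_1^2)\abs{\phi_2}\phi_3^2$. For $\omega_3$, $1$-Lipschitzness of the Euclidean norm yields $\abs{\phi_3-\tilde\phi_3}\le\abs{2m_1/\sqrt{m_2}-2\tilde m_1/\sqrt{\tilde m_2}}+\abs{m_3/m_2-\tilde m_3/\tilde m_2}$; the second summand is $\lesssim\eta/m_2$ by the quotient estimate and $\abs{m_3/m_2}\le\sqrt2$, and the first is also $\lesssim\eta/m_2$ after splitting off the numerator and denominator perturbations of $2m_1/\sqrt{m_2}$ and using $\sqrt{m_2}\le\abs{\phi_2}/\sqrt2$ to absorb the stray factor $\abs{\phi_2}^{-1}=\abs{m_1}/m_2$. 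Since $m_2=\tfrac14(1-\phi_1^2)\phi_2^2\phi_3^2$, this is the stated bound.

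The delicate case, and the main obstacle, is $\omega_1$: handled naively — via $\phi_1=(m_3/m_2)/\phi_3$, or via the unit-vector formula directly — one is left with a spurious factor $(1-\phi_1^2)^{-1/2}$, which blows up at the degenerate edge $\phi_1\to\pm1$. The remedy is to keep the geometry: write $\phi_1=\cos\beta$ with $\beta$ the polar angle of $v(\phi)$. Assume first that $\phi_1,\tilde\phi_1$ have the same sign, so $\beta,\tilde\beta$ lie in a common quadrant and $\abs{\sin}$ is monotone between them; then
\begin{equation*}
  \abs{\phi_1-\tilde\phi_1}=\abs{\cos\beta-\cos\tilde\beta}\le\max\bigl(\abs{\sin\beta},\abs{\sin\tilde\beta}\bigr)\,\abs{\beta-\tilde\beta}\lesssim\sqrt{1-\phi_1^2}\;\abs{\beta-\tilde\beta},
\end{equation*}
using $1-\tilde\phi_1^2\lesssim1-\phi_1^2$, while $\abs{\beta-\tilde\beta}\le\tfrac\pi2\sin\abs{\beta-\tilde\beta}=\tfrac\pi2\,\abs{v(\phi)\times v(\tilde\phi)}/\bigl(\norm{v(\phi)}\,\norm{v(\tilde\phi)}\bigr)$. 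The crucial observation is that the (scalar) cross product carries a \emph{second} factor $\sqrt{1-\phi_1^2}$: since $2m_1\sqrt{m_2}=\pm\sqrt{1-\phi_1^2}\,\phi_3m_2$ and $\abs{2m_1\sqrt{m_2}-2\tilde m_1\sqrt{\tilde m_2}}\lesssim\sqrt{1-\phi_1^2}\,\phi_3\,\eta$ (the square-root estimate), one obtains $\abs{v(\phi)\times v(\tilde\phi)}\lesssim\sqrt{1-\phi_1^2}\,\phi_3m_2\,\eta$; as $\norm{v(\phi)}\norm{v(\tilde\phi)}=\phi_3m_2\cdot\tilde\phi_3\tilde m_2\asymp(\phi_3m_2)^2$, this gives $\abs{\beta-\tilde\beta}\lesssim\sqrt{1-\phi_1^2}\,\eta/(\phi_3m_2)$, and the two factors $\sqrt{1-\phi_1^2}$ cancel to leave $\abs{\phi_1-\tilde\phi_1}\lesssim(1-\phi_1^2)\eta/(\phi_3m_2)=4\eta/(\phi_2^2\phi_3^3)$. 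When instead $\phi_1,\tilde\phi_1$ have opposite signs, $\abs{\phi_1}\phi_3\le\abs{m_3/m_2-\tilde m_3/\tilde m_2}\lesssim\eta/m_2$ forces $\abs{\phi_1}\lesssim\eta/(m_2\phi_3)<c'$ for a small constant $c'$, hence $1-\phi_1^2\asymp1$, and then $\abs{\phi_1-\tilde\phi_1}=\abs{\phi_1}+\abs{\tilde\phi_1}\lesssim\eta/(m_2\phi_3)\asymp\eta/(\phi_2^2\phi_3^3)$ directly. This exhausts the cases and completes the proof.
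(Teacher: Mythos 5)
Your proof is correct, and for the hardest part ($\omega_1$) it takes a genuinely different route from the paper's. For $\omega_2$ the two arguments coincide (the same quotient identity, yielding $2\eta/\abs{m_1}$). For $\omega_3$ the paper expands $h_\phi(\Delta)=g(\tilde\phi)-g(\phi)$ as an explicit polynomial in $\Delta$ and tracks each monomial, which produces an extra $\eta^2$ term that must then be absorbed at the end; your use of the $1$-Lipschitzness of the Euclidean norm applied to $\phi_3=\norm{(2m_1/\sqrt{m_2},\,m_3/m_2)}$ avoids that expansion and is shorter. For $\omega_1$, the paper performs a purely algebraic manipulation of the numerator $N=\phi_1 h_\phi(\Delta)-\Delta_3(\sqrt{g+h}+\sqrt{g})$ --- including a self-referential substitution of a quantity $A$ into an expression $B$ and back into $A$ --- whose sole purpose is to exploit $\phi_1 m_3(\phi)-g^{1/2}=-(1-\phi_1^2)g^{1/2}$ and so cancel the dangerous factor $(1-\phi_1^2)^{-1/2}$; your polar-angle argument achieves the same cancellation transparently, one factor $\sqrt{1-\phi_1^2}$ coming from the mean-value bound on $\cos$ within a common quadrant and the other from the cross product $v(\phi)\times v(\tilde\phi)$, whose second coordinates are intrinsically of size $\sqrt{1-\phi_1^2}\,\phi_3 m_2$. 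Both approaches rest on the same inversion formulae and the same comparabilities $\tilde m_1\asymp m_1$, $\tilde m_2\asymp m_2$, $\tilde\phi_3\asymp\phi_3$; yours buys conceptual clarity (the geometry explains why the cancellation must occur) at the price of a case split on $\sgn\phi_1=\sgn\tilde\phi_1$, which you handle correctly. Two cosmetic points: the blanket claims in your second paragraph that $\tilde m_2\asymp m_2$ holds ``in each of the three cases'' and that $\tilde\phi_1$ shares the sign of $\phi_1$ are not literally true (the $\omega_2$ hypothesis $\eta<4c\abs{m_1}$ does not control $\eta/m_2$ when $\abs{\phi_2}$ is small, and the sign of $\tilde\phi_1$ can flip when $\abs{\phi_1}$ is small), but neither is used where it fails: your $\omega_2$ bound needs only $\tilde m_1\asymp m_1$, and the opposite-sign case of $\omega_1$ is treated separately.
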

The conditions of \cref{thm:minimax-upper-bound} ensure that on the event $\mathcal{A}_n$ we may apply \cref{pro:modulus:pointwise} with $\eta=C'x/\sqrt{n}$. We deduce the upper bounds for estimating the components of $\phi$ immediately upon replacing $\phi_1,\phi_2$ and $\phi_3$ on the right sides in \cref{pro:modulus:pointwise} by their lower bounds [for $\phi_1$ we note that $\min(\abs{\hat{\phi}_1-\phi_1},\abs{\hat{\phi}_1+\phi_1})\leq \abs{\phi_1-\sign(\ip{\psi_2,\tilde{\psi}_2})\cdot \tilde{\phi}_1}$].

\begin{proof}[Proof of \cref{pro:modulus:pointwise}]
	Recall that $m(\phi)=(r(\phi),\phi_2 r(\phi),\phi_1\phi_2\phi_3 r(\phi))$ with $r(\phi)=\tfrac{1}{4}(1-\phi_1^2)\phi_2\phi_3^2$.
	If $r(\phi)=0$ then in each case no $\eta\in[0,1]$ satisfies the conditions and so there is nothing to prove.
	Otherwise, note that $m$ is invertible when restricted to  $\braces{\phi: r(\phi)\neq 0}\supset \Phi(\delta,\epsilon,\zeta)$
	and its inverse is given by $\phi(m)$ defined by
	\begin{align*}
		\phi_1(m) &= m_3 / (4m_1^2m_2 + m_3^2)^{1/2} \\
		\phi_2(m) &= m_2/m_1, \\
		\phi_3(m) &= (4m_1^2m_2 + m_3^2)^{1/2}/m_2.
	\end{align*}
	For arbitrary $(\phi,\psi;\tilde{\phi},\tilde{\psi})$ satisfying $\rho(\phi,\psi;\tilde{\phi},\tilde{\psi}) \leq \eta$, we define
	\begin{gather*}
		\Delta_1 \coloneqq m_1(\tilde{\phi}) - m_1(\phi),\quad%
		\Delta_2 \coloneqq m_2(\tilde{\phi}) - m_2(\phi),\\%
		\Delta_3 \coloneqq \sgn(\Inner{\psi_2,\tilde{\psi}_2})\cdot m_3(\tilde{\phi}) - m_3(\phi).
	\end{gather*}
	Define also
	\begin{align*}
		g(\phi)
		&\coloneqq 4m_1(\phi)^2m_2(\phi) + m_3(\phi)^2\\
		&= \braces{m_2(\phi)\phi_3 }^2\\
		&= \braces[\Big]{\frac{1}{4}(1-\phi_1^2)\phi_2^2 \phi_3^3 }^2,
	\end{align*}
	and, for $\Delta=(\Delta_1,\Delta_2,\Delta_3)$,
	\begin{equation*}
		\begin{split}
			h_{\phi}(\Delta)
			&\coloneqq%
			g(\tilde{\phi}) - g(\phi)\\
			&= 4(m_1(\phi) + \Delta_1)^2(m_2(\phi) + \Delta_2) + (m_3(\phi) + \Delta_3)^2\\
                  &\quad- \braces{4m_1(\phi)^2m_2(\phi) + m_3(\phi)^2}.
		\end{split}
	\end{equation*}
	Observe that
	\begin{equation*}
		\begin{split}
			h_{\phi}(\Delta)%
			&= 8m_1(\phi)m_2(\phi)\Delta_1 + 8m_1(\phi)\Delta_1\Delta_2 + 4m_2(\phi)\Delta_1^2\\
			\label{eq:57}
			&\quad+ 4\Delta_1^2\Delta_2 + 4m_1(\phi)^2\Delta_2 + 2m_3(\phi)\Delta_3 + \Delta_3^2.
		\end{split}
	\end{equation*}

	\paragraph{Bounding $\omega_1$}

	We decompose,
        \begin{equation*}
          \phi_1 - \sgn(\Inner{\psi_2,\tilde{\psi}_2})\cdot \tilde{\phi}_1\\
          = \frac{m_3(\phi)}{\sqrt{g(\phi)}} - \frac{m_3(\phi) + \Delta_3}{\sqrt{g(\phi) + h_{\phi}(\Delta)}}
        \end{equation*}
        which is in turn equal to
        \begin{equation*}
          m_3(\phi)\Big\{ \frac{1}{\sqrt{g(\phi)}} - \frac{1}{\sqrt{g(\phi) + h_{\phi}(\Delta)}} \Big\}%
		- \frac{\Delta_3}{\sqrt{g(\phi) + h_{\phi}(\Delta)}}
        \end{equation*}
        i.e.\ equal to
        \begin{multline*}
          \frac{m_3(\phi)}{\sqrt{g(\phi)(g(\phi) + h_{\phi}(\Delta))}}\big\{\sqrt{g(\phi) + h_{\phi}(\Delta)} - \sqrt{g(\phi)}  \big\}\\
          - \frac{\Delta_3}{\sqrt{g(\phi) + h_{\phi}(\Delta)}},
        \end{multline*}
        which is
        \begin{multline*}
          \frac{m_3(\phi)}{\sqrt{g(\phi)(g(\phi) + h_{\phi}(\Delta))}} \frac{h_{\phi}(\Delta)}{\sqrt{g(\phi)+h_{\phi}(\Delta)} + \sqrt{g(\phi)}}\\
          - \frac{\Delta_3}{\sqrt{g(\phi) + h_{\phi}(\Delta)}}.
        \end{multline*}
	Now we observe that $m_3(\phi)/\sqrt{g(\phi)}$ is equal to $\phi_1$, so indeed
	\begin{multline*}
		\phi_1 - \sgn(\Inner{\psi_2,\tilde{\psi}_2})\cdot \tilde{\phi}_1%
		=\\
                \frac{\phi_1 h_{\phi}(\Delta) - \Delta_3(\sqrt{g(\phi) + h_{\phi}(\Delta)} + \sqrt{g(\phi)}) }{\sqrt{g(\phi) + h_{\phi}(\Delta)}(\sqrt{g(\phi) + h_{\phi}(\Delta)} + \sqrt{g(\phi)}  )}.%
	\end{multline*}
	Call the numerator of this last fraction $N$ and call its denominator $D$. Writing $h_{\phi}(\Delta)$ as
	$h_{\phi}(\Delta) = \xi_{\phi}(\Delta) + \gamma_{\phi}(\Delta)$, where $\gamma_{\phi}(\Delta) \coloneqq 2m_3(\phi)\Delta_3 + \Delta_3^2$, we see that
	\begin{equation*}
		N%
		= \phi_1 \xi_{\phi}(\Delta) + \phi_1\gamma_{\phi}(\Delta)%
		- \Delta_3\big\{ (g(\phi) + h_{\phi}(\Delta))^{1/2} + g(\phi)^{1/2} \big\}.
	\end{equation*}
	In order to obtain the optimal upper bound, we need to do a fine analysis of this expression. To this end, we calculate
	\begin{align*}
		A%
		&\coloneqq \phi_1\gamma_{\phi}(\Delta) - \Delta_3\{ (g+h)^{1/2} + g^{1/2} \}\\
		&=2 \Delta_3\{\phi_1 m_3(\phi) - g^{1/2} \} +  \phi_1\Delta_3^2 - \Delta_3\{(g+h)^{1/2} - g^{1/2}\}\\
		&=-2\Delta_3(1-\phi_1^2)g^{1/2}%
		+ \phi_1 \Delta_3^2%
		- \frac{\Delta_3h}{(g+h)^{1/2} + g^{1/2}},
	\end{align*}
        i.e.\
        \begin{align*}
          A%
          &=-2\Delta_3(1-\phi_1^2)g^{1/2}\\
          &\quad
                  - \Delta_3 \frac{\gamma_{\phi}(\Delta) - \phi_1\Delta_3((g+h)^{1/2}+g^{1/2}  )}{(g+h)^{1/2} + g^{1/2}}\\%
          &\quad
		- \frac{\Delta_3 \xi_{\phi}(\Delta)}{(g+h)^{1/2} + g^{1/2}},
        \end{align*}
	where the last line follows because $\phi_1m_3(\phi) = \phi_1^2g(\phi)^{1/2}$.
	We now focus on the middle term of the last display, which we will express as a function of $A$.
	\begin{align*}
		B%
		&\coloneqq \gamma_{\phi}(\Delta) - \phi_1\Delta_3((g+h)^{1/2}+g^{1/2}  )\\%
		&= 2\Delta_3(m_3(\phi) - \phi_1g^{1/2}) + \Delta_3^2 - \phi_1\Delta_3 \{(g+h)^{ 1/2 } -g^{1/2} \}\\
		&= \Delta_3^2 - \frac{\phi_1 \Delta_3 h}{(g+h)^{1/2} + g^{1/2} }\\
		&= \Delta_3^2 - \frac{\phi_1 \Delta_3 \gamma_{\phi}(\Delta)}{(g+h)^{1/2} +g^{1/2} }%
		- \frac{\phi_1\Delta_3 \xi_{\phi}(\Delta)}{(g+h)^{1/2} + g^{1/2}}%
	\end{align*}
        that is,
        \begin{equation*}
          B%
          =- \frac{\Delta_3 A}{(g+h)^{1/2} + g^{1/2}}%
		- \frac{\phi_1\Delta_3 \xi_{\phi}(\Delta)}{(g+h)^{1/2} + g^{1/2}}.
        \end{equation*}
	Thus,
	\begin{align*}
		A%
		&= -2\Delta_3(1-\phi_1^2)g^{1/2}%
                  - \frac{\Delta_3 B}{(g+h)^{1/2} + g^{1/2}}\\%
          &\quad
		- \frac{\Delta_3 \xi_{\phi}(\Delta)}{(g+h)^{1/2} + g^{1/2}}\\
		&= -2\Delta_3(1-\phi_1^2)g^{1/2}%
                  + \frac{\Delta_3^2 A}{ \braces{(g+h)^{1/2}+g^{1/2}}^2 }\\%
          &\quad%
                  + \frac{\phi_1 \Delta_3^2 \xi_{\phi}(\Delta)}{\{(g+h)^{1/2} + g^{1/2}\}^2 }%
		- \frac{\Delta_3 \xi_{\phi}(\Delta)}{(g+h)^{1/2} + g^{1/2}},
	\end{align*}
	from which we deduce that
	\begin{multline*}
		N%
		= \phi_1\xi_{\phi}(\Delta)+ \\%
		\frac{-2\Delta_3(1-\phi_1^2)g^{1/2} + \frac{\phi_1 \Delta_3^2 \xi_{\phi}(\Delta)}{\braces{(g+h)^{1/2} + g^{1/2}}^2 }%
			- \frac{\Delta_3 \xi_{\phi}(\Delta)}{(g+h)^{1/2} + g^{1/2}} }{1 - \Delta_3^2/\braces{(g+h)^{1/2} + g^{1/2}}^2 }.
	\end{multline*}
	Since
	$m_2(\phi) \geq 0$, we see that $\xi_{\phi}(\Delta)$ has maximal amplitude when
	$\Delta_1 = \sgn(m_1(\phi))\eta$ and when $\Delta_2 = \eta$, in which case we have
	\begin{align*}
		\abs{\xi_{\phi}(\Delta)}%
          &= 8\abs{m_1(\phi)}m_2(\phi)\eta + 8\abs{m_1(\phi)}\eta^2\\
          &\quad+ 4 m_2(\phi)\eta^2 + 4\eta^3 + 4m_1(\phi)^2\eta\\
		&\leq 12 m_1(\phi)^2 \eta + 12 \abs{m_1(\phi)} \eta^2 + 4\eta^3,
	\end{align*}
	where the last line follows since $m_2(\phi) \leq \abs{m_1(\phi)}$. Now we observe that under
	the condition of the lemma, we have $\eta \lesssim \abs{m_1(\phi)}$, and so we can find a
	constant $C > 0$ such that
	\begin{equation*}
		\abs{\xi_{\phi}(\Delta)}%
		\leq C m_1(\phi)^2 \eta.
	\end{equation*}
	Also, we have that $\abs{\gamma_{\phi}(\Delta)} \leq 2\abs{m_3(\phi)}\eta + \eta^2$, and so
	\begin{equation*}
		\abs{h_{\phi}(\Delta)}%
		\leq Cm_1(\phi)^2 \eta + 2\abs{m_3(\phi)}\eta + \eta^2,
	\end{equation*}
	Noting that $\phi_3\leq \sqrt{K}$, for $c_0=c_0(K)$ sufficiently small in the assumption of the proposition we have $\abs{h_{\phi}(\Delta)} \leq g(\phi)/2$. Consequently, noting also that $\abs{\Delta_3}\leq \eta$ and $\eta \leq 4c_0 g^{1/2}$, we find that
	\begin{align*}
		\abs{N}%
		&\lesssim \abs{\phi_1}m_1(\phi)^2\eta%
		+ \eta(1-\phi_1^2) g(\phi)^{1/2}\\
		&\lesssim \eta (1 - \phi_1^2)^2 \phi_2^2\phi_3^3,
	\end{align*}
	and
	\begin{equation*}
		\abs{D}%
		\gtrsim g(\phi)%
		\gtrsim (1-\phi_1^2)^2\phi_2^4\phi_3^6.
	\end{equation*}
	Hence we have
	\begin{equation*}
		\abs{\phi_1 - \sgn(\Inner{\psi_2,\tilde{\psi}_2})\cdot \tilde{\phi}_1}%
		\lesssim \frac{\eta}{\phi_2^2\phi_3^3}.
	\end{equation*}
	\paragraph{Bounding $\omega_2$}

	We rewrite,
	\begin{align*}
		\phi_2 - \tilde{\phi}_2 %
		&= \frac{m_2(\phi)}{m_1(\phi)} - \frac{m_2(\phi) + \Delta_2}{m_1(\phi) + \Delta_1}\\
		&= \frac{m_2(\phi)(m_1(\phi) + \Delta_1) - (m_2(\phi) + \Delta_2)m_1(\phi)}{m_1(\phi)(m_1(\phi) + \Delta_1)}.
	\end{align*}
	Hence,
	\begin{equation*}
		\phi_2 - \tilde{\phi}_2%
		= \frac{\Delta_1 m_2(\phi) - \Delta_2 m_1(\phi)}{m_1(\phi)(m_1(\phi)+\Delta_1)}.
	\end{equation*}
	Under the assumptions of the theorem, we have that $\eta \leq m_1(\phi)/2$, and thus
	\begin{align*}
		\abs{\phi_2 - \tilde{\phi}_2}%
		&\leq \frac{2\eta(m_2(\phi) + |m_1(\phi)|)}{m_1(\phi)^2}\\
		&\leq \frac{4\eta}{|m_1(\phi)|}\\%
		&= \frac{16 \eta}{(1-\phi_1^2)|\phi_2|\phi_3^2}.
	\end{align*}

	\paragraph{Bounding $\omega_3$}

	We rewrite,
	\begin{align*}
		\phi_3 - \tilde{\phi}_3%
		&= \frac{\sqrt{g(\phi)}}{m_2(\phi)} - \frac{\sqrt{g(\phi) + h_{\phi}(\Delta)}}{m_2(\phi) + \Delta_2}\\
		&= \frac{m_2(\phi)(\sqrt{g(\phi)} - \sqrt{g(\phi) + h_{\phi}(\Delta)}) }{m_2(\phi)(m_2(\phi) + \Delta_2)}\\%
          &\quad
		+ \frac{\Delta_2 \sqrt{g(\phi)}}{m_2(\phi)(m_2(\phi) + \Delta_2)}\\
		&=\frac{-h_{\phi}(\Delta)}{(m_2(\phi) + \Delta_2)( \sqrt{g(\phi) + h_{\phi}(\Delta)} + \sqrt{g(\phi)} )}\\%
          &\quad%
		+ \frac{\Delta_2 \phi_3 }{m_2(\phi) + \Delta_2}\\
		&= \frac{-h_{\phi}(\Delta) + \Delta_2\phi_3(\sqrt{g(\phi) + h_{\phi}(\Delta)} + \sqrt{g(\phi)}) }{(m_2(\phi) + \Delta_2)( \sqrt{g(\phi) + h_{\phi}(\Delta)} + \sqrt{g(\phi)} )}
	\end{align*}
	Let us call the numerator of the fraction on the right of the last display  $N$, and the denominator $D$. We further decompose $h_{\phi}(\Delta)$ as
	$h_{\phi}(\Delta) = \xi_{\phi}(\Delta) + \gamma_{\phi}(\Delta)$, where $\gamma_{\phi}(\Delta) \coloneqq 4m_1(\phi)^2\Delta_2$.
	We see that
	\begin{align*}
		N%
		&= - \xi_{\phi}(\phi) - 4m_1(\phi)^2\Delta_2%
		+ \phi_3\Delta_2( (g+h)^{1/2} + g^{1/2}  )\\
		&= - \xi_{\phi}(\phi) -4 m_1(\phi)^2\Delta_2 + \phi_3\Delta_2\{ (g+h)^{1/2} + g^{1/2}  \}\\
		&=- \xi_{\phi}(\phi) - 4m_1(\phi)^2\Delta_2 + 2\phi_3\Delta_2 g^{1/2}\\
          &\quad+ \phi_3\Delta_2\{ (g+h)^{1/2} - g^{1/2} \}\\
		&=- \xi_{\phi}(\phi) +  \Delta_2 (1+\phi_1^2)\phi_3g^{1/2} + \frac{\phi_3\Delta_2 h}{(g+h)^{1/2} + g^{1/2}},
	\end{align*}
	where the last line follows because
	$m_1(\phi)^2 = \frac{1}{4}(1-\phi_1^2)\phi_3 g^{1/2}$. Since
	$m_2(\phi) \geq 0$, we see that $\xi_{\phi}(\Delta)$ has maximal amplitude when
	$\Delta_1 = \sgn(m_1(\phi))\eta$ and when $\Delta_2 = \eta$, in which case we have
	\begin{align*}
		\abs{\xi_{\phi}(\Delta)}%
          &= 8\abs{m_1(\phi)}m_2(\phi)\eta + 8\abs{m_1(\phi)}\eta^2 + 4m_2(\phi)\eta^2\\
          &\quad+ 4\eta^3%
		+2 \abs{m_3(\phi)}\eta + \eta^2\\
		&\lesssim \braces{\abs{m_1(\phi)}m_2(\phi) + 2\abs{m_3(\phi) }}\eta + \eta^2\\
		&\lesssim (1-\phi_1^2)\phi_2^2\phi_3^3 \max\braces{ (1-\phi_1^2) \abs{\phi_2}\phi_3,\, \abs{\phi_1}}\eta + \eta^2,
	\end{align*}
	where the second line follows because under the assumptions of the proposition we have that $m_2(\phi) \lesssim \abs{m_1(\phi)}$ and $\eta \leq m_2(\phi)/2$ (note that $\phi_3\leq K^{1/2}$). Since
	$h_{\phi}(\Delta) = \xi_{\phi}(\Delta) + 4m_1(\phi)^2\Delta_2$, we also have
	\begin{equation*}
		\abs{h_{\phi}(\Delta)}%
		\lesssim%
		(1-\phi_1^2)\phi_2^2\phi_3^3 \max\braces{ (1-\phi_1^2)\phi_3,\, \abs{\phi_1}}\eta + \eta^2,
	\end{equation*}

	Hence,
	\begin{align*}
		\abs{N}%
		&\lesssim%
		(1-\phi_1^2)\phi_2^2\phi_3^3 \max\braces{ (1-\phi_1^2) \abs{\phi_2}\phi_3,\, \abs{\phi_1}}\eta + \eta^2\\
		&\quad + \eta \phi_3 g^{1/2}\\%
          &\quad%
		+ \frac{\eta^2\phi_3(1-\phi_1^2)\phi_2^2\phi_3^3\max\braces{(1-\phi_1^2)\phi_3,\abs{\phi_1}} + \eta^3\phi_3 }{g^{1/2}}\\
		& \lesssim%
		(1-\phi_1^2)\phi_2^2\phi_3^3 \max\braces{ (1-\phi_1^2) \abs{\phi_2}\phi_3,\, \abs{\phi_1}}\eta + \eta^2\\
		&\quad + \eta \phi_3 g^{1/2}%
                  + \eta^2\phi_3\max\braces{(1-\phi_1^2)\phi_3,\abs{\phi_1}}\\%
          &\quad%
		+ \frac{\eta^3}{(1-\phi_1^2)\phi_2^2\phi_3^2}
	\end{align*}
	But by assumption $\eta \lesssim (1-\phi_1^2)\phi_2^2\phi_3^3$, and $4g^{1/2} = (1-\phi_1^2)\phi_2^2\phi_3^2$, thus
	\begin{equation*}
		\abs{N}%
		\lesssim
		(1-\phi_1^2)\phi_2^2\phi_3^3 \max\braces{\phi_3,\, \abs{\phi_1}}\eta + \eta^2.
	\end{equation*}
	Note that $\max\brackets{\phi_3,\abs{\phi}_1}\leq \sqrt{K}$.	Moreover, under the assumptions of the proposition and using that $\phi_3\leq \sqrt{K}$, it is
	the case that $\abs{\Delta_2} \leq \eta \lesssim m_2(\phi)$.
	Therefore $\abs{D} \gtrsim m_2(\phi) \sqrt{g(\phi)}$, and
	\begin{equation*}
		\abs{\phi_3 - \tilde{\phi}_3}%
		\lesssim \frac{\eta }{(1 - \phi_1^2)\phi_2^2\phi_3^2}%
		+ \frac{\eta^2}{(1-\phi_1^2)^2\phi_2^4\phi_3^5}.
	\end{equation*}
	Finally, since have assumed that $\eta<\frac{(1-\phi_1^2)\phi_2^2\phi_3^2}{8}$, we see that the second term is at most a constant times the first, so that it can be absorbed by increasing the constant $C$.
\end{proof}

\subsection{Proof of \cref{thm:mlb}}
\label{sec:proof-thm:mlb}
We give a standard two-point testing lower bound, summarising ideas that can be found for example in Chapter 2 of \cite{tsybakov:2009}.
\begin{lemma}\label{lem:2-point-testing-bound}
	Given data $X^{(n)}\sim p_u^{(n)}$ for parameter $u\in\mathcal{U}$, the following lower bounds hold for estimating $u$.

	Suppose $\mathcal{U}\subseteq \RR$ and for some $r\leq 1/2$ assume that there exist parameters $u_0,u_1$ satisfying
	\begin{enumerate}[i.]
		\item $\abs{u_1/u_0-1}\geq 4r$,
		\item $\KL(p^{(n)}_{u_1};p^{(n)}_{u_0})\leq 1/100$,
	\end{enumerate}
	where we recall $\KL$ denotes the Kullback--Leibler divergence. 
	Then
	\[\inf_{\hat{u}}\sup_{u\in \mathcal{U}} \PP_u( \abs{\hat{u}/u-1}\geq r) \geq 1/4,\]
	where the infimum is over all estimators $\hat{u}$ based on the data $X^{(n)}$.

	If instead $(\mathcal{U},d)$ is a pseudo-metric space and for some $r\geq 0$ there exist parameters $u_0,u_1$ satisfying
	\begin{enumerate}[i.]
		\item $d(u_0,u_1)\geq 2r$
		\item $\KL(p^{(n)}_{u_1},p^{(n)}_{u_0})\leq 1/100$,
	\end{enumerate}
	then
	\[\inf_{\hat{u}}\sup_{u\in \mathcal{U}} \PP_u( d(\hat{u},u)\geq r)\geq 1/4.\]
\end{lemma}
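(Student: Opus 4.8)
The approach is the standard reduction to two-point hypothesis testing (Le Cam's method, as in Chapter 2 of \cite{tsybakov:2009}). In both parts we are handed two candidate parameters $u_0,u_1\in\mathcal U$ that are statistically indistinguishable in the precise sense of condition (ii) but separated in the relevant loss by condition (i). Fixing an arbitrary estimator $\hat u$ based on $X^{(n)}$, the plan is to show that the events on which $\hat u$ is accurate under $u_0$ and under $u_1$ respectively must be disjoint, to turn the indicator of one of them into a test of ``$u_0$ versus $u_1$'', and to bound the resulting testing risk from below via Pinsker's inequality; since $\sup_{u\in\mathcal U}$ dominates the maximum over $\{u_0,u_1\}$, which in turn is at least the average of the two error probabilities, this suffices.

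For the pseudo-metric statement I would set $B_j:=\{d(\hat u,u_j)<r\}$ for $j\in\{0,1\}$. The triangle inequality together with $d(u_0,u_1)\ge 2r$ forces $B_0\cap B_1=\varnothing$, so $B_1\subseteq B_0^{\mathsf c}=\{d(\hat u,u_0)\ge r\}$. Testing with $\psi:=\mathbf 1_{B_1}$ then gives
\[
\PP_{u_0}\!\bigl(d(\hat u,u_0)\ge r\bigr)+\PP_{u_1}\!\bigl(d(\hat u,u_1)\ge r\bigr)\ \ge\ \PP_{u_0}(\psi=1)+\PP_{u_1}(\psi=0)\ \ge\ 1-\lVert p^{(n)}_{u_0}-p^{(n)}_{u_1}\rVert_{\mathrm{TV}},
\]
the final inequality being the elementary lower bound on the sum of type-I and type-II errors. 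By Pinsker's inequality and (ii), $\lVert p^{(n)}_{u_0}-p^{(n)}_{u_1}\rVert_{\mathrm{TV}}\le\sqrt{\tfrac12 K(p^{(n)}_{u_1};p^{(n)}_{u_0})}\le\sqrt{1/200}$, so the right-hand side exceeds $1/2$ and hence $\sup_{u\in\mathcal U}\PP_u(d(\hat u,u)\ge r)\ge\tfrac12\bigl(1-\sqrt{1/200}\bigr)>1/4$.

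For the relative-error statement the scheme is word-for-word the same once disjointness of $B_j:=\{\lvert\hat u/u_j-1\rvert<r\}$ is established, and this is really the only place the two cases differ. If $\hat u$ lies in $B_0\cap B_1$ then $\lvert\hat u-u_0\rvert<r\lvert u_0\rvert$ and $\lvert\hat u-u_1\rvert<r\lvert u_1\rvert$, hence $\lvert u_0-u_1\rvert<r\lvert u_0\rvert+r\lvert u_1\rvert$; bounding $\lvert u_1\rvert\le\lvert u_0\rvert+\lvert u_0-u_1\rvert$ yields $(1-r)\lvert u_0-u_1\rvert<2r\lvert u_0\rvert$, and since $r\le\tfrac12$ this gives $\lvert u_0-u_1\rvert<4r\lvert u_0\rvert$, i.e.\ $\lvert u_1/u_0-1\rvert<4r$, contradicting (i). With $B_0\cap B_1=\varnothing$ in hand, the identical testing-plus-Pinsker argument delivers $\sup_{u\in\mathcal U}\PP_u(\lvert\hat u/u-1\rvert\ge r)\ge\tfrac12\bigl(1-\sqrt{1/200}\bigr)>1/4$.

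I expect no serious obstacle: the generous constant $1/100$ in (ii) leaves ample slack, and the argument is textbook apart from the disjointness check in the relative-error case, where one must be slightly careful because (i) controls $\lvert u_1-u_0\rvert$ only relative to $\lvert u_0\rvert$ rather than symmetrically --- the hypothesis $r\le 1/2$, combined with the constant $4$ in (i), is exactly what compensates. (One minor point to dispatch in passing: randomised estimators are covered, since enlarging the sample space by an independent randomisation leaves the total variation distance between the two hypotheses unchanged, so the infimum over randomised rules obeys the same bound.)
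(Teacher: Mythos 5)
Your proposal is correct, and the reduction to two-point testing via disjointness of the accuracy events is the same skeleton as the paper's proof (the paper uses the nearest-point test $T=\II\{\abs{\hat u/u_0-1}>\abs{\hat u/u_1-1}\}$ and derives the same disjointness/inclusion you establish directly, with the same use of $r\leq 1/2$ and the constant $4$ in the relative-error case). Where you genuinely diverge is the quantitative step bounding the testing risk. You invoke the identity $\inf_\psi\{\PP_{u_0}(\psi=1)+\PP_{u_1}(\psi=0)\}=1-\norm{p^{(n)}_{u_0}-p^{(n)}_{u_1}}_{\mathrm{TV}}$ together with the first Pinsker inequality, arriving at $\tfrac12\bigl(1-\sqrt{1/200}\bigr)\approx 0.46$. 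The paper instead truncates on the likelihood-ratio event $A=\{p^{(n)}_{u_0}/p^{(n)}_{u_1}\geq 1/2\}$, lower bounds the max testing error by $\tfrac13\PP_{u_1}(A)$, and controls $\PP_{u_1}(A)$ via Markov plus the \emph{second} Pinsker inequality $\EE_{u_1}\abs{\log(p^{(n)}_{u_1}/p^{(n)}_{u_0})}\leq K+\sqrt{2K}$, which yields only about $0.26$ and in fact requires $K\lesssim 0.0129$ to clear $1/4$, whereas your route tolerates $K$ up to $1/2$. So your argument is both more elementary and quantitatively more generous; since the lemma only asserts the constant $1/4$ under $K\leq 1/100$, either route suffices, and there is no gap in your proof. (Your closing remark on randomised estimators is harmless but unnecessary, as the infimum in the lemma is over measurable functions of the data.)
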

\begin{proof}
In the case $\mathcal{U}\subseteq \RR$,	given an estimator $\hat{u}$ we may construct a test $T$ of $u=u_0$ vs $u=u_1$,
	\[ T=\II\braces[\Big]{\abs[\Big]{\frac{\hat{u}}{u_0}-1}>\abs[\Big]{\frac{\hat{u}}{u_1}-1}}.\]
	Observe that
	\begin{align*}
		\abs[\Big]{\frac{\hat{u}}{u_0}-1} &= \abs[\Big]{\frac{u_1}{u_0}-1 + \frac{\hat{u}-u_1}{u_1}\frac{u_1}{u_0}} \\
		&\geq 4r - \abs[\big]{\frac{\hat{u}}{u_1}-1}(1+4r).
	\end{align*}
	Then \begin{align*}
		\PP_{u_1}\brackets{T=0}&=\PP_{u_1}\brackets[\Big]{\abs[\Big]{\frac{\hat{u}}{u_0}-1}\leq\abs[\Big]{\frac{\hat{u}}{u_1}-1}} \\
		&\leq \PP_{u_1}\brackets[\Big]{4r-\abs[\big]{\frac{\hat{u}}{u_1}-1}(1+4r)\leq \abs[\Big]{\frac{\hat{u}}{u_1}-1}} \\
		&\leq  \PP_{u_1}\brackets[\Big]{\abs[\Big]{\frac{\hat{u}}{u_1}-1}\geq r},
	\end{align*}
	where for the last line we have used that $4r/(2+4r)\geq r$ for $r\leq 1/2$.
	Also note that on the event $\braces{T=1}\cap\braces{\abs{\hat{u}/u_0-1}<r}$ we have also $\abs{\hat{u}/u_1-1}<r$ and hence
	\begin{align*}
          &\abs{u_1/u_0-1}\\
          &\qquad=\abs{\hat{u}/u_0-1-(\hat{u}/u_1-1) - (\hat{u}/u_1-1)(u_1/u_0-1)} \\
		&\qquad< 2r+r\abs{u_1/u_0-1},
	\end{align*}
	so that $\abs{u_1/u_0-1}<2r/(1-r)$ on this event. Having assumed $r\leq 1/2$ and $\abs{u_1/u_0-1}\geq 4r$ we deduce that $\braces{T=1}\cap\braces{\abs{\hat{u}/u_0-1}<r}=\emptyset$ so that $\braces{T=1}\subseteq \braces{\abs{\hat{u}/u_0-1}\geq r}$, and hence we have shown
        \begin{align*}
          \inf_{\hat{u}}\sup_{u} \PP_u\brackets[\Big]{\abs[\Big]{\frac{\hat{u}}{u}-1}\geq r}%
          &\geq \inf_{\hat{u}} \max_{i=0,1} \PP_{u_i}\brackets[\Big]{\abs[\Big]{\frac{\hat{u}}{u_i}-1}\geq r}\\
          &\geq \inf_T \max_{i=0,1} \PP_{u_i}\brackets{T\neq i},
        \end{align*}
	where the latter infimum is over all tests $T$.
	In the pseudo-metric case a reduction considering the test $T=\II\braces{d(\hat{u},u_0)>d(\hat{u},u_1)}$ and directly using the triangle inequality likewise yields
	\[ \inf_{\hat{u}}\sup_{u} \PP_u\brackets[\big]{d(\hat{u},u)\geq r} \geq \inf_T \max_{i=0,1} \PP_{u_i}\brackets{T\neq i}.\]

	It remains to lower bound the maximum probability of testing error by $1/4$. Introducing the event $A=\braces[\big]{\frac{p_{u_0}^{(n)}}{p_{u_1}^{(n)}}\geq 1/2}$, we see
	\begin{equation*}
		\PP_{u_0}(T\not = 0) \geq  \EE_{u_1}\sqbrackets[\big]{\tfrac{p_{u_0}^{(n)}}{p_{u_1}^{(n)}} \II_A T}
		\geq \tfrac{1}{2}\sqbrackets{ \PP_{u_1}\brackets{T=1}- \PP_{u_1}(A^c)}
	\end{equation*}
	Thus, writing $p_1=\PP_{u_1}(T=1)$, we see
	\begin{align*}
          &\max\brackets{\PP_{u_0}(T\not=0),\PP_{u_1}(T\not =1)}\\
          &\qquad
            \geq \max \brackets{ \tfrac{1}{2}(p_1-\PP_{u_1}(A^c)),1-p_1} \\
          &\qquad\geq \inf_{p\in[0,1]} \max \brackets{ \tfrac{1}{2}(p-\PP_{u_1}(A^c)),1-p}.
        \end{align*}
	The infimum is attained when $\frac{1}{2}(p-\PP_{u_1}(A^c))=1-p$ and takes the value $\frac{1}{3} \PP_{u_1}(A)$, so that
	\begin{equation*}
		\inf_T \max_{i=0,1} \PP_{u_i}(T\neq i) \geq \tfrac{1}{3} \PP_{u_1}(A).
	\end{equation*}
	Next observe
	\begin{align*}
          \PP_{u_1}(A)&= \PP_{u_1}\sqbrackets[\big]{\tfrac{p_{u_1}^{(n)}}{p_{u_0}^{(n)}} \leq  2}\\
          &= 1- \PP_{u_1}^n\sqbrackets[\big]{\log\brackets[\big]{\tfrac{p_{u_1}^{(n)}}{p_{u_0}^{(n)}}}
            >\log 2}\\
          &\geq 1- \PP_{\theta_1}^n\sqbrackets[\big]{\abs{\log(\tfrac{p_{u_1}^{(n)}}{p_{u_0}^{(n)}})}>\log 2} \\ &\geq 1-(\log{2})^{-1} \EE_{u_1} \abs[\big]{\log \brackets[\big]{ \tfrac{p_{u_1}^{(n)}}{p_{u_0}^{(n)}}}},
	\end{align*}
	where we have used Markov's inequality to attain the final expression.
	By the second Pinsker inequality (e.g.\ Proposition 6.1.7b in \cite{GN16}), using the upper bound on the Kullback--Leibler divergence we can continue the chain of inequalities to see
        \begin{align*}
          \PP_{u_1}(A)
          &\geq 1-(\log 2)^{-1}\sqbrackets[\big]{\KL(p_{u_1}^{(n)},p_{u_0}^{(n)}) + \sqrt{2\KL(p_{u_1}^{(n)},p_{u_0}^{(n)})} }\\
          &\geq 1- (\log 2)^{-1} \brackets{\mu +\sqrt{2\mu}}.
        \end{align*}
	For any $c<1/3$, we may choose $\mu=\mu(c)$ small enough that the testing error satisfies \[\inf_T \max_{i=0,1} \PP_{u_i}(T\neq i)\geq \tfrac{1}{3} \brackets[\Big]{1- \frac{\mu+\sqrt{2\mu}}{\log 2}} >c,\] and in particular a numerical calculation shows that $\mu=1+\tfrac{1}{4}\log 2 - \sqrt{1+\tfrac{1}{2}\log 2}>1/100$ works for $c=1/4$.
\end{proof}

In view of  
\cref{kul:upper}, for any $(\phi,\psi),(\tilde{\phi},\tilde{\psi})\in\Phi$ corresponding to strictly positive emission densities, we have for $\phi_2$ and $\tilde{\phi}_2$ small enough that
\[ \KL(p_{\phi,\psi}^{(n)},p_{\tilde{\phi},\tilde{\psi}}^{(n)})\leq C n \rho(\phi,\psi;\tilde{\phi},\tilde{\psi})^2,\]
where $C>0$ is a constant depending only on $K$ and a lower bound for the emission densities. We remark that for all the hypotheses we will exhibit below, we will have that $\phi_2$ and $\tilde{\phi}_2$ are of order $\epsilon$, which is upper bounded by $\epsilon_1$ by assumption, so that choosing the latter small enough the above bound on $\KL(p_{\phi,\psi}^{(n)},p_{\tilde{\phi},\tilde{\psi}}^{(n)})$ will apply. Then, to prove \cref{item:mlb:phi1}, it suffices to apply \cref{lem:2-point-testing-bound} to $u=1-\phi_1^2$ and prove the existence of parameters $(\phi,\psi),(\tilde{\phi},\tilde{\psi})\in \Phi_L(\delta,\epsilon,\zeta)$ satisfying for small enough $c_1> 0$ and some $c_2 > 0$
\begin{equation} \label{eqn:required-bounds-for-phi1}\rho(\phi,\psi;\tilde{\phi},\tilde{\psi})\leq c_1/\sqrt{n},\ \text{and} \ %
  \abs[\Big]{\frac{1-\tilde{\phi}_1^2}{1-\phi_1^2}-1}\geq c_2/\sqrt{n\delta^2\epsilon^4\zeta^6}
\end{equation}
which will give the lower bound on the absolute risk. Regarding the relative risk, we then note that for any $a\geq 0$, since $\abs{\phi_1}\leq 1$ and $1-\phi_1^2\geq \delta,$ so that we may assume the same of $\hat{\phi}_1$, we have
\begin{align*}
  &\PP_{\phi,\psi}(\abs{\hat{\phi}_1-\phi_1}\wedge \abs{ \hat{\phi}_1+\phi_1 }\geq a)\\
  &\qquad\geq 	\PP_{\phi,\psi}(\abs{(1-\hat{\phi}_1^2)-(1-\phi_1^2)}\geq 2a) \\ &\qquad\geq \PP_{\phi,\psi}(\abs{(1-\hat{\phi}_1^2)/(1-\phi_1^2)-1}\geq 2a/\delta).
\end{align*}
(See also \cref{eqn:absolute-upper-implies-relative} for a similar calculation with $\phi_2$.)

Similar conditions to \eqref{eqn:required-bounds-for-phi1} suffice for proving the other parts of \cref{thm:mlb} and we proceed now to verifying the existence of suitable parameters $(\phi,\psi)$ and $(\tilde{\phi},\tilde{\psi})$, with the help of the following lemma.

\begin{lemma}\label{lem:exists-psi}
	For a given $\phi$, assume conditions \eqref{eqn:condition-on-phi2} and \eqref{eqn:condition-on-phi} and assume that $\phi_3\leq \sqrt{2 \floor{K/2}}/(2K)$.
	Then there exists $\psi$ such that $(\phi,\psi)$ lies in $\Phi_L$ and the corresponding emission densities $f_0,f_1$ are bounded below by some constant $c=c(K)>0$. 

	In particular, for $\abs{\phi_1}\leq 1-3\delta$, $\epsilon\leq \phi_2\leq \min(1/3,1-L)$, $\zeta\leq \phi_3\leq 2\zeta$, such a $\psi$ exists under the condition $\zeta\leq \sqrt{2 \floor{K/2}}/(4K)$. 
\end{lemma}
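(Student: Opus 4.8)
The plan is to exhibit $\psi$ \emph{explicitly} and then simply verify the conditions of \cref{rem:valid-phis}. I would take $\psi_1$ to be the uniform density, $\psi_1(k)=1/K$ for all $k$, and take $\psi_2$ to be the fixed vector supported on $2\floor{K/2}$ of the coordinates, equal to $+1/\sqrt{2\floor{K/2}}$ on $\floor{K/2}$ of them, to $-1/\sqrt{2\floor{K/2}}$ on another $\floor{K/2}$ of them, and (when $K$ is odd) to $0$ on the remaining coordinate. Then $\sum_k\psi_2(k)=0$ and $\norm{\psi_2}=1$, and crucially $\norm{\psi_2}_\infty=1/\sqrt{2\floor{K/2}}$, which is in fact the smallest sup-norm achievable by a unit vector of zero sum; this is exactly where the hypothesis $\phi_3\leq\sqrt{2\floor{K/2}}/(2K)$ (equivalently, the compatibility condition) is used.

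Next I would check that $(\phi,\psi)$ so defined lies in $\Phi_L$. The hypotheses \eqref{eqn:condition-on-phi2} and \eqref{eqn:condition-on-phi} already force $\abs{\phi_1}<1$ (from $\tfrac12(1-\phi_2)(1-\abs{\phi_1})\geq\delta>0$), $\abs{\phi_2}\leq 1-L\leq 1$, and $\phi_3\geq\zeta\geq 0$, so by \cref{rem:valid-phis} it only remains to verify \eqref{eqn:condition-on-psi}. For each $k$, using $\abs{\phi_1}\leq 1$ and then the hypothesis on $\phi_3$,
\begin{equation*}
	\tfrac12\phi_1\phi_3\psi_2(k)+\tfrac12\phi_3\abs{\psi_2(k)} \leq \phi_3\abs{\psi_2(k)} \leq \phi_3\norm{\psi_2}_\infty \leq \frac{\sqrt{2\floor{K/2}}}{2K}\cdot\frac{1}{\sqrt{2\floor{K/2}}} = \frac{1}{2K},
\end{equation*}
whereas $\psi_1(k)=1/K$, so \eqref{eqn:condition-on-psi} holds with room to spare. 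Hence $(\phi,\psi)\in\Phi(\delta,\epsilon,\zeta)$, and \eqref{eqn:condition-on-phi2} upgrades this to $(\phi,\psi)\in\Phi_L(\delta,\epsilon,\zeta)$. The same display also delivers the lower bound on the emission densities: by \cref{rem:invert-param}, $\min(f_0(k),f_1(k))=\psi_1(k)-\tfrac12\phi_1\phi_3\psi_2(k)-\tfrac12\phi_3\abs{\psi_2(k)}\geq\tfrac1K-\tfrac1{2K}=\tfrac1{2K}$, so one may take $c(K)=1/(2K)$.

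For the ``in particular'' statement I would just verify that the listed conditions imply the hypotheses of the first part. Condition \eqref{eqn:condition-on-phi2} is among the assumptions. In \eqref{eqn:condition-on-phi}: using $\phi_2\leq 1/3$ and $\abs{\phi_1}\leq 1-3\delta$ gives $\tfrac12(1-\phi_2)(1-\abs{\phi_1})\geq\tfrac12\cdot\tfrac23\cdot 3\delta=\delta$; using $\phi_2\geq\epsilon>0$ and $\abs{\phi_1}\leq 1$ gives $\tfrac12(1-\phi_2)(1+\abs{\phi_1})\leq\tfrac12\cdot1\cdot2=1$; and $\abs{\phi_2}=\phi_2\geq\epsilon$, $\phi_3\geq\zeta$ are assumed. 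Finally $\phi_3\leq 2\zeta\leq\sqrt{2\floor{K/2}}/(2K)$ by the compatibility condition \eqref{eqn:compatibility}. There is essentially no obstacle here: the only point requiring (minimal) thought is the choice of $\psi_2$ minimising its sup-norm subject to unit $\ell^2$-norm and zero sum, together with the parity of $K$; everything else is a direct substitution into \cref{rem:valid-phis}.
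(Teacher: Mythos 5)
Your proposal is correct and follows essentially the same route as the paper: the paper's proof takes exactly the same $\psi_1(k)=1/K$ and the same $\psi_2$ (written there as $(2\floor{K/2})^{-1/2}(\II\{k \text{ odd},\,k<K\}-\II\{k\text{ even}\})$, which is your balanced $\pm(2\floor{K/2})^{-1/2}$ vector), and concludes the lower bound $1/(2K)$ on $f_0,f_1$ from \cref{rem:invert-param} and the hypothesis on $\phi_3$. You simply spell out the verification of \cref{rem:valid-phis} and of the ``in particular'' clause in more detail than the paper does.
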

\begin{proof}
  For $k\leq K$, set $\psi_1(k)=1/K$ and 
  \begin{equation*}
    \psi_2(k)=(2\floor{K/2})^{-1/2} (\II\braces{k\textnormal{ odd, } k<K}-\II\braces{k\textnormal{ even}}).
  \end{equation*}
  [Or, similarly, $\psi_2(k)= (2\floor{K/2})^{-1/2} (\II\braces{k<(K+1)/2}-\II\braces{k>(K+1)/2})$.] Under the assumed condition on $\phi_3$ and recalling that $\abs{\phi_1}\leq 1$ by assumption, we observe from the expressions for $f_0,f_1$ given in \cref{rem:invert-param} that these are lower bounded by $1/(2K)$.
	In the particular case, one simply notes that all the conditions hold for such $\phi$.
\end{proof}

\paragraph{Proof of \cref{item:mlb:phi1,item:mlb:phi3}}
We prove the lower bounds for estimating $\phi_1$ and $\phi_3$ together.
For some small constant $c>0$, set $R=c\epsilon^{-2}\zeta^{-3}n^{-1/2}$ and, writing $S=(2-6\delta-R)R/(6\delta-9\delta^2)$, set
\begin{align*} \phi&=(1-3\delta,\epsilon,\zeta \sqrt{1+S}),\\
	\tilde{\phi}&=(1-3\delta-R,\epsilon,\zeta).
\end{align*}
Recalling the definition $r(\phi)=(1-\phi_1^2)\phi_2\phi_3^2/4$, the choice of $\phi_3$ ensures that $r(\phi)=r(\tilde{\phi})$, and we note that under the assumptions of the theorem we have $R\leq \delta\leq 1/6$ so that $S\leq R/\delta\leq 1$ and $\zeta\leq \phi_3\leq 2\zeta$. By \cref{lem:exists-psi} there exists some $\psi=\tilde{\psi}$ such that $(\phi,\psi),(\tilde{\phi},\tilde{\psi})\in\Phi_L$ and for this $\psi=\tilde{\psi}$ we see that
\begin{align*}
  \rho(\phi,\psi;\tilde{\phi},\tilde{\psi})
  &=\abs{\phi_1\phi_2\phi_3 r(\phi) - \tilde{\phi}_1\tilde{\phi}_2\tilde{\phi}_3 r(\tilde{\phi})}\\
  &= \phi_2 r(\phi)\abs{\phi_1\phi_3-\tilde{\phi}_1\tilde{\phi}_3}.
\end{align*}
Using that $\sqrt{1+t}\leq 1+t$ for $t\geq 0$ we have
\begin{align*}
  \abs{\phi_1\phi_3-\tilde{\phi}_1\tilde{\phi}_3}
  &= (1-3\delta)\zeta(\sqrt{1+S}-1)+R\zeta\\
  &\leq (S+R)\zeta\leq 2R\zeta/\delta,
\end{align*}
hence since $r(\phi)=(6\delta-9\delta^2)\epsilon \zeta^2(1+S)/4 \leq 3\delta \epsilon \zeta^2$, we obtain
\[\rho(\phi,\psi;\tilde{\phi},\tilde{\psi})\leq 6\epsilon^2 \zeta^3 R \leq 6cn^{-1/2}.\]
Recalling that $R\leq \delta\leq 1/6$ and that $r(\phi)=r(\tilde{\phi})$ one calculates
\[ \frac{1-\tilde{\phi}_1^2}{1-\phi_1^2}-1 = S\geq R/(12\delta).\]
For $c$ small enough we see that the conditions in \cref{eqn:required-bounds-for-phi1} are satisfied, yielding the claimed bound for estimating $\phi_1$.

To prove the lower bound for estimating $\phi_3$ it suffices to lower bound $\abs{\phi_3/\tilde{\phi}_3-1}$. Here we use the bound $\sqrt{1+x}-1\geq x/(2\sqrt{1+x})\geq x/(2\sqrt{2})$ for $0\leq x\leq 1$ to see for a constant $c'>0$ that
\[ \abs{\phi_3/\tilde{\phi}_3-1}\geq c'R/\delta.\] The bound for $\phi_3$ follows from applying \cref{lem:2-point-testing-bound}.

\paragraph{Proof of \cref{item:mlb:phi2}}
For a constant $c>0$, define
\(R=c\delta^{-1} \epsilon^{-1}\zeta^{-2}n^{-1/2},\) define $\phi,\tilde{\phi}$ by
\begin{align*}
	\phi&=(1-3\delta,\epsilon,\zeta(1+R/\epsilon)^{1/2})\\
	\tilde{\phi}&=(1-3\delta,\epsilon+R,\zeta),
\end{align*}
and observe that by construction $r(\phi)=r(\tilde{\phi})$.
Noting that $\phi_2\leq 2\epsilon\leq 1-L$ and $\phi_3\leq 2\zeta$ because the assumptions of \cref{thm:mlb} ensure that $R\leq \epsilon\leq 1/3$, we deduce using \cref{lem:exists-psi} that there exists some $\psi=\tilde{\psi}$ such that $(\phi,\psi),(\tilde{\phi},\tilde{\psi})\in \Phi_L(\delta,\epsilon,\zeta)$.

Next observe, using that $(1+x)^{1/2}\leq 1+x$,
\begin{align*}
  \phi_1\abs{\phi_2\phi_3-\tilde{\phi}_2\tilde{\phi}_3}
  &\leq \abs{\phi_2}\abs{\phi_3-\tilde{\phi}_3} + \abs{\tilde{\phi}_3}\abs{\phi_2-\tilde{\phi}_2}\\ &=\epsilon\zeta(\sqrt{1+R/\epsilon}-1) + \zeta R\\
  &\leq 2\zeta R\\
  &\leq R,
\end{align*}
the last inequality holding if $\zeta\leq \zeta_0\leq 1/2$. We deduce
\begin{align*}
  \rho(\phi,\psi; \tilde{\phi},\tilde{\psi})
  &=r(\phi)\max( \abs{\phi_2- \tilde{\phi}_2},\abs{\phi_1\phi_2\phi_3-\tilde{\phi}_1\tilde{\phi}_2\tilde{\phi}_3})\\
  &= R r(\phi).
\end{align*} Again using that $\phi_3\leq 2\zeta$ and noting also that $(1-\phi_1^2)= 6\delta-9 \delta^2 \leq 6\delta$, we see that for some $C'>0$ we have
\begin{equation*}
	\rho(\phi,\psi;\tilde{\phi},\tilde{\psi}) \leq C' \delta \epsilon \zeta^2 R\leq cC'n^{-1/2}.
\end{equation*}
As with \cref{item:mlb:phi1,item:mlb:phi3}, for $c$ small enough in the definition of $R$ we may apply \cref{lem:2-point-testing-bound} to deduce the claimed lower bound since $\abs{\tilde{\phi}_2/\phi_2-1}=R/\epsilon$.

\paragraph{Proof of \cref{item:mlb:psi1}}
Set $\phi=\tilde{\phi}=(0,\epsilon,\zeta)$. For $k\leq K$, as in \cref{lem:exists-psi} define $\psi_1(k)=1/K$ and
\begin{equation*}
  \psi_2(k)=(2\floor{K/2})^{-1/2} (\II\braces{k\textnormal{ odd, } k<K}-\II\braces{k\textnormal{ even}}),
\end{equation*}
and set $\tilde{\psi}_1=\psi_1+cn^{-1/2}\psi_2$. Note that for the upper bound $\zeta_0$ small enough we have $(\phi,\psi),(\tilde{\phi},\tilde{\psi})\in\Phi_L$ for $n$ larger than some $C=C(K,c)$, or for all $n\geq 1$ if $c$ is small enough.
Then
\[ \rho(\phi,\psi;\tilde{\phi},\tilde{\psi})= cn^{-1/2}\] and we apply \cref{lem:2-point-testing-bound} to deduce the result.

\paragraph{Proof of \cref{item:mlb:psi2}}
Set $\phi=\tilde{\phi}=(1-3\delta,\epsilon,\zeta)$, choose $\psi_1=\tilde{\psi}_1$ to be the uniform density on $\braces{1,\dots,K}$. As with the previous parts, an application of \cref{lem:2-point-testing-bound} will yield the theorem if we can exhibit $\psi_2,\tilde{\psi}_2$ such that the induced emission densities are bounded below by some $c'=c'(K)>0$, $\norm{\psi_2-\tilde{\psi}_2}=R:=c(n\delta^2\epsilon^2\zeta^4)^{-1/2}$ for some $c>0$, $\sign(\ip{\psi_2,\tilde{\psi}_2})=+1$, and $\rho(\phi,\psi;\tilde{\phi},\tilde{\psi})\leq c_1n^{-1/2}$ for a small  constant $c_1$.
Such a choice is possible if the constants $c$ and $\zeta_0$ are small enough constants, for $n\delta^2\epsilon^2\zeta^4\geq 1$ with $\zeta\leq \zeta_0$; for example, define $\psi_2(k),~k\leq K$ as in \cref{lem:exists-psi} by \[\psi_2(k)=(2\floor{K/2})^{-1/2} (\II\braces{k\textnormal{ odd, } k<K}-\II\braces{k\textnormal{ even}}),\]
and, for $h$ defined by $h(1)=2^{-1/2}$, $h(3)=-2^{-1/2}$ and $h(k)=0$ for all other $k$, define
\[ \tilde{\psi}_2= (\psi_2+\alpha h)/(1+\alpha), \quad \alpha= R/(2-R).\]
This satisfies $\norm{\tilde{\psi}_2-\psi_2}=R$, $\norm{\tilde{\psi}_2}=1$, $\ip{\tilde{\psi}_2,1}=0$ and $\ip{\tilde{\psi}_2,\psi_2}\geq 0$. For $k\not \in\braces{1,3}$ the condition \eqref{eqn:condition-on-psi} of \cref{rem:valid-phis} holds with $1/(2K)$ in place of $0$ on the right, and for $k\in\braces{1,3}$ a direct calculation shows that the condition with $1/(4K)$ in on the right if $R$ is upper bounded by some $c'=c'(K)$, 
which is the case for $c=c(K)$ sufficiently small. Then
\[\rho(\phi,\psi;\tilde{\phi},\tilde{\psi})=\abs{r(\phi)}\norm{\psi_2-\tilde{\psi}_2}\leq \delta\epsilon\zeta^2R\leq cn^{-1/2}.\]

\subsection{Proofs for \cref{sec:main-results}}
\begin{proof}[Proof of \cref{cor:rates-for-theta}]
	We begin with the upper bounds. From the inversion formulae in \cref{rem:invert-param} we have
	\begin{align*}
          \norm{\hat{f}_0-f_0} \vee \norm{\hat{f}_1-f_1}%
          &\leq \norm{\hat{\psi}_1-\psi_1}\\
          &\quad+\tfrac{1}{2}\norm{\hat{\phi}_1\hat{\phi}_3 \hat{\psi}_2-\phi_1\phi_3\psi_2}\\
          &\quad+\tfrac{1}{2}\norm{\hat{\phi}_3\hat{\psi}_2-\phi_3\psi_2}.
	\end{align*}
	Recalling that $\abs{\hat{\phi}_1}\leq 1$, that $0\leq \phi_3\leq K^{1/2}$ and that $\norm{\psi_2}=\norm{\hat{\psi}_2}=1$, we decompose the second term on the right, with an implicit decomposition of the third term included:
	\begin{align*}
          &\norm{\hat{\phi}_1\hat{\phi}_3\hat{\psi}_2-\phi_1\phi_3\psi_2}\\
          &\qquad\qquad \leq \abs{\hat{\phi}_1}\norm{\hat{\phi}_3\hat{\psi}_2-\phi_3\psi_2}+\abs{\phi_3}\abs{\hat{\phi}_1-\phi_1}\\
		&\qquad\qquad \leq \abs{\hat{\phi}_3-\phi_3}+K^{1/2}\norm{\hat{\psi}_2-\psi_2} + \phi_3 \abs{\hat{\phi}_1-\phi_1}.
	\end{align*}
	It follows that for some constant $C$ we have
        \begin{multline*}
          \norm{\hat{f}_0-f_0} \vee\norm{\hat{f}_1-f_1}\\
          \leq C\max(\norm{\hat{\psi}_1-\psi_1},\norm{\hat{\psi}_2-\psi_2},\abs{\hat{\phi}_3-\phi_3},\phi_3\abs{\hat{\phi}_1-\phi_1}).
        \end{multline*}
	Applying \cref{pro:modulus:pointwise} as in the proof of \cref{thm:minimax-upper-bound}, one can show that for some $C>0$
	\[ \PP_{\phi,\psi}\brackets[\Big]{\phi_3^2\abs{\hat{\phi}_1-\phi_1}^2\geq \frac{Cx^2}{n\epsilon^4\zeta^4}} \leq e^{-x^2}.\] The upper bounds for estimating $f_0$ and $f_1$ then follow from \cref{thm:minimax-upper-bound}.

	Similarly, \cref{rem:invert-param}
	and the fact that $\abs{\phi_2}\leq 1$ give
	\begin{align*}
          \abs{\hat{p}-p} \vee \abs{\hat{q}-q}%
          &\leq  \tfrac{1}{2}(1+\abs{\hat{\phi}_1})\abs{\hat{\phi}_2-\phi_2}+\tfrac{1}{2}\abs{\hat{\phi}_1-\phi_1}\abs{1-\phi_2}\\
          &\leq 2(\abs{\hat{\phi}_1-\phi_1} \vee \abs{\hat{\phi}_2-\phi_2}).
        \end{align*}
	The upper bounds then again follow from \cref{thm:minimax-upper-bound}.

	For the lower bounds, writing $\theta(\phi,\psi)=(p,q,f_0,f_1)$ and $\theta(\tilde{\phi},\tilde{\psi})=(\tilde{p},\tilde{q},\tilde{f}_0,\tilde{f}_1)$, observe by \cref{lem:2-point-testing-bound} that it suffices to lower bound $\max(\abs{p-\tilde{p}},\abs{q-\tilde{q}})$ and $\max(\norm{f_0-\tilde{f}_0},\norm{f_1-\tilde{f}_1})$ corresponding to choices of $(\phi,\psi),(\tilde{\phi},\tilde{\psi})$ made in the proof of \cref{thm:mlb}.

	From the inversion formulae in \cref{rem:invert-param} we calculate, for any $\phi,\tilde{\phi}$,
	\begin{multline}
          \label{eqn:max-pq}
          2(\abs{p-\tilde{p}} \vee \abs{q-\tilde{q}})
          \geq\\
          \max( (1+\abs{\phi_1})\abs{\phi_2-\tilde{\phi}_2}
          -\abs{1-\tilde{\phi}_2}\abs{\phi_1-\tilde{\phi}_1},\\
          \abs{\phi_1-\tilde{\phi}_1}\abs{1-\tilde{\phi}_2}-(1-\abs{\phi_1})\abs{\phi_2-\tilde{\phi}_2}).
	\end{multline}
	If $\delta>\epsilon \zeta$ set
	$\phi= (1-3\delta,\epsilon,\zeta(1+S)^{1/2})$ and $\tilde{\phi}=(1-3\delta-R,\epsilon,\zeta)$, where $R=c(n\epsilon^4\zeta^6)^{-1/2}$ for some $c>0$ and where $S\in[R/(12\delta),R/\delta]$ is, as in the proof of \cref{thm:mlb} \cref{item:mlb:phi1}, such that $r(\phi)=r(\tilde{\phi})$.  If $\delta\leq \epsilon \zeta$ instead set
	$\phi= (1-3\delta,\epsilon,\zeta(1+R/\epsilon)^{1/2}),$ $\tilde{\phi}=(1-3\delta,\epsilon+R,\zeta)$ with $R=c(n\epsilon^2\delta^2\zeta^4)^{-1/2}$. In either case the proof of \cref{thm:mlb} demonstrates that for suitable $\psi=\tilde{\psi}$ we have $\KL(p^{(n)}_{\phi,\psi},p^{(n)}_{\tilde{\phi},\tilde{\psi}})\leq 1/100$ for $c$ small enough hence by \cref{lem:2-point-testing-bound}
	\[\inf_{\check{\theta}}\sup_{\theta \in \Theta} \PP_\theta\brackets[\Big]{\abs{\check{p}-p}\vee\abs{\check{q}-q}>c'(\abs{p-\tilde{p}}\vee \abs{q-\tilde{q}})}\geq 1/4.\]
	Inserting from \cref{eqn:max-pq} we conclude the bound in either case.

	For $(f_0,f_1)$, again set $\phi=(1-3\delta,\epsilon,\zeta(1+S)^{1/2})$, $\tilde{\phi}=(1-3\delta-R,\epsilon,\zeta)$ where $R=c\epsilon^{-2}\zeta^{-3}n^{-1/2}$, and choose $\psi=\tilde{\psi}$ by \cref{lem:exists-psi}. As with $p$ and $q$ we deduce that for some $c'>0$ we have
        \begin{multline*}
          \inf_{\check{\theta}}\sup_{\theta\in \Theta} \PP_{\theta}\brackets[\Big]{\norm{\check{f}_0-f_0}\vee\norm{\check{f}_1-f_1}>c'(\norm{f_0-\tilde{f}_0}\vee \norm{f_1-\tilde{f}_1})}\\
          \geq 1/4.
        \end{multline*}
	Using the inversion formulae in \cref{rem:invert-param} and the fact that $\psi=\tilde{\psi}$ and $\norm{\psi_2}=1$, one calculates
	\[ 2(\norm{f_0-\tilde{f}_0}\vee\norm{f_1-\tilde{f}_1}) = \abs{\phi_1\phi_3-\tilde{\phi}_1\tilde{\phi}_3}+\abs{\phi_3-\tilde{\phi}_3}\geq \abs{\phi_3-\tilde{\phi}_3}\]
	For the current choice of $\phi,\tilde{\phi}$, calculating as in proving \cref{thm:mlb} \cref{item:mlb:phi3}, we have $\abs{\phi_3-\tilde{\phi}_3}\geq C\zeta R/\delta$ for some $C>0$ and we deduce the lower bound.
\end{proof}

\begin{proof}[Proof of \cref{cor:learning-rate}]
	It suffices to substitute $\alpha=e^{-x^2}$ into \cref{cor:rates-for-theta} and solve for error equal to $E$, while ensuring that $x^2=\log(1/\alpha)$ is suitably bounded.
\end{proof}

\section*{Acknowledgments}

This work was supported by a public grant as part of the Investissement d'avenir project, reference ANR-11-LABX-0056-LMH, LabEx LMH,  by Institut Universitaire de France, and by the EPSRC Programme Grant on the Mathematics of Deep Learning, under the project: EP/V026259/1.

\bibliographystyle{IEEEtran}
\bibliography{bibliography}



\begin{IEEEbiographynophoto}{Kweku Abraham}
  received his PhD at the University of Cambridge in 2020, where he currently works as a researcher on the EPSRC programme grant Maths4DL, after working at Université Paris-Saclay from 2019-2021 as a Fondation Mathématique Jacques Hadamard postdoctoral fellow. His research interests include guarantees for Bayesian and deep learning methods, and latent variable modelling.
\end{IEEEbiographynophoto}

\begin{IEEEbiographynophoto}{Elisabeth Gassiat}
  took her PhD at the Université Paris-Sud in 1988, and is currently
  Professor in the Mathematics Department at Université Paris-Saclay. Her main
  research interest is in statistical learning, including non-parametric
  statistics, mixture and hidden Markov modeling, bayesian inference, and coding
  theory.
\end{IEEEbiographynophoto}

\begin{IEEEbiographynophoto}{Zacharie Naulet}
  received his PhD in Mathematics and Statistics from the University Paris Dauphine in 2016, and is currently an Assistant Professor in the Mathematics Departement at Université Paris-Saclay. His main research interests are statistical learning, bayesian statistics, network modeling, and mixture modeling.
\end{IEEEbiographynophoto}

\vfill

\end{document}